\documentclass[11pt]{article}
\PassOptionsToPackage{dvipsnames}{xcolor}

\usepackage[letterpaper,margin=1in]{geometry}
\usepackage[T1]{fontenc}
\usepackage[utf8]{inputenc}
\usepackage{lmodern}        %
\usepackage{amsmath,amsthm,amssymb,amsfonts,mathtools}
\usepackage{authblk}
\usepackage[font=small,labelfont=bf,labelsep=period]{caption}
\usepackage[shortlabels]{enumitem}
\usepackage{graphicx}
\graphicspath{{./}}
\usepackage{float}
\usepackage{booktabs, multirow, makecell}
\usepackage{flafter}
\usepackage{needspace}
\usepackage{placeins}
\usepackage[bottom]{footmisc}  %
\usepackage{algorithm}
\usepackage[noend]{algpseudocode}
\providecommand{\LComment}[1]{\Statex \(\triangleright\) \emph{#1}}
\usepackage{tikz}
\usetikzlibrary{shapes.geometric, arrows.meta, positioning, calc, fit, decorations.pathreplacing}
\usepackage{microtype}
\usepackage{xcolor}
\usepackage[normalem]{ulem} %
\usepackage[round,sort]{natbib}
\usepackage{hyperref}
\usepackage{xurl}           %
\usepackage{cleveref}

\definecolor{linkred}{RGB}{192,39,45}
\definecolor{citeblue}{RGB}{47,111,186}
\hypersetup{
  colorlinks=true,
  linkcolor=linkred,
  citecolor=citeblue,
  urlcolor=citeblue,
  pdftitle={MoRE: Scaling mixture of experts with hardware-aware low-rank routing},
  pdfauthor={Honam Wong, Surbhi Goel, Enric Boix-Adsera}
}

\newtheorem{theorem}{Theorem}

\newtheorem{proposition}{Proposition}
\newtheorem{corollary}{Corollary}
\newtheorem{lemma}{Lemma}

\newtheorem{definition}{Definition}

\renewenvironment{abstract}{%
  \par\small
  \begin{center}\textbf{Abstract}\end{center}\vspace{-0.6em}%
  \hspace*{1.5em}\ignorespaces
}{\par\vspace{1.2em}}

\title{MoRE: Scaling mixture of experts with\\hardware-aware low-rank routing}

\renewcommand{\thefootnote}{\fnsymbol{footnote}}
\author[1]{Honam Wong}
\author[1,*]{Surbhi Goel}
\author[2,*]{Enric Boix-Adser\`{a}}
\affil[1]{University of Pennsylvania}
\affil[2]{The Wharton School, University of Pennsylvania}
\date{}

\begin{document}

\addtocontents{toc}{\protect\setcounter{tocdepth}{-1}}

\maketitle
\footnotetext[1]{Equal advising.}
\renewcommand{\thefootnote}{\arabic{footnote}}
\setcounter{footnote}{0}

\begin{abstract}
    Mixture-of-Experts (MoE) layers are central to frontier language models, and recent architectures push toward more and smaller experts. In this regime, the standard linear router becomes a bottleneck: with $M$ experts and hidden dimension $h$, its per-token cost $\Theta(Mh)$ dominates the MoE layer once $M$ is large. We introduce MoRE (\textbf{M}ixture \textbf{o}f \textbf{R}ank-reduced-routed \textbf{E}xperts), which factorizes the router weight matrix at rank $r$ and reduces the routing cost to $O((h + M)r)$. We prove that rank logarithmic in $M$ suffices for routing expressivity when the number of active experts is fixed, and is necessary up to precision factors. We also prove that logarithmic rank preserves load balance in a Gaussian memorization model, and training on a synthetic phonebook task shows that low rank does not hurt memorization. At matched active FLOPs, the factorization allows a factor of $\Theta(h/r)$ more experts. To realize this gain in wall-clock time, we design a fused Triton kernel at inference that avoids expensive memory operations on HBM. Empirically, MoRE improves memorization on the phonebook task and performance on knowledge-intensive Q\&A benchmarks after pretraining, while matching reasoning ability.\footnote{Code available at \url{https://github.com/Matheart/MoRE_code}.}
\end{abstract}

\section{Introduction}
\label{sec:introduction}

Mixture-of-experts (MoE) is now the dominant architectural paradigm for frontier language models~\citep{deepseek2024deepseekv2, deepseek2024deepseekv3, jiang2024mixtral, grok2024, yan2026scalabletrainingmixtureofexpertsmodels}. In these architectures, the standard dense MLP is replaced by a pool of independent experts and a router layer that activates only a small subset of experts per token. Recent advances in model scaling rely heavily on increasing the number of experts in MoE layers~\citep{abnar2025parameters}. This approach expands the total parameter count, allowing models to benefit from neural scaling laws without proportionally increasing active compute \citep{kaplan2020scaling,abnar2025parameters}. In particular, at a fixed active compute budget, MoE layers achieve significantly better \textit{memorization and factual recall} than dense MLPs \citep{jelassi2025mixtureparrotsexpertsimprove}. Recent frontier MoE models are also increasingly operating in the \textit{fine-grained regime} (Appendix~\ref{appendix:frontier_models}), in which the layer is divided into a larger number of smaller experts, each with intermediate width below the hidden dimension~\citep{krajewski2024scalinglawsfinegrainedmixture}.

\begin{figure}[!htbp]
  \centering
  \includegraphics[width=0.7\linewidth]{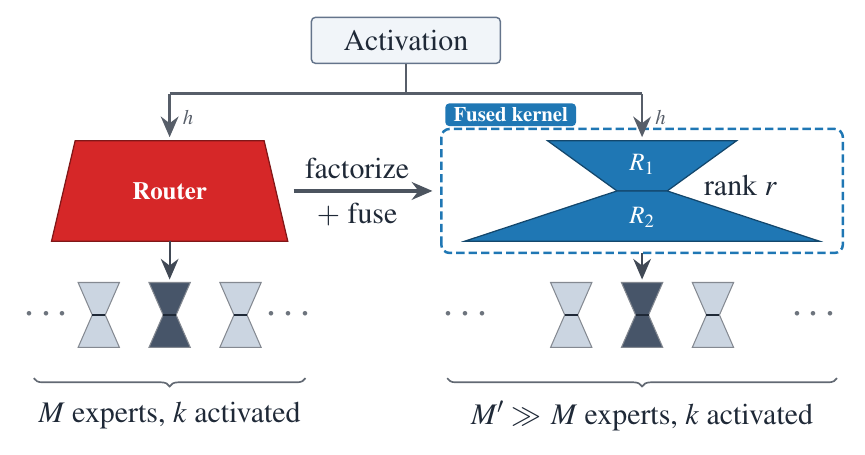}\\[0.5em]
  \begin{minipage}{0.46\linewidth}
    \centering
    \includegraphics[height=0.5\linewidth]{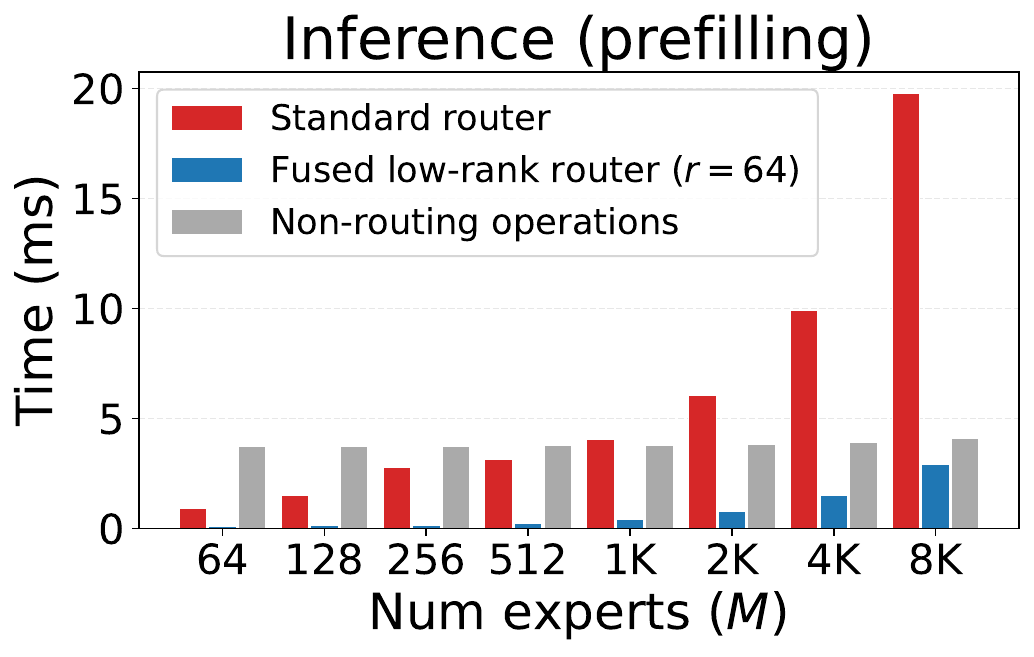}
  \end{minipage}\hspace{0.04\linewidth}%
  \begin{minipage}{0.46\linewidth}
    \centering
    \includegraphics[height=0.5\linewidth]{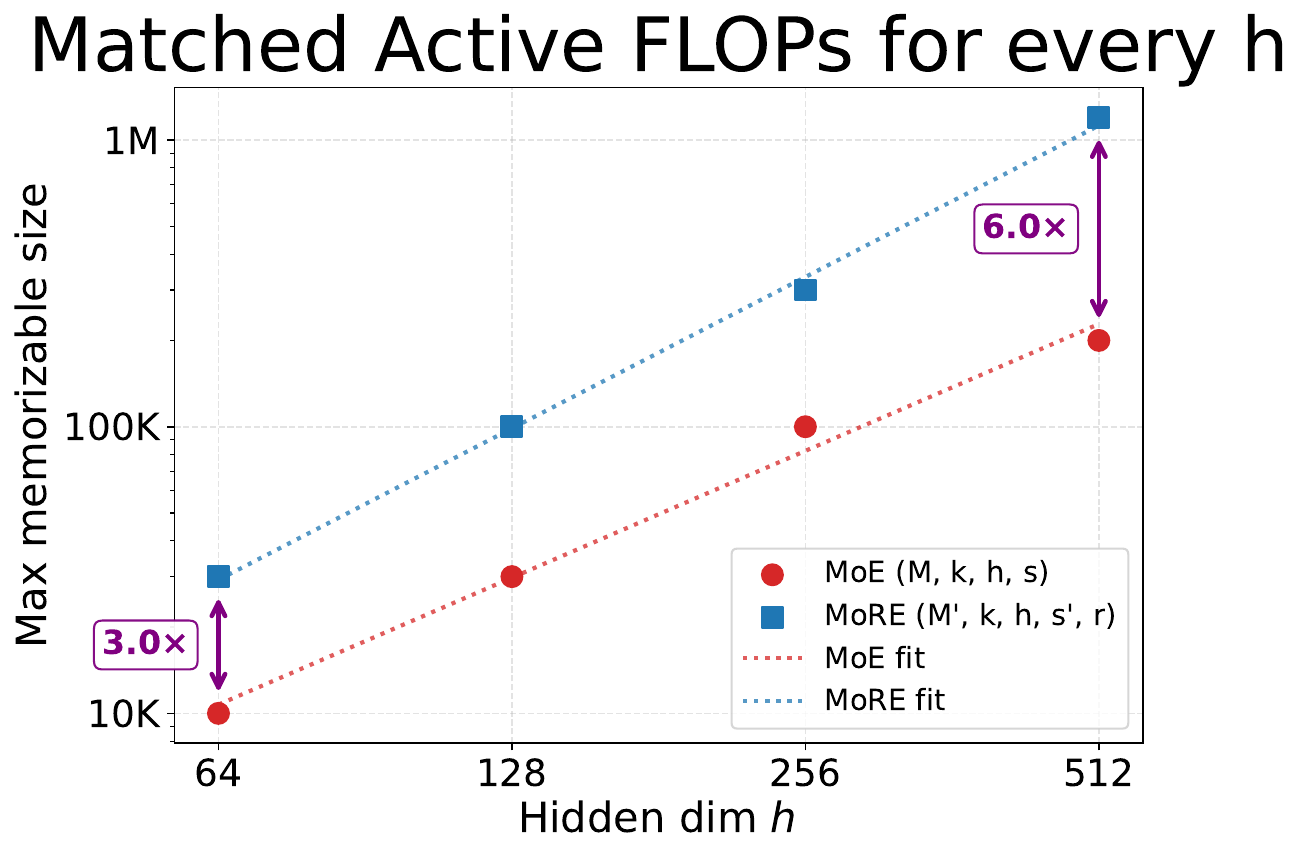}
  \end{minipage}
  \caption{\textbf{MoRE unlocks more scaling with the same active FLOPs.}
  \textbf{(Top)} Replacing the full-rank $h \times M$ router with a low-rank factorization $R_1, R_2$ inside a fused kernel cuts routing cost from $O(hM)$ to $O(r(h+M'))$, enabling $M' \gg M$ experts at matched active FLOPs.
  \textbf{(Bottom Left)} Inference (prefilling) wall-clock time of one layer at $h{=}512$, split into the router and the rest (setup as in Figure~\ref{fig:prefill_heatmap}). The standard router's cost grows linearly with $M$ and dominates at large $M$, while MoRE's fused low-rank router grows far more slowly.
  \textbf{(Bottom Right)} At matched FLOPs, MoRE memorizes more on the synthetic phonebook task, and the gap widens with $h$, as predicted by the $\Theta(h/r)$ scaling of Section~\ref{sec:scaling}.}
  \label{fig:teaser}
\end{figure}

However, further scaling the number of experts in MoE layers hits a bottleneck: routing costs begin to dominate the expert cost as the expert pool grows. This is driven by two distinct factors: the increased active compute required to calculate the routing function, and the heavy memory-bandwidth cost of writing and reading the routing scores. These combined costs can offset the efficiency gains of sparsity in the many-expert regime (see Figure~\ref{fig:teaser}). This motivates our main question:

\begin{center}
    \textit{Can we scale the number of experts in MoEs
    efficiently?}
\end{center}

\subsection{Our contributions}
\noindent In this work, we propose \textbf{M}ixture \textbf{o}f \textbf{R}ank-reduced-routed \textbf{E}xperts (MoRE), an architectural design that allows for scaling to more experts with an asymptotically lower computational cost. \textbf{MoRE replaces the full-rank linear router with a low-rank router}. This factorization reduces the number of parameters in the routing function and the FLOPs required for a forward pass, allowing scaling to more experts. Our router design is inspired by the low-rank parameterization of key-query (KQ) weights in recent attention architectures \citep{deepseek2024deepseekv2} as well as the low-rank design of LoRA adapters \citep{hu2021loralowrankadaptationlarge}.

\medskip\noindent\textbf{Expressivity guarantees.} To determine whether the rank bottleneck limits MoRE's routing capacity, we study which expert assignments it can realize and how evenly it can use the experts. In Section~\ref{sec:theory}, we first prove general upper and lower bounds on the rank needed to realize any top-$k$ assignment. Then, we prove that \textbf{low-rank routing in MoRE layers preserves load-balancing ability}, by constructing a router with $r = \Theta(\log M)$ that balances expert assignment in a Gaussian memorization model extending that of \citet{jelassi2025mixtureparrotsexpertsimprove}. By spreading inputs across experts, balanced routing helps use the memorization capacity of the whole expert pool. We also show empirically on a synthetic memorization task, Phonebook, that low rank does not hurt load balancing and memorization (Section~\ref{subsec:expert-utilization-at-training-time}).

\medskip\noindent\textbf{Scaling experts efficiently.} At matched active FLOPs, a MoRE layer with rank-$r$ routing and hidden dimension $h$ supports a factor of $\Theta(h/r)$ more experts than an MoE layer with the standard router (Section~\ref{sec:scaling}). However, realizing this gain in wall-clock time requires more than a FLOP reduction: materializing the score tensor in HBM (High Bandwidth Memory) becomes expensive when $M$ is large. In this paper, we focus on inference, since a model is trained once but runs inference on every request after deployment and during RL post-training.  We therefore design a fused kernel that keeps scores in on-chip SRAM (Section~\ref{sec:efficiency}). The kernel computes scores in blocks and retains only the running top-$k$ values and expert indices. Notably, the \textbf{inference time} (prefilling) of MoRE matches that of MoE at matched active FLOPs, with a much larger number of experts. %

\medskip\noindent\textbf{Experimental validation.} Finally, we validate our architecture through pretraining experiments on both the phonebook task and real-world Internet data. We show that \textbf{MoRE models consistently improve performance in memorization for phonebook task and improve on knowledge-intensive} tasks such as TriviaQA \citep{joshi2017triviaqalargescaledistantly} and Natural Questions \citep{kwiatkowski2019natural}, compared with standard MoEs at equal active FLOPs. Additionally, we find that MoRE matches performance on reasoning tasks.

\FloatBarrier

\subsection{Related Work}

\noindent\textbf{Routing in MoE layers.}
The standard router in modern MoE is a learned linear layer that scores all $M$ experts per token~\citep{shazeer2017outrageously, lepikhin2020gshard, fedus2022switch}. \emph{Hash routing} fixes the assignment to a deterministic hash~\citep{roller2021hashlayerslargesparse}, but \citet{clark2022unified, dikkala2023on} show that deterministic assignment underperforms learned routing at scale, motivating the search for \textit{parameter-efficient} learned routers rather than discarding learning. \citet{boix2025secret} suggests low-rank routing but evaluates it only in a limited distillation setting. In the $M \ll h$ regime, X-MoE~\citep{chi2022representationcollapse} and L2R~\citep{yang2026l2rlowranklipschitzcontrolledrouting} also introduce low-rank projections, but to address issues in token representations and routing geometry. MoRE instead targets the fine-grained regime where $M$ is large and $M \ll h$ no longer holds, using low-rank routing to reduce the cost of scoring many experts. \citet{he2024mixturemillionexperts} introduces \emph{PEER}, a product-key retrieval layer with cost $O(h\sqrt{M})$. At matched active FLOPs, MoRE takes much less wall-clock time than PEER (see Appendix~\ref{appendix:additional_related_work}).

\medskip\noindent\textbf{Fine-grained MoE and scaling.}
Recent architectures are increasingly designed in the \emph{fine-grained} regime, pairing a small expert intermediate dimension $s$ with a large expert count $M$. \citet{krajewski2024scalinglawsfinegrainedmixture} introduce fine-grained MoE and show that it outperforms coarse-grained MoE at a fixed training budget, and that higher granularity is out of reach because of the routing cost. \citet{li2026slicingdicingconfiguringoptimal} confirm through extensive experiments that larger granularity and expert counts result in better performance. Since MoE memorization capacity scales with \emph{total} rather than active parameters~\citep{jelassi2025mixtureparrotsexpertsimprove}, designs with large $M$ are directly motivated. Nevertheless, these works do not address the routing bottleneck that emerges as $M$ grows, and we introduce MoRE to reduce the routing cost.

\medskip\noindent\textbf{Knowledge storage in language models.}
Two threads study how language models store factual knowledge. The first measures memorization in standard transformer and MoE architectures: \citet{geva2021transformer} interpret FFN layers as key-value memories, \citet{allenzhu2024physicslanguagemodels33} establish knowledge capacity scaling laws for dense LMs, and \citet{jelassi2025mixtureparrotsexpertsimprove} show that for MoE, memorization capacity scales with \emph{total} parameters rather than active parameters, directly motivating designs with large $M$. A second thread replaces the FFN with a learnable key-value lookup: Memory Layers at Scale~\citep{berges2024memorylayersscale} scales product-key retrieval to billions of stored embeddings, Ultra-Sparse Memory~\citep{huang2025ultrasparsememorynetwork} pushes to ultra-sparse activation patterns, and \citet{cheng2026conditionalmemoryscalablelookup} extend retrieval-style storage to conditional access. MoRE is complementary to this thread: it preserves the MLP-expert structure of MoE and inherits the memorization analysis of \citet{jelassi2025mixtureparrotsexpertsimprove} while reducing only the routing cost.

\medskip\noindent\textbf{Efficient MoE systems.}
Existing fused MoE kernels target \emph{expert dispatch}: ScatterMoE~\citep{tan2024scatteredmixtureofexpertsimplementation}, MegaBlocks~\citep{gale2023megablocks}, and follow-up IO-aware kernels~\citep{costin2025momoe, guo2025sonicmoeacceleratingmoeio} fuse scatter-gather with the per-expert grouped matmul, but leave open the question of how to implement the router efficiently in the many-expert regime. vLLM~\citep{kwon2023vllm} provides a \texttt{FusedTopKRouter} that fuses top-$k$ selection with softmax normalization, but does not fuse the router computation with top-$k$. At inference, expert offloading~\citep{hwang2024pregatedmoe, xue2024moeinfinity, he2024expertflow} reduces serving cost by moving inactive experts to CPU memory, but does not reduce the router computation. MoRE instead fuses both low-rank matmuls and top-$k$ selection into one router kernel (Section~\ref{subsec:fused-kernel}).

\section{Preliminaries}
\label{sec:preliminaries}

\noindent\textbf{Notation.} We write $h$ for the hidden dimension, $s$ for the expert intermediate dimension, $M$ for the number of experts per layer, $k$ for the number of active experts per token (with $k < M$), $r$ for the rank of the low-rank router, and $L$ for the number of transformer layers. With batch size $B$ and sequence length $T$, each iteration processes $N = B T$ tokens. We write $[m] = \{1, \ldots, m\}$, and use $P$ for the total parameter count and $\tau$ for the per-token active FLOPs.

\medskip\noindent\textbf{Mixture-of-experts layer.} An MoE layer has $M$ experts $\{f_i\}_{i=1}^{M}$ and a router $g : \mathbb{R}^h \to \mathbb{R}^M$. Each expert is usually a SwiGLU MLP~\citep{shazeer2020glu} with three weight matrices $W^{\mathrm{gate}}_i, W^{\mathrm{up}}_i \in \mathbb{R}^{s \times h}$ and $W^{\mathrm{down}}_i \in \mathbb{R}^{h \times s}$:
\[
  f_i(x) \;=\; W^{\mathrm{down}}_i \bigl( \sigma(W^{\mathrm{gate}}_i x) \odot W^{\mathrm{up}}_i x \bigr), \qquad f_{\mathrm{MoE}}(x) \;=\; \sum_{i=1}^{M} g_i(x) f_i(x),
\]
with $\sigma$ the SiLU activation and $\odot$ the elementwise product. Each expert thus has $3hs$ parameters. The \textbf{standard router}~\citep{shazeer2017outrageously, fedus2022switch} is parametrized by a linear projection $R \in \mathbb{R}^{M \times h}$ and applies softmax over the $k$ largest scores: $g(x; R) = \mathrm{softmax}\bigl(\mathrm{TopK}(R x)\bigr)$, where $\mathrm{TopK}(x)$ denotes the $k$ largest values of $x \in \mathbb{R}^M$. We write its configuration as $\mathrm{MoE}(M, k, h, s)$. 

\medskip\noindent\textbf{MoRE layer.} In a MoRE layer, we replace $R$ with $R_1 R_2$ where $R_1 \in \mathbb{R}^{M \times r}$ and $R_2 \in \mathbb{R}^{r \times h}$, scoring through an $r$-dimensional bottleneck: 
$$g(x; R_1, R_2) = \mathrm{softmax}\bigl(\mathrm{TopK}(R_1 R_2 x)\bigr).$$ Routing parameters drop from $M h$ to $(M+h) r$. We write the low-rank configuration as $\mathrm{MoRE}(M, k, h, s, r)$.

\medskip\noindent\textbf{Total parameters and active cost.} The total MLP parameter count per
MoE/MoRE layer is $3Mhs$, and the per-token active FLOPs per layer are
\begin{equation}
  \tau \;=\;
  \begin{cases}
    C_e\, k h s + C_r\, M h & \text{(standard router)},\\
    C_e\, k h s + C_r\, (M+h)\,r & \text{(low-rank router)},
  \end{cases}
\end{equation}
where $C_e, C_r$ collect per-operation FLOP factors. Following
\citet[App.~E]{krajewski2024scalinglawsfinegrainedmixture}, at inference we have $(C_e, C_r) = (6, 6)$. Training gives a similar ratio, $(C_e, C_r) = (18, 14)$.

\section{Low-rank routing preserves memorization capabilities}
\label{sec:theory}

MoRE replaces the standard router with a rank-$r$ factorization, which raises two related theoretical questions. First, can the router still express the needed expert assignments through the rank bottleneck? Second, when there are many inputs to memorize, can the low-rank router still use the experts' parameters efficiently by balancing the inputs assigned to them? We answer both questions positively in Sections~\ref{subsec:routing-expressivity} and~\ref{subsec:memorization-implications}, and confirm the answers with experiments on the synthetic Phonebook memorization task.

\subsection{Routing expressivity}
\label{subsec:routing-expressivity}

Here an assignment is a subset of $k$ experts. We show that, for fixed $k$, a \emph{single} router can select every assignment in a given family for some input, with rank growing only logarithmically in the number of experts and assignments. This dependence on the number of assignments is necessary up to a factor depending on numerical precision.

\medskip\noindent\textbf{Setup.} The rank-$r$ router has a simple geometric form: the rows of $R_1 \in \mathbb{R}^{M \times r}$ define the $M$ expert vectors $u_i \in \mathbb{R}^r$, and each input $x \in \mathbb{R}^h$ reaches $R_1$ only through its projected representation $z(x) := R_2 x \in \mathbb{R}^r$ where $R_2 \in \mathbb{R}^{r \times h}$. The router then selects the $k$ experts with the largest routing scores $\langle u_i, z(x) \rangle$, $1 \leq i \leq M$. For exposition, throughout this section we normalize the scale by assuming $\|u_i\|_2 \leq 1$ for every expert vector and $\|z(x)\|_2 \leq 1$ for every input $x$ under consideration.\footnote{We state the general result in Appendix~\ref{appendix:theoretical_results}, which only requires them to be bounded by some constants $B_u$ and $B_z$.}

\medskip\noindent\textit{Margin and precision.} Because top-$k$ is implemented at finite precision, selecting a subset robustly requires a positive gap between selected and unselected scores. We formalize this as follows.
\begin{definition}
\label{def:realized-assignment}
Fix a rank-$r$ router $R = R_1 R_2$. A subset $S \in \binom{[M]}{k}$ is \emph{realized at input $x$ with margin} $\eta > 0$ if $S = \mathrm{TopK}(Rx)$ and
\[
\min_{i \in S} (Rx)_i - \max_{j \notin S} (Rx)_j \geq \eta.
\]
Moreover, we say a family $\mathcal{F} \subseteq \binom{[M]}{k}$ is \emph{realizable at margin $\eta$} by $R$ if every $S \in \mathcal{F}$ has some input $x_S \in \mathbb{R}^h$ that realizes it at margin at least $\eta$.
\end{definition}
Note that for a $p$-bit implementation, a margin $\eta$ is resolvable when $\eta = \Omega(2^{-p})$, which is equivalent to $p = \Omega(\log (1/\eta))$. We first present an upper bound based on the Johnson--Lindenstrauss lemma~\citep{johnson1984extensions} by providing a router construction:

\begin{theorem}[Upper bound on required rank]
\label{thm:upper}
There exists a universal constant $C > 0$ such that the following holds. For $1 \leq k \leq M-1$, any collection of top-$k$ assignments $\mathcal{F} \subseteq \binom{[M]}{k}$, and any rank
\[
r \geq C  k \log(M + |\mathcal{F}|),\qquad h \geq r,\]
there exists a rank-$r$ router $R = R_1 R_2$ with $\|u_i\|_2 \leq 1$ that realizes every $S \in \mathcal{F}$ at margin $\Omega(1/\sqrt{k})$, using inputs $\{x_S\}_{S \in \mathcal{F}}$ with $\|R_2 x_S\|_2 \leq 1$.
\end{theorem}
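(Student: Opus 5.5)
Choose the expert vectors $u_1,\dots,u_M$ to be nearly orthogonal unit vectors in $\mathbb{R}^r$. For each $S\in\mathcal{F}$, take the target representation
\[
z_S := \frac{1}{\sqrt{k}}\sum_{i\in S} u_i .
\]
Then take $R_2\in\mathbb{R}^{r\times h}$ to be the projection onto the first $r$ coordinates; this is possible since $h\ge r$. For each $S$ we set $x_S := (z_S,0,\dots,0)\in\mathbb{R}^h$, so that $R_2x_S=z_S$. The whole proof then reduces to choosing $u_i$ with two properties. First, every selected expert $i\in S$ must score about $1/\sqrt{k}$. Second, every unselected expert $j\notin S$ must score noticeably less, roughly at most $1/(2\sqrt{k})$. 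We also need $\|z_S\|_2\le 1$ after a harmless rescaling.

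We take the $u_i$ to be i.i.d.\ random vectors with entries $\pm 1/\sqrt{r}$, or equivalently normalized Gaussians, and apply the Johnson--Lindenstrauss concentration bound in inner-product form. Since $\|u_i\|_2=1$ exactly for Rademacher vectors, for any fixed pair $i\neq j$ the inner product $\langle u_i,u_j\rangle$ is subgaussian with variance proxy $1/r$. Hence
\[
\Pr\bigl[|\langle u_i,u_j\rangle|>\epsilon\bigr]\le 2e^{-r\epsilon^2/2}.
\]
We set $\epsilon = c/k$ and union bound over all $\binom{M}{2}$ pairs. The failure probability is below $1$ once $r\ge C k^2\log M$. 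That is a factor of $k$ worse than the statement, so the pairwise approach is too lossy.

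To get linear dependence on $k$, we instead control directly the aggregate quantities that matter. For each $S\in\mathcal{F}$ and each $j\in[M]$, let
\[
s_{S,j} := \Bigl\langle u_j,\ \sum_{i\in S\setminus\{j\}} u_i\Bigr\rangle .
\]
Fix $S$ and $j$, and condition on $u_j$. The vector $\sum_{i\in S\setminus\{j\}}u_i$ is a sum of at most $k$ independent random vectors, so $s_{S,j}$ is subgaussian with variance proxy at most $k/r$. This gives
\[
\Pr\bigl[|s_{S,j}|>t\bigr]\le 2e^{-rt^2/(2k)}.
\]
We take $t=1/4$ and union bound over the $|\mathcal{F}|\cdot M\le (M+|\mathcal{F}|)^2$ pairs $(S,j)$. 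This requires only $r\ge Ck\log(M+|\mathcal{F}|)$, exactly the hypothesis. On this event, every $i\in S$ has score
\[
\langle u_i,z_S\rangle=\frac{1+s_{S,i}}{\sqrt{k}}\ge \frac{3}{4\sqrt{k}},
\]
and every $j\notin S$ has score
\[
\langle u_j,z_S\rangle=\frac{s_{S,j}}{\sqrt{k}}\le \frac{1}{4\sqrt{k}}.
\]
So $S=\mathrm{TopK}(Rx_S)$ with margin at least $1/(2\sqrt{k})$.

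It remains to handle the norm constraint. We have
\[
\|z_S\|_2^2=\frac{1}{k}\Bigl(k+\sum_{i\in S}s_{S,i}\Bigr)\le \frac{5}{4}.
\]
Rescaling $z_S$ (and hence $x_S$) by $2/\sqrt{5}$ gives $\|R_2x_S\|_2\le 1$. This only shrinks the margin by a constant, so it remains $\Omega(1/\sqrt{k})$, and $\|u_i\|_2=1$ holds throughout.

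The main obstacle is the step that replaces the pairwise near-orthogonality argument by concentration of the aggregate inner product $s_{S,j}$. Two details need care there: conditioning on $u_j$ when $j\in S$ versus $j\notin S$, and using Hoeffding's inequality for sums of products of Rademacher variables, where the $r$ coordinates times at most $k$ terms give variance proxy $k/r$. I would verify this carefully via the bound
\[
\langle u_j, v\rangle=\sum_{\ell}(u_j)_\ell v_\ell,
\]
conditioning on $v=\sum_{i\in S\setminus\{j\}} u_i$, which has $\|v\|_2^2\approx k$ with high probability. This needs one further concentration event for $\|v\|_2$, also affordable within the same union bound.
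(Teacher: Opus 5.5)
Your proof is correct. At its core it uses the same idea as the paper, a random projection of the orthonormal-basis construction, but you carry it out directly instead of invoking Johnson--Lindenstrauss as a black box.

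The paper works in $\mathbb{R}^M$ with experts $e_i$ and centered targets $z_S=\rho^{-1}(\mathbf 1_S/k-\mathbf 1_{[M]\setminus S}/(M-k))$, which have margin $\rho>1/\sqrt k$. It then applies a JL map $\Phi$ that preserves all pairwise inner products among the $M+|\mathcal F|$ vectors to additive error $\varepsilon=c_0(\rho-\eta)$. Finally it renormalizes the images and embeds them with $R_2=[I_r\mid 0]$.

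Your construction is the same picture written differently. Your sign vectors are $u_i=\Phi e_i$ for a random sign matrix $\Phi$, and your $z_S=\frac{1}{\sqrt k}\sum_{i\in S}u_i=\Phi(\mathbf 1_S/\sqrt k)$ is the projection of the uncentered target. In place of JL, you prove by Hoeffding exactly the $M|\mathcal F|$ score estimates you need.

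What your route buys:
\begin{itemize}
\item $\|u_i\|_2=1$ holds exactly, so you skip the paper's renormalization step and its multiplicative-to-additive conversion, which is where the paper absorbs constants most loosely.
\item Your constants are explicit.
\item It shows clearly why the rank is linear rather than quadratic in $k$. The error $s_{S,j}$ is a sum of at most $k$ independent terms, so it concentrates at scale $\sqrt{k/r}$. Worst-case pairwise bounds only give $k/\sqrt r$ (up to log factors).
\end{itemize}
The paper's route buys modularity and a stated margin--rank tradeoff, $r\gtrsim\log(M+|\mathcal F|)/(\rho-\eta)^2$ for every target $\eta<\rho$. Your argument gives the analogous tradeoff by varying $t$.

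One simplification: the extra concentration event for $\|v\|_2$ in your last paragraph is unnecessary. Conditioning on $u_j$, as in your main step, already suffices. For any fixed unit vector $u_j$, each $\langle u_j,u_i\rangle$ is a Rademacher sum with variance proxy $1/r$. The at most $k$ such terms are independent given $u_j$, which yields $\Pr[|s_{S,j}|>t]\le 2e^{-rt^2/(2k)}$ with no further events.
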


\noindent\textit{Proof idea.} In the uncompressed case $r = M$, take the expert vectors to be the standard basis vectors $e_1,\ldots,e_M \in \mathbb{R}^M$. The representation $z_S \propto \tfrac{\mathbf{1}_S}{k} - \tfrac{\mathbf{1}_{[M]\setminus S}}{M-k}$ is positive exactly on $S$, so it routes to $S$ with margin $\Theta(1/\sqrt{k})$. The Johnson--Lindenstrauss lemma maps these $M+|\mathcal{F}|$ vectors to $r = \Theta(\log(M+|\mathcal{F}|) / \varepsilon^2)$ dimensions while preserving inner products up to additive error $\varepsilon$, and $\varepsilon = \Theta(1/\sqrt{k})$ keeps a constant fraction of the margin, which gives $r = O(k \log(M+|\mathcal{F}|))$. The full proof is in Appendix~\ref{appendix:theoretical_results}.

Next, we use a volume-packing argument to prove a lower bound on the rank. The intuition is that if two different assignments $S,T \in \mathcal{F}$ are both realized with margin $\eta$, their projected representations must be separated.

\begin{theorem}[Lower bound on required rank]
\label{thm:lower}
Let $\mathcal{F} \subseteq \binom{[M]}{k}$. If a rank-$r$ router $R = R_1 R_2$ with $\|u_i\|_2 \leq 1$ and $\|R_2 x_S\|_2 \leq 1$ realizes every top-$k$ assignment $S \in \mathcal{F}$ at margin at least $\eta$, then
\[
|\mathcal{F}| \leq (1 + 2/\eta)^r,
\qquad
r \geq \frac{\log |\mathcal{F}|}{\log(1 + 2/\eta)}.
\]
\end{theorem}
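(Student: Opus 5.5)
The plan is a volume-packing argument on the projected representations $z_S := R_2 x_S \in \mathbb{R}^r$, $S \in \mathcal{F}$. By hypothesis all of them lie in the closed unit ball $B \subset \mathbb{R}^r$. The core step is to show that they are pairwise $\eta$-separated in $\ell_2$:
\[
\|z_S - z_T\|_2 \geq \eta \quad \text{for all distinct } S, T \in \mathcal{F}.
\]
Once this is in hand, the balls of radius $\eta/2$ centered at the $z_S$ are pairwise disjoint. Each of them lies inside the ball of radius $1 + \eta/2$ about the origin. Comparing volumes then gives
\[
|\mathcal{F}| \,(\eta/2)^r \leq (1+\eta/2)^r,
\]
that is, $|\mathcal{F}| \leq (1 + 2/\eta)^r$. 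Taking logarithms yields the stated bound on $r$.

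To prove the separation, fix distinct $S \neq T$ in $\mathcal{F}$. Both have size $k$, so there exist $i \in S \setminus T$ and $j \in T \setminus S$. Consider the score-difference functional $w := u_i - u_j$.

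At input $x_S$, index $i$ is selected and $j$ is not. The margin condition in Definition~\ref{def:realized-assignment} therefore gives
\[
\langle u_i - u_j, z_S\rangle \geq \eta.
\]
Symmetrically, at $x_T$ we get
\[
\langle u_j - u_i, z_T\rangle \geq \eta.
\]
Adding the two inequalities gives $\langle w, z_S - z_T\rangle \geq 2\eta$. Cauchy--Schwarz, together with $\|w\|_2 \leq \|u_i\|_2 + \|u_j\|_2 \leq 2$, then yields $\|z_S - z_T\|_2 \geq \eta$, as required.

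The main point needing care is getting the constants to match $(1+2/\eta)^r$ exactly. This requires using the pair $(i,j)$ symmetrically so that the margins add to $2\eta$; using only one of the two inequalities would lose a factor of two. Beyond that, only the standard volume comparison of Euclidean balls in $\mathbb{R}^r$ is needed. Note also that the $z_S$ must be distinct for the packing to make sense, but this follows from the separation itself, since $\eta > 0$.
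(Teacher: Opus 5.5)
Your proposal is correct and follows essentially the same argument as the paper. Both choose $i \in S\setminus T$ and $j \in T\setminus S$, add the two margin inequalities, and apply Cauchy--Schwarz with $\|u_i-u_j\|_2\le 2$ to get $\|z_S-z_T\|_2\ge\eta$. Both then pack disjoint balls of radius $\eta/2$ inside the ball of radius $1+\eta/2$ to obtain $|\mathcal{F}|\le(1+2/\eta)^r$.
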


\noindent\textit{Proof idea.} If two assignments $S \neq T$ are realized at $x_S$ and $x_T$, pick $i \in S\setminus T$ and $j \in T\setminus S$. Expert $i$ beats expert $j$ by at least $\eta$ at $x_S$ and loses by at least $\eta$ at $x_T$, so $\|u_i-u_j\|_2 \leq 2$ forces $\|R_2 x_S - R_2 x_T\|_2 \geq \eta$. Packing $|\mathcal{F}|$ such points into the unit ball of $\mathbb{R}^r$ gives $|\mathcal{F}| \leq (1+2/\eta)^r$.

Finally, taking $\mathcal{F}$ to be the collection of all top-$k$ subsets gives the following consequence:
\begin{corollary}[Full expressivity]
\label{cor:full-expressivity}
At precision $p = \Omega(\log k)$, rank $r = O(k^2 \log(M/k))$ suffices to realize every top-$k$ assignment over $M$ experts. Conversely, any rank-$r$ router that realizes every top-$k$ assignment at precision scale $p$ must have $r = \Omega(k \log(M/k)/p)$.
\end{corollary}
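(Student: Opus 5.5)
Both directions follow by specializing Theorems~\ref{thm:upper} and~\ref{thm:lower} to the full family $\mathcal{F} = \binom{[M]}{k}$ and estimating its size with the standard binomial bounds
\[
\Bigl(\tfrac{M}{k}\Bigr)^k \;\leq\; \binom{M}{k} \;\leq\; \Bigl(\tfrac{eM}{k}\Bigr)^k .
\]
Hence $\log|\mathcal{F}| = \Theta(k\log(M/k))$ whenever $M/k$ is bounded away from $1$. For $M \le 2k$ we instead use $\log\binom{M}{k} \ge \log M$ for $1\le k\le M-1$. We also read ``precision $p$'' as the statement that margin $\eta$ is resolvable iff $\eta = \Omega(2^{-p})$, as in the discussion preceding Theorem~\ref{thm:upper}.

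\textbf{Upper bound.} Apply Theorem~\ref{thm:upper} with $\mathcal{F}=\binom{[M]}{k}$. Then $\log(M+|\mathcal{F}|) \le \log(2|\mathcal{F}|) = O(k\log(eM/k))$, since $M \le |\mathcal{F}|$ for $1\le k\le M-1$. So the rank $r = C k \log(M+|\mathcal{F}|) = O(k^2\log(M/k))$ suffices; in the edge regime $M/k = O(1)$ we interpret $\log(M/k)$ as $\max\{1,\log(M/k)\}$. The resulting margin is $\Omega(1/\sqrt{k})$. Resolving it requires $2^{-p} \lesssim 1/\sqrt{k}$, i.e.\ $p \geq \tfrac12\log_2 k + O(1)$, which is $p = \Omega(\log k)$.

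\textbf{Lower bound.} At precision $p$, any realized margin satisfies $\eta \gtrsim 2^{-p}$, so $\log(1+2/\eta) \le \log(1 + c\,2^{p}) = O(p)$ for $p\ge 1$. Theorem~\ref{thm:lower} then gives
\[
r \;\ge\; \frac{\log\binom{M}{k}}{O(p)} \;\ge\; \frac{k\log(M/k)}{O(p)} \;=\; \Omega\!\bigl(k\log(M/k)/p\bigr).
\]

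\textbf{Main obstacle.} There is no substantive difficulty here. The only care needed is bookkeeping: making precise the convention linking precision $p$ to margin $\eta$, ensuring the normalization $\|u_i\|,\|R_2x_S\|\le 1$ matches between the two theorems, and handling small $M/k$, where $\log(M/k)$ degenerates and should be replaced by $\max\{1,\log(M/k)\}$.
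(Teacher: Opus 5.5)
Your proposal is correct and matches the paper's proof: specialize Theorems~\ref{thm:upper} and~\ref{thm:lower} to $\mathcal{F}=\binom{[M]}{k}$, use $(M/k)^k\le|\mathcal{F}|\le(eM/k)^k$ together with $M\le|\mathcal{F}|$ to bound $\log(M+|\mathcal{F}|)$, and convert precision to margin via $\eta=\Omega(2^{-p})$, so that $\log(1+2/\eta)=O(p)$. Your use of $\max\{1,\log(M/k)\}$ when $M/k$ is close to $1$ is slightly more careful than the paper, which asserts $\log|\mathcal{F}|=\Theta(k\log(M/k))$ uniformly even though this fails for, e.g., $k=M-1$.
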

\textit{Remark.} The extra factor of $k$ comes from preserving the routing margin in the Johnson--Lindenstrauss projection. Our bounds do not establish whether this factor is necessary.

\subsection{Load balancing and memorization capacity}\label{subsec:memorization-implications}
Memorization, a core function of MoE layers \citep{du2022glam,geva2021transformer,jelassi2025mixtureparrotsexpertsimprove}, concerns how many facts can be stored, not which expert configurations can be addressed. Extending the Gaussian model of \citet{jelassi2025mixtureparrotsexpertsimprove}, we prove that logarithmic rank does not affect load balancing, and then explain how load balancing relates to memorization capacity.

\begin{proposition}[Gaussian load balancing with logarithmic rank]
\label{prop:gaussian-load-balancing}
Let $x_1,\ldots,x_n \sim \mathcal{N}(0,I_h)$ be independent, and suppose $M=2^r$ ($r = \Theta(\log_2 M)$) with $h \geq r$. For any $1 \leq k \leq M$, there exists a rank-$r$ top-$k$ router with $M$ experts that selects every expert with probability exactly $k/M$ on each example. Consequently, let $N_a$ be the number of examples routed to expert $a$, and write $L := \log\frac{2M}{\delta}$. For any $\delta \in (0,1)$, the following holds with probability at least $1-\delta$:
\[
\max_{a \in [M]} \left|N_a - \frac{nk}{M}\right| \leq \sqrt{\frac{2nkL}{M}} \;+\; \frac{2}{3}L .
\]
\end{proposition}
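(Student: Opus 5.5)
The plan is to build an explicit router whose expert vectors are the vertices of the hypercube, and to get equal selection probabilities from a symmetry argument rather than from any computation. Concretely, take $R_2 \in \mathbb{R}^{r \times h}$ to be the coordinate projection onto the first $r$ coordinates (possible since $h \geq r$). Take the rows of $R_1$ to be $u_a = \epsilon^{(a)}/\sqrt{r}$, where $\epsilon^{(a)}$ ranges over all $2^r = M$ sign vectors in $\{\pm 1\}^r$; this also respects $\|u_a\|_2 \leq 1$. Then $z_i := R_2 x_i \sim \mathcal{N}(0, I_r)$ independently, and example $i$ is routed to the $k$ experts maximizing $\langle \epsilon^{(a)}, z_i\rangle$.

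\textbf{Step 1: the top-$k$ set is almost surely well defined.} For $a \neq b$ the vector $u_a - u_b$ is nonzero, so the tie event $\{\langle u_a - u_b, z\rangle = 0\}$ is a hyperplane and has Gaussian measure zero. A finite union bound over pairs shows that ties occur with probability zero, so $\mathrm{TopK}(R_1 z)$ is a.s.\ a unique $k$-subset.

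\textbf{Step 2: each expert is selected with probability exactly $k/M$.} For each $\sigma \in \{\pm1\}^r$, let $D_\sigma = \mathrm{diag}(\sigma)$. The map $z \mapsto D_\sigma z$ preserves $\mathcal{N}(0,I_r)$, and it permutes the expert set via $\epsilon^{(a)} \mapsto D_\sigma \epsilon^{(a)}$. Since $\langle D_\sigma \epsilon, D_\sigma z\rangle = \langle \epsilon, z\rangle$, we get $a \in \mathrm{TopK}(z)$ iff $\pi_\sigma(a) \in \mathrm{TopK}(D_\sigma z)$. Hence
\[
\Pr[a \text{ selected}] = \Pr[\pi_\sigma(a) \text{ selected}].
\]
This group acts transitively on the hypercube vertices, so all $M$ selection probabilities are equal. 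By Step 1 exactly $k$ experts are selected a.s., so the probabilities sum to $k$, and each equals $k/M$. This is the only conceptual step; the point to be careful about is that the symmetry must map the Gaussian law and the expert set to themselves simultaneously, which diagonal sign flips do.

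\textbf{Step 3: concentration.} Because the $x_i$ are independent and routing is deterministic given $x_i$, each $N_a$ is a sum of $n$ i.i.d.\ Bernoulli$(p)$ variables with $p = k/M$. Its variance is $v = np(1-p) \leq nk/M$. Bernstein's inequality gives
\[
\Pr\bigl[|N_a - np| \geq \sqrt{2v\ell} + \tfrac{1}{3}\ell\bigr] \leq 2e^{-\ell}.
\]
Choosing $\ell = L = \log(2M/\delta)$ makes each failure probability at most $\delta/M$. A union bound over the $M$ experts then yields the stated bound with probability at least $1-\delta$. Here $\sqrt{2vL} \leq \sqrt{2nkL/M}$, and the constant $\tfrac{2}{3}L$ in the statement is looser than the $\tfrac13 L$ that Bernstein gives. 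The case $k = M$ is trivial, since then $N_a = n$ deterministically.
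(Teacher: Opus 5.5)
Your proposal is correct and follows the paper's proof essentially step for step. It uses the same hypercube router $u_a = a/\sqrt{r}$ with $R_2$ a coordinate projection, the same hyperplane argument for almost-sure absence of ties, the same sign-flip symmetry (invariance of $\mathcal{N}(0,I_r)$, equivariance of the scores, simple transitivity on $\{\pm1\}^r$) to get marginals $k/M$, and Bernstein plus a union bound at $\delta/M$ per expert. The one cosmetic difference is the constant: you quote the sub-gamma form $\Pr\bigl[|N_a - np| \geq \sqrt{2v\ell} + \ell/3\bigr] \leq 2e^{-\ell}$, while the paper inverts $2\exp\bigl(-t^2/(2v + \tfrac{2}{3}t)\bigr)$ and then applies $\sqrt{x+y} \leq \sqrt{x} + \sqrt{y}$, which is where its $\tfrac{2}{3}L$ comes from; both versions imply the stated bound.
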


The key intuition is that, under this construction, a standard Gaussian input has no preference for one expert over another in expectation. We defer the full proof to Appendix~\ref{appendix:theoretical_results}. Thus, the low-rank factorization preserves the load-balanced mechanism of standard MoE, while reducing routing complexity from $O(Mh)$ to $O((M+h)\log M)$.

\medskip\noindent\textbf{Why balanced load maximizes the number of stored facts.} The number of facts an MLP expert can memorize is determined by its parameter count, as shown for two-layer networks \citep{daniely2020memorizinggaussians} and for language models \citep{allenzhu2024physicslanguagemodels33,allenzhu2023physics}. An MoE layer reaches the total capacity of its experts only under balanced load: an overloaded expert receives more facts than it can memorize, and an underutilized expert leaves its parameters unused.

\medskip\noindent\textbf{Experimental validation with trained models.}
\label{subsec:expert-utilization-at-training-time}
We verify in training that sufficient rank does not affect memorization, using the phonebook task of \citet{jelassi2025mixtureparrotsexpertsimprove}. Each example is a five-letter name followed by an eight-digit number, such as ``alice|12345678''. We pretrain on these pairs and report the largest number of training examples for which the model attains $\geq 90\%$ exact-match accuracy on the numbers given the names, evaluated on $1{,}000$ training examples sampled uniformly at random. To approach the model's full memorization capacity, we train for up to $1{,}500$ epochs, in line with \citet{allenzhu2024physicslanguagemodels33}.

Figure~\ref{fig:memorization_in_training}(Left) varies the router rank $r$ while keeping the expert configuration fixed: MoRE matches the memorization capacity of standard MoE once $r$ is sufficiently large. To compare load balancing, Figure~\ref{fig:memorization_in_training}(Middle, Right) tracks normalized routing entropy, $H_{\mathrm{norm}} = -\sum_{a=1}^{M} p_a \log p_a / \log M$, where $p_a$ is expert $a$'s share of routed tokens. Both models reach high entropy by the end of training.

\begin{figure}[!t]
    \centering
    \includegraphics[width=\linewidth]{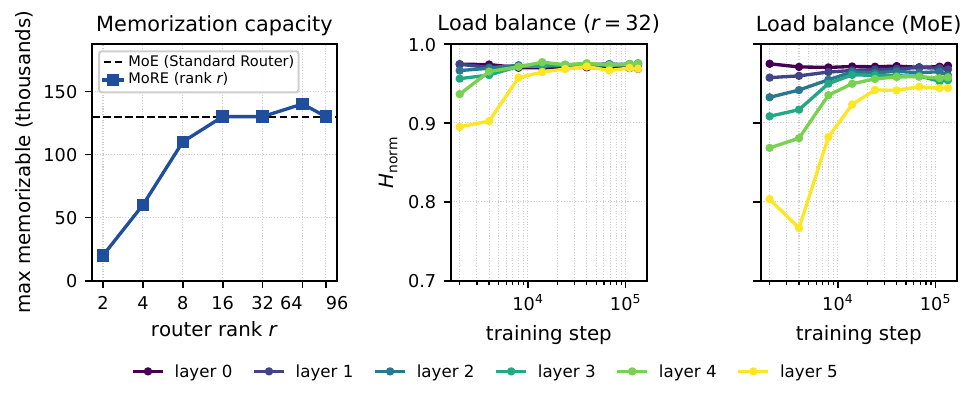}
    \caption{\textbf{(Left) Sufficient rank does not affect memorization. (Middle, Right) MoRE with $r=32$ and MoE have similar load balancing at the end of training.} On the Phonebook task, at $M{=}192$, $h{=}128$, $s{=}8$, $k{=}4$, $L{=}6$. \textbf{Left:} maximum phonebook size MoRE memorizes against the router rank. The dashed line is MoE (standard router) at the same configuration, which MoRE already matches at $r=16$. \textbf{Middle, Right:} routing entropy $H_{\mathrm{norm}}$ per layer against training step for MoRE ($r{=}32$) and MoE. 1 means equal load balancing and 0 means collapse onto one expert. We define $H_{\mathrm{norm}}$ and show similar results for other setups in Appendix~\ref{appendix:hnorm_definition}. }
    \label{fig:memorization_in_training}
\end{figure}

\section{Scaling experts efficiently with fused implementation of MoRE}
\label{sec:efficiency}

Since logarithmic rank keeps the load balanced and does not hurt memorization (Section~\ref{sec:theory}), we ask: \textit{at a fixed active-FLOP budget, how much additional capacity does the low-rank router unlock?} Section~\ref{sec:scaling} shows a $\Theta(h/r)$ gain in the expert count.\footnote{In Appendix~\ref{appendix:more_experts}, we find that adding experts monotonically reduces MoRE's validation perplexity.} The resulting large-$M$ regime creates a memory-bandwidth bottleneck on the routing score tensor, which we resolve with a fused Triton~\citep{tillet2019triton} kernel for the router (Section~\ref{subsec:fused-kernel}).

\subsection{Scaling under fixed active-FLOPs}
\label{sec:scaling}

Fix the active-FLOP budget $\tau$, the hidden dimension $h$, and the active expert count $k$, so that all non-MoE components have identical parameter counts across the two settings. We compare $\mathrm{MoE}(M, k, h, s)$ and $\mathrm{MoRE}(M', k, h, s', r)$ on total MLP parameters $P = 3 M h s$. The full calculation is in Proposition~\ref{thm:optimal_scaling} (Appendix~\ref{appendix:scaling}).

The active budget $\tau$ decomposes into an expert cost $C_e k h s$ and a router cost. For MoE, the router cost is $C_r M h$, i.e.\ $C_r h$ FLOPs per expert. For MoRE, the router cost is $C_r M' r + C_r h r$, where the per-expert term $C_r r$ is an $h/r$ reduction over MoE, and $C_r h r$ is a fixed projection overhead independent of $M'$. Since $P = 3 M h s \propto M s$ is a product, and the budget trades off linearly between scaling $M$ (via the router) and scaling $s$ (via the experts), $P$ is maximized when the budget is split equally between the router and the experts. This gives
\[
M = \frac{\tau}{2 C_r h} \quad \text{(MoE)}, \qquad
M' = \frac{\tau - C_r h r}{2 C_r r} \quad \text{(MoRE)}.
\]

In the regime $\tau \gg C_r h r$ (equivalently $M, M' \gg r$), the projection overhead is negligible, and $M'/M \approx h/r$: each MoRE expert costs only a fraction $r/h$ of the routing cost of an MoE expert. For example, at $h = 2048$ and $r = 16$, MoRE admits roughly $128\times$ more experts than MoE at the same active cost. In Section~\ref{sec:experiments}, we choose configurations according to this derivation, with $C_e = C_r$.\footnote{Since $C_e = C_r = 6$ for inference and $C_e \approx C_r$ in training, the active FLOPs roughly match in both.} We find that the same $\Theta(h/r)$ conclusion also holds under multi-GPU expert parallelism (Appendix~\ref{subsec:scaling-ep}).

\subsection{Realizing the speedup at large $M$}
\label{subsec:fused-kernel}

When the expert count $M$ and the number of tokens $N$ are large, running a top-$k$ over the full $(N, M)$ routing-score tensor requires writing the tensor to HBM (High Bandwidth Memory, see Appendix~\ref{appendix:gpu_memory}) and reading it back, and dominates the routing time (Figure~\ref{fig:fuse_benefit}, Left). This requires \textit{fusing operations} in addition to the low-rank factorization.

 \begin{figure}[!t]
    \centering
    \includegraphics[width=\linewidth, trim=0 5 0 4, clip]{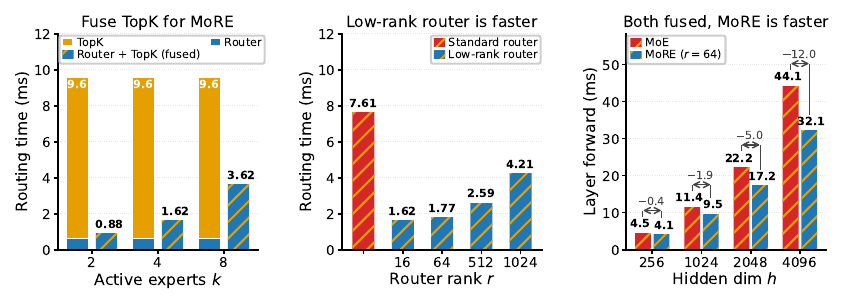}
    \caption{
\textbf{(Left)} Fusing with top-$k$ eliminates the need to materialize the
scores for MoRE.
\textbf{(Middle)} The low-rank router is faster than the standard router, while its cost grows with $r$.
\textbf{(Right)} The gap between MoRE and MoE widens as $h$ grows.
$M{=}4{,}096$, $N = 262{,}144$ ($B{=}256$, $T{=}1024$), single NVIDIA B200, mean of $200$ trials after $50$ warm-ups. Left and middle time the router alone at $h{=}2048$ (left: $r{=}16$, middle: $k{=}4$). Right times the whole layer at $s{=}64$, $k{=}4$, $r{=}64$.}
    \label{fig:fuse_benefit}
\end{figure}

Below, we sketch how MoRE can be fused and leave the full details to
Appendix~\ref{appendix:inference_pseudocode}. Instead of writing the routing
scores for every token to HBM and reading them back to compute the top-$k$
scores, we compute $R_1R_2 x$ and
maintain a running top-$k$ buffer (Figure~\ref{fig:fuse_benefit}, Left)\footnote{Our algorithm works well for common choices of $k$. For a discussion of our
top-$k$ algorithm and a comparison with other implementations, see
Appendix~\ref{appendix:topk_analysis}.}. We find that the low-rank router in MoRE is faster than the standard router across different ranks (Figure~\ref{fig:fuse_benefit}, Middle). MoE can be fused in a similar way, but we find that its speedup from fusing becomes small when $h$ is large, while the speedup for MoRE remains visible, because for MoE the cost of computing $Rx$ and maintaining the top-$k$ buffer scales with $h$ instead of $r$ (Figure~\ref{fig:fuse_benefit}, Right). We also show similar speedups on multiple GPUs with expert and data parallelism, where communication costs are involved (Appendix~\ref{appendix:ep}).%

\medskip\noindent\textbf{Wall-clock time at inference in the large-$M$ regime.} As shown in Figure~\ref{fig:prefill_heatmap}, the fused MoRE layer outperforms the standard MoE layer across the whole grid, and the speedup grows with $M$ along every row, reaching $1.83$ to $4.54\times$ at $M = 8{,}192$. Equivalently, at matched wall-clock time, MoRE supports about eight times more experts: at $h = 512$, the standard layer reaches $M = 512$ in $6.90$\,ms, while MoRE reaches $M = 4{,}096$ in $5.37$\,ms, realizing the scaling advantage of Section~\ref{sec:scaling} in practice. Fusing also reduces peak memory by up to $14\times$ for the whole layer (Figure~\ref{fig:switch_memory}). We find that MoE and MoRE have similar decoding speed (see Appendix~\ref{appendix:decoding}).

\begin{figure}[!tbp]
    \centering
    \includegraphics[width=\linewidth]{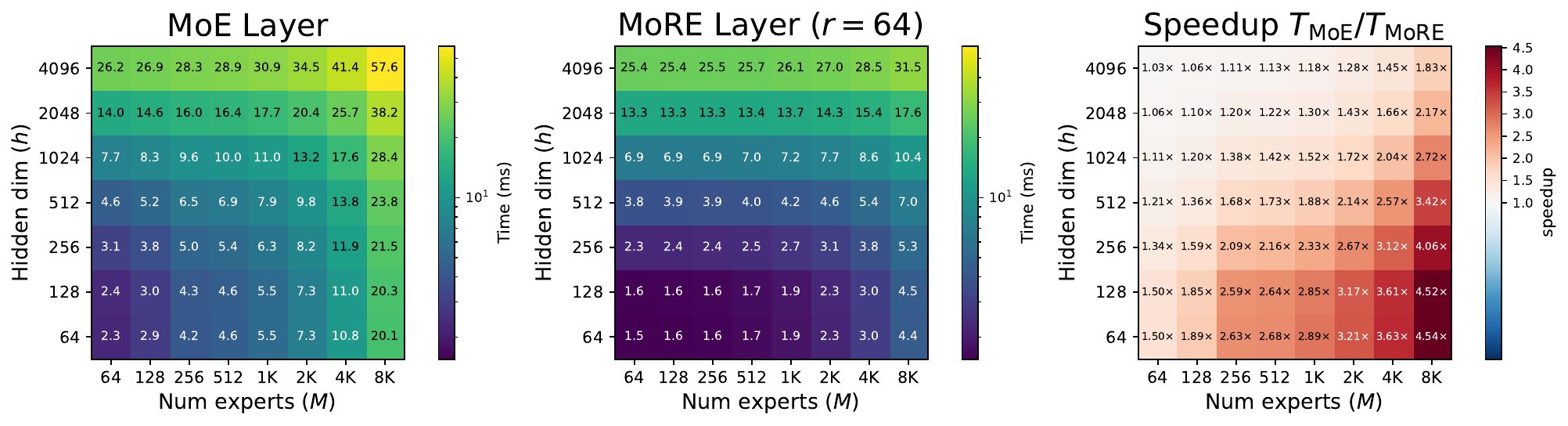}
    \caption{\textbf{Inference (prefilling): At a fixed configuration the MoRE layer
is $1.83$ to $4.54\times$ faster at $M{=}8{,}192$, and at matched active FLOPs it
has comparable speed to the standard MoE layer while holding $h/r$ times more
experts.} Forward pass of a single layer.
\textbf{Left:} MoE layer. \textbf{Middle:} MoRE layer ($r{=}64$). \textbf{Right:} speedup
$T_{\text{MoE}} / T_{\text{MoRE}}$.
$s{=}64$, $k{=}4$, $N = 262{,}144$ ($B{=}256$, $T{=}1024$), mean of
$200$ trials after $50$ warm-ups on a single NVIDIA B200. The comparison where both layers are fused is in Figure~\ref{fig:lowrank_bothfused}. We provide additional measurements on an A100, with similar speedups, in Appendix~\ref{appendix:a100_grid}.}
    \label{fig:prefill_heatmap}
\end{figure}

\section{Experiments: MoRE improves performance at matched active FLOPs}
\label{sec:experiments}

In this section, we show that replacing MoE layers with MoRE layers in transformers can improve performance on memorization and knowledge-intensive tasks while keeping active FLOPs fixed.

\subsection{Synthetic task: increased memorization capabilities at matched active FLOPs}
\label{subsec:phonebook}

We first test on the phonebook task (Section~\ref{subsec:expert-utilization-at-training-time}) whether MoRE's parameter advantage at matched active FLOPs translates into memorization capacity. MoRE memorizes substantially more pairs than MoE at every scale we sweep.
In Figure~\ref{fig:phonebook_capacity}, we sweep the hidden dimension $h \in \{64,128,256,512\}$ at expert width $s=8$, and the expert width $s \in \{8,16,32,64\}$ at $h=64$. At each point, we choose the optimal MoE and MoRE configurations by Proposition~\ref{thm:optimal_scaling} so that they have matched active FLOPs. Notably, the gap widens with larger $h$, in agreement with the $\Theta(h/r)$ trend of Proposition~\ref{thm:optimal_scaling}.\footnote{We stop at $h = 512$ and $s = 64$ because larger models memorize more than 2 million examples, and training on such datasets for up to 1,500 epochs exceeds our compute budget.}

\begin{figure}[!htbp]
    \centering
    \begin{minipage}{0.49\linewidth}
        \centering
        \includegraphics[width=\linewidth]{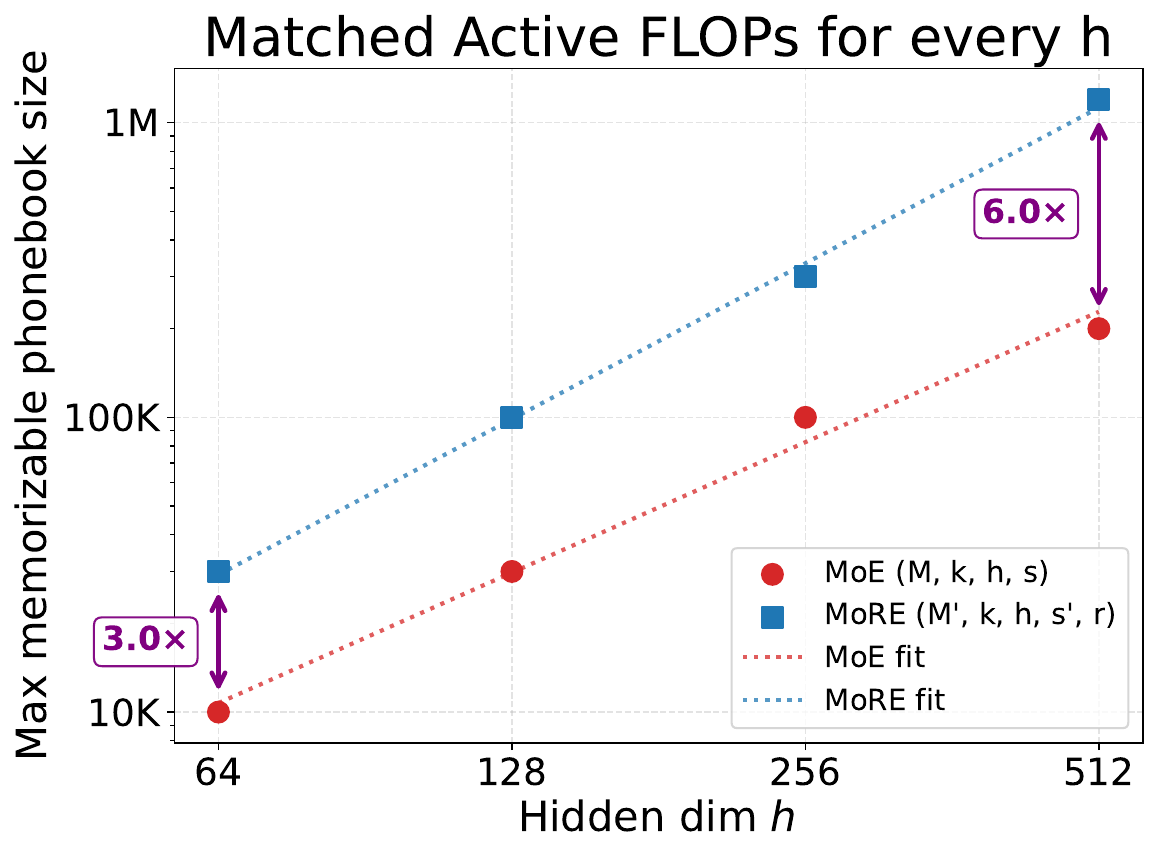}
    \end{minipage}\hfill
    \begin{minipage}{0.49\linewidth}
        \centering
        \includegraphics[width=\linewidth]{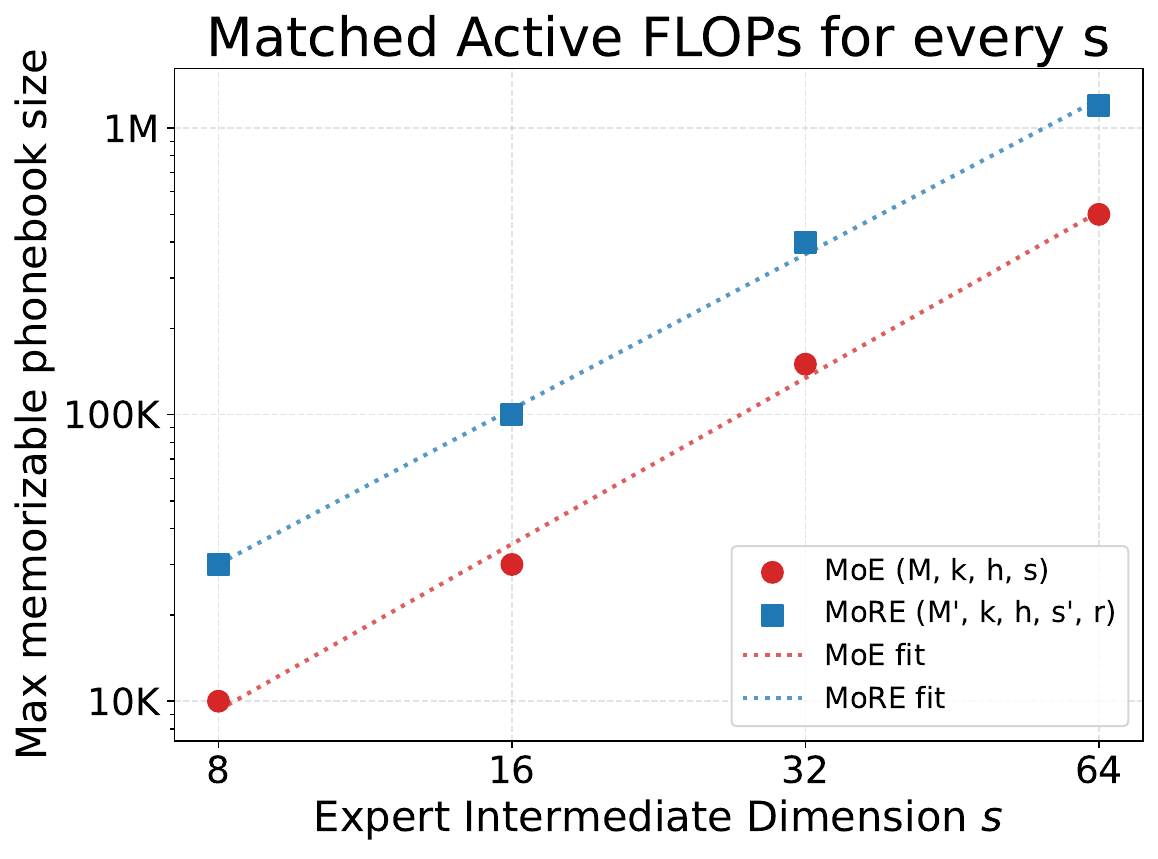}
    \end{minipage}
    \caption{\textbf{At fixed active FLOPs~$\tau$, MoRE memorizes more phonebook entries than MoE, and the gap widens with~$h$ as Proposition~\ref{thm:optimal_scaling} predicts.} \textbf{Left:} sweep over hidden dimension $h \in \{64,128,256,512\}$ at expert width $s=8$. \textbf{Right:} sweep over expert width $s \in \{8,16,32,64\}$ at $h=64$. Dotted lines are log-log linear fits. Configurations and sizes are in Tables~\ref{tab:phonebook_configs_h} and~\ref{tab:phonebook_configs_s} (Appendix~\ref{appendix:phonebook_details}).}
    \label{fig:phonebook_capacity}
\end{figure}

\subsection{Real-world pretraining: improved performance on knowledge-intensive tasks}
\label{subsec:real_world_pretraining}

For the real-world pretraining, we construct a 40B-token mixture dataset following the natural-language pretraining recipe of \citet{jelassi2025mixtureparrotsexpertsimprove}: a mixture of FineWeb-edu \citep{huggingfacefw_2024}, Cosmopedia \citep{benallal2024cosmopedia}, and Wikipedia, together with the training sets of the downstream benchmarks, so that the model is familiar with the downstream tasks at test time \citep{jelassi2025mixtureparrotsexpertsimprove, allenzhu2023physics}. 

We select two MoE/MoRE pairs with matched active FLOPs by Section~\ref{sec:scaling}, a third pair with wider experts ($s$ and $s'$), and a larger pair with $M = 1024$ and $M' = \frac{h}{r} \cdot M = 8192$, trained with expert parallelism. By construction, MoRE has 5 to 6$\times$ more total parameters. On the first three pairs, MoRE reduces validation perplexity by 7\% to 15\% (Table~\ref{tab:fineweb_p3_results}).\footnote{The gap is smaller for the last pair. A possible reason is that a larger model needs $>$40B tokens, which is beyond our budget. In Appendix~\ref{appendix:gap_vs_tokens}, we find that the perplexity gap increases with the token budget.}

\begin{figure}[tbp]
    \centering
    \includegraphics[width=0.55\textwidth]{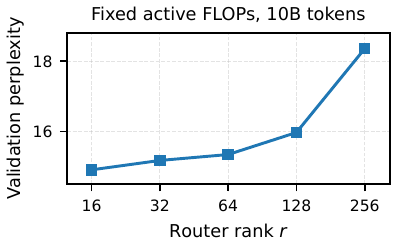}
    \caption{\textbf{With active FLOPs fixed, a smaller rank allows more experts, and validation perplexity falls as the rank shrinks.} The number of experts $M$ grows from 64 to 8704 as $r$ falls from 256 to 16, at $h = 512$, $k = 4$, $L = 12$ and 10B training tokens.}
    \label{fig:rank_sweeps}
\end{figure}
We evaluate 0-shot via lm-evaluation-harness~\citep{eval-harness} on \textbf{knowledge-intensive} tasks (TriviaQA~\citep{joshi2017triviaqalargescaledistantly}, Natural Questions~\citep{kwiatkowski2019natural}) and \textbf{commonsense and science reasoning} (HellaSwag~\citep{zellers2019hellaswagcanmachinereally}, WinoGrande~\citep{sakaguchi2020winogrande}, ARC-Easy~\citep{clark2018thinkyousolvedquestion}, SciQ~\citep{welbl2017crowdsourcingmultiplechoice}). MoRE matches or improves knowledge-intensive performance while preserving commonsense and reasoning scores.

\begin{table}[!tbp]
\centering
\setlength{\tabcolsep}{4pt}
\resizebox{\textwidth}{!}{%
\newcommand{\pmval}[1]{{\scriptsize$\pm$#1}}

\begin{tabular}{llcc|c|cc|cccc}
\toprule
\textbf{$s,h$} & \textbf{Mo(R)E-$M$} & \makecell{\textbf{Active MoE}\\\textbf{FLOPs / layer}} & \makecell{\textbf{Total}\\\textbf{Param}} & \makecell{\textbf{Val}\\\textbf{PPL}$\downarrow$} & \makecell{\textbf{TriviaQA}\\F1$\uparrow$} & \makecell{\textbf{NQ-Open}\\F1$\uparrow$} & \makecell{\textbf{Hella.}\\acc\_norm$\uparrow$} & \makecell{\textbf{ARC-e}\\acc$\uparrow$} & \makecell{\textbf{SciQ}\\acc$\uparrow$} & \makecell{\textbf{Wino.}\\acc$\uparrow$} \\
\midrule
\multirow{2}{*}{32, 1024} & MoE-128    & 1.57M & 287.0M & 14.01          & 16.57\,\pmval{0.24} & 6.88\,\pmval{0.32} & 36.91\,\pmval{0.48} & 56.31\,\pmval{1.02} & 79.60\,\pmval{1.27} & 50.20\,\pmval{1.41} \\
                          & MoRE-1536  & 1.57M & 1.50B  & \textbf{11.96} & \textbf{20.21\,\pmval{0.25}} & \textbf{7.63\,\pmval{0.34}} & \textbf{42.28\,\pmval{0.49}} & \textbf{63.30\,\pmval{0.99}} & \textbf{80.20\,\pmval{1.26}} & \textbf{53.51\,\pmval{1.40}} \\
\midrule
\multirow{2}{*}{64, 512}  & MoE-256    & 1.57M & 362.9M & 14.01          & 16.29\,\pmval{0.24} & 5.72\,\pmval{0.31} & 36.95\,\pmval{0.48} & 57.32\,\pmval{1.01} & 75.80\,\pmval{1.36} & \textbf{51.46\,\pmval{1.40}} \\
                          & MoRE-1792  & 1.57M & 1.91B  & \textbf{12.66} & \textbf{17.12\,\pmval{0.23}} & \textbf{6.88\,\pmval{0.32}} & \textbf{40.15\,\pmval{0.49}} & \textbf{60.44\,\pmval{1.00}} & \textbf{77.70\,\pmval{1.32}} & 50.28\,\pmval{1.41} \\
\midrule
\multirow{2}{*}{128, 512} & MoE-256    & 2.36M & 664.9M & 12.81          & 18.09\,\pmval{0.24} & 5.99\,\pmval{0.29} & 40.00\,\pmval{0.49} & 59.68\,\pmval{1.01} & 76.30\,\pmval{1.35} & 51.22\,\pmval{1.40} \\
                          & MoRE-1792  & 2.26M & 3.76B  & \textbf{11.92} & \textbf{20.79\,\pmval{0.26}} & \textbf{7.62\,\pmval{0.34}} & \textbf{42.07\,\pmval{0.49}} & \textbf{60.27\,\pmval{1.00}} & \textbf{78.70\,\pmval{1.30}} & \textbf{54.70\,\pmval{1.40}} \\
\midrule
\multirow{2}{*}{{64, 512}}  & {MoE-1024}   & {3.93M} & {1.27B}  & {12.43}          & {21.67\,\pmval{0.29}} & {\textbf{8.75\,\pmval{0.40}}} & {40.82\,\pmval{0.49}} & {60.86\,\pmval{1.00}} & {77.10\,\pmval{1.33}} & {\textbf{53.67\,\pmval{1.40}}} \\
                          & {MoRE-8192}  & {3.98M} & {7.92B}  & {\textbf{11.92}} & {\textbf{22.15\,\pmval{0.29}}} & {8.68\,\pmval{0.39}} & {\textbf{42.33\,\pmval{0.49}}} & {\textbf{61.74\,\pmval{1.00}}} & {\textbf{81.50\,\pmval{1.23}}} & {51.70\,\pmval{1.40}} \\

\bottomrule
\end{tabular}

}

\caption{Pretraining results for MoE Transformers on 40B tokens. Each MoRE configuration ($r = 64$) matches the active FLOPs of its MoE counterpart, while total parameters are scaled up. All tasks are evaluated 0-shot, and $\pm$ denotes the standard error over test samples. Active MoE FLOPs shown are at inference and we have matched FLOPs at training too. See Appendices~\ref{appendix:fineweb_details} and~\ref{appendix:flops_param_accounting}
for details.}
\label{tab:fineweb_p3_results}
\label{table:real-training}
\end{table}

\medskip\noindent\textbf{End-to-end wall-clock cost comparison.} At fixed active FLOPs (Table~\ref{table:real-training} and Table~\ref{tab:fineweb_flops_train_inf}), MoE and MoRE have similar inference time in both prefilling and decoding (Tables~\ref{tab:prefill_time} and~\ref{tab:decode_table1}). With the same configuration, training MoRE is $1.10$--$1.23\times$ faster than training MoE.\footnote{At matched active FLOPs, training MoRE can be slower because of work that scales with total parameters, such as the AdamW update. A more careful implementation may shorten this gap (Appendix~\ref{appendix:train_time}).}

\begin{table}[!tbp]
    \centering
    \small
    \begin{tabular}{llrrr}
        \toprule
        Scale & Model & \#Experts & Total params & Prefill (ms) \\
        \midrule
        $s{=}32$, $h{=}1024$ & MoE & 128 & 0.29\,B & 115.4 \\
         & MoRE ($r{=}64$) & 1,536 & 1.50\,B & 115.1 \\
        \midrule
        $s{=}64$, $h{=}512$ & MoE & 256 & 0.36\,B & 101.4 \\
         & MoRE ($r{=}64$) & 1,792 & 1.91\,B & 95.3 \\
        \midrule
        $s{=}128$, $h{=}512$ & MoE & 256 & 0.66\,B & 102.0 \\
         & MoRE ($r{=}64$) & 1,792 & 3.76\,B & 98.0 \\
        \midrule
        $s{=}64$, $h{=}512$ & MoE & 1,024 & 1.27\,B & 105.4 \\
         & MoRE ($r{=}64$) & 8,192 & 7.92\,B & 104.0 \\
        \bottomrule
    \end{tabular}
    \caption{\textbf{MoRE prefills as fast as the MoE it matches in active FLOPs, while holding $5$ to $6$ times as many parameters.} Prefill wall-clock (ms) for the pairs of Table~\ref{tab:fineweb_p3_results}: the forward pass of the whole model on one NVIDIA B200 at the per-device batch the training runs use, $T{=}1024$, mean of $200$ iterations after $100$ warm-ups. Training time for the same pairs is in Table~\ref{tab:train_time}.}
    \label{tab:prefill_time}
\end{table}

\medskip\noindent\textbf{Ablation of the router rank at matched active FLOPs.} Reducing $r$ from $256$ to $16$ at fixed active FLOPs raises MoRE's expert count from $64$ to $8{,}704$ and lowers validation perplexity monotonically, from $18.35$ to $14.90$ (Figure~\ref{fig:rank_sweeps}). As the benefit plateaus at $r = 64$, we keep $r = 64$ throughout our pretraining experiments.

\section{Conclusion and Discussion}
We introduce \textit{MoRE}, a principled solution to the MoE routing bottleneck that scales experts by a factor of $\Theta(\frac{h}{r})$ while matching active FLOPs. We provide theoretical guarantees for its expressivity, design a hardware-efficient implementation, and demonstrate improved performance on knowledge-intensive benchmarks such as TriviaQA \citep{joshi2017triviaqalargescaledistantly} and Natural Questions \citep{kwiatkowski2019natural}, without degrading reasoning. 

\medskip\noindent\textbf{Limitations and future work.} Our real-world training experiments are still limited in scale ($h \leq 2048$, fewer than 10B total parameters) due to our limited computational budget. We also mainly compare MoE and MoRE under the principle of Section~\ref{sec:scaling} for improving memorization, and future work could study the scaling laws of the two architectures. While we find that matching the standard router's initialization variance can already improve upon the default initialization of $R_1, R_2$ (Appendix~\ref{appendix:initialization}), a LoRA-like \citep{hu2021loralowrankadaptationlarge} initialization might improve performance further.

\section{Acknowledgements}
We thank Wentao Guo for helpful discussions and feedback. HW is supported by the gift from AWS to Penn Engineering's ASSET Center for Trustworthy AI.

\bibliography{references}
\bibliographystyle{references}

\appendix
\addtocontents{toc}{\protect\setcounter{tocdepth}{2}}
\renewcommand{\contentsname}{Appendix}
\clearpage
\tableofcontents
\newpage

\section{Frontier MoE Model Configurations}
\label{appendix:frontier_models}

For reference, Table~\ref{table:frontier_model_comparison} summarizes the configurations of recent frontier MoE models. The trend toward higher sparsity ($M \gg k$) and higher granularity ($s < h$) motivates the fine-grained regime studied throughout this paper.

\begin{table}[h!]

    \centering

    \footnotesize

    \begin{tabular}{lccccccc}

        \toprule

        \textbf{Model} & \textbf{Total} & \textbf{Active} & $L$ & $M$ & $k$ & $h$ & $s$ \\

        \midrule

        \textbf{Mixtral 8x22B}~\cite{mixtral8x22b2024}       & 141B  & 39B   & 56 & 8   & 2  & 6144 & 16384 \\

        \textbf{Grok-1}~\cite{grok2024}                      & 314B  & 79B & 64 & 8 & 2 & 6144 & 32768 \\

        \textbf{DeepSeek-V2}~\cite{deepseek2024deepseekv2} & 236B  & 21B   & 60 & 160 & 6  & \textbf{5120} & \textbf{1536} \\

        \textbf{DeepSeek-V3}~\cite{deepseek2024deepseekv3} & 671B  & 37B   & 61 & 256 & 8  & \textbf{7168} & \textbf{2048} \\

        \textbf{Llama 4 Maverick}~\cite{meta2025llama4}      & 400B  & 17B   & 48 & 128 & 1  & 5120 & 8192 \\

        \textbf{Qwen3 235B-A22B}~\cite{yang2025qwen3}        & 235B  & 22B   & 94 & 128 & 8  & \textbf{4096} & \textbf{1536} \\

        \textbf{Kimi K2}~\cite{kimiteam2025kimik2}           & 1.04T & 32B   & 61 & 384 & 8  & \textbf{7168} & \textbf{2048} \\

        \textbf{gpt-oss-120b}~\cite{openai2025gptoss}        & 117B  & 5.1B  & 36 & 128 & 4  & 2880 & 2880 \\

        \textbf{Qwen3-Next-80B-A3B}~\cite{qwen2025qwen3next} & 80B   & 3B    & 48 & 512 & 10 & \textbf{2048} & \textbf{512} \\

        \bottomrule

    \end{tabular}

    \caption{Configurations of frontier MoE models. $M$: total experts; $k$: active experts per token; $h$: hidden dimension; $s$: expert FFN intermediate dimension. The trend shows greater sparsity and granularity is preferred.}

    \label{table:frontier_model_comparison}

\end{table}

\section{Additional Discussions in Related Work}
\label{appendix:additional_related_work}

\noindent\textbf{Low-rank routing in the $M \ll h$ regime.}
X-MoE~\citep{chi2022representationcollapse} scores experts by the dot product between a hidden vector $x \in \mathbb{R}^h$ and $M$ learned expert embeddings. The gradient pulls $x$ toward the span of those embeddings, so after training the hidden states entering a routed layer concentrate in a subspace of $\mathbb{R}^h$ of dimension at most $M$. When $M \ll h$, that subspace is much smaller than $\mathbb{R}^h$, a phenomenon the authors call \emph{representation collapse}. To mitigate this, \citet{chi2022representationcollapse} project both expert embeddings and the hidden vector to a lower dimension. L2R~\citep{yang2026l2rlowranklipschitzcontrolledrouting} identifies a related but distinct issue: when $M \ll h$, token representations and expert embeddings become nearly orthogonal, which narrows the margin between routing logits and produces ambiguous expert selection. L2R introduces low-rank routing to improve discrimination among experts.

Our work also uses low-rank routing and derives upper and lower bounds on the rank based on the margin argument, but it serves a different purpose. The mechanisms above require $M \ll h$ (e.g., $h = 4096$ with $M \leq 64$, as in~\citet{yang2026l2rlowranklipschitzcontrolledrouting} and Mixtral~\citep{jiang2024mixtral}). MoRE instead targets the fine-grained regime where $M$ is scaled up and $M \ll h$ does not hold. In this regime, the low-rank router reduces routing cost rather than correcting the geometry of token representations. 

PEER \citep{he2024mixturemillionexperts} reaches a very large expert pool by shrinking each expert
to a single neuron and replacing the linear router with product-key retrieval, which has complexity $O(h \sqrt{M})$ and is asymptotically better than our $O((M+h)r)$ where $r = O_k(\log M)$.  However, in the following comparison, we find that MoRE improves upon PEER based on the wall-clock speed in practice, with matched active FLOPs. 

\subsection{Wall-clock Comparison with PEER}
\label{appendix:peer}

A PEER layer holds
$M$ experts, each a single neuron $e_i(x) = \sigma(u_i^\top x)\, v_i$ with
$u_i, v_i \in \mathbb{R}^h$, so an expert has width $1$ and $2h$ parameters. Routing uses $H$ query
networks, each a matrix in $\mathbb{R}^{d_q \times h}$, together with two sub-key tables of shape
$\sqrt{M} \times d_q/2$. Each head splits its query in half, matches the halves against the two
tables, keeps the top $\kappa$ on each axis, and takes the top $\kappa$ of the $\kappa^2$ candidate
sums, so the layer activates $k = H\kappa$ experts per token. The
per-layer parameter counts and per-token active FLOPs (we count the query projections and the
sub-key tables as active FLOPs too) are
\begin{equation}
  P_{\text{experts}} = 2 M h, \qquad
  P_{\text{router}} = H h d_q + \sqrt{M}\, d_q, \qquad
  \tau_{\text{PEER}} = 12\, k h + 6\, H d_q\big(h + \sqrt{M}\big),
  \label{eq:peer_accounting}
\end{equation}
to be read against $3Mhs$, $(M+h)r$ and $C_e k h s + C_r (M+h) r$ for MoRE. %

For comparison, we implement PEER with reference to the public PEER-pytorch
repository,\footnote{\url{https://github.com/lucidrains/PEER-pytorch}} and change the code to make the heads share one set of product keys, following \citet{he2024mixturemillionexperts}. We compare MoRE against it at matched total parameters and matched active
FLOPs.

We match against the $(s{=}64,\, h{=}512)$ row of Table~\ref{tab:fineweb_p3_results}, where
MoRE-1792 uses $M' = 1792$, $s' = 56$, $k = 4$ and $r = 64$ over $L = 12$ layers. Matching expert
parameters gives $M = \tfrac{3}{2}M's' = 150{,}528$, and since product-key retrieval needs $M$ to be
a perfect square we take $M = 388^2 = 150{,}544$. Matching active FLOPs at $H = 8$ then gives
$d_q = 32$ and $\kappa = 56$, so $k = 448$ experts are active per token. We choose $d_q = 32$ rather
than the $d_q = h$ of the default implementation in order to match the router FLOPs.

Table~\ref{tab:peer_wallclock} measures one layer at the matched configuration. \textbf{At matched active FLOPs, a MoRE layer takes substantially less wall-clock time than a PEER layer.} Because $M$ is large for PEER, we additionally implement a fused kernel for the expert application that never materializes the gathered weights. This fused kernel runs $1.90\times$ faster on that stage, but the wall-clock ratio remains large.

\begin{table}[H]
\centering
\small
\begin{tabular}{rrrrr}
\toprule
& \multicolumn{2}{c}{Forward} & \multicolumn{2}{c}{Forward $+$ backward} \\
\cmidrule(lr){2-3}\cmidrule(lr){4-5}
Tokens per step & MoRE & PEER & MoRE & PEER \\
\midrule
$4{,}096$   & $1.06$ & $5.56$ \ ($5.2\times$)    & $2.94$  & $16.30$ \ ($5.5\times$) \\
$16{,}384$  & $1.23$ & $22.06$ \ ($18.0\times$)  & $3.01$  & $60.96$ \ ($20.3\times$) \\
$65{,}536$  & $1.93$ & $87.79$ \ ($45.6\times$)  & $4.75$  & $244.01$ \ ($51.4\times$) \\
$262{,}144$ & $4.92$ & $350.86$ \ ($71.4\times$) & $14.19$ & $974.41$ \ ($68.7\times$) \\
\bottomrule
\end{tabular}
\caption{\textbf{Single-layer wall-clock time at matched active FLOPs. The ratio grows with batch size.} Single-layer wall-clock in milliseconds at the matched configuration, on one NVIDIA B200 in
bf16, median of $20$ iterations after $5$ warm-ups.}
\label{tab:peer_wallclock}
\end{table}

Table~\ref{tab:peer_quality} reports validation perplexity for the three architectures trained for $1$B tokens on mixture dataset specified in Appendix~\ref{appendix:experimental_details}. MoE and MoRE use the
Switch load-balancing loss~\citep{fedus2022switch}. 

\begin{table}[H]
\centering
\small
\begin{tabular}{lrrr}
\toprule
Architecture & Validation PPL & Seconds per step & Training time \\
\midrule
MoE-256   & $24.26$ & $0.321$ & $20.4$ min \\
MoRE-1792 & $24.13$ & $0.405$ & $25.8$ min \\
PEER      & $24.17$ & $6.135$ & $390.1$ min \\
\bottomrule
\end{tabular}
\caption{Validation perplexity at $1$B tokens, $h = 512$, $L = 12$, four B200s,
$262{,}144$ tokens per optimizer step.}
\label{tab:peer_quality}
\end{table}

PEER and MoRE reach similar perplexity under the $1$B-token budget. The wall-clock cost differs by $15\times$: \textbf{MoRE trains in $25.8$ minutes compared to $390.1$ minutes for PEER} (Table~\ref{tab:peer_quality}).

\section{Experimental Details}
\label{appendix:experimental_details}
\paragraph{Router initialization.}
We initialize each entry of the low-rank router factors as $\mathcal{N}(0, \sigma'^2)$ with $\sigma' = \sqrt{\sigma}/r^{1/4}$ and $\sigma = 0.02$, so that the variance of the router logits matches the standard router. See Appendix~\ref{appendix:initialization} for the derivation and an empirical comparison to the default initialization.

To make the speedup by the low-rank router more apparent, we seek an efficient implementation of the Mixture-of-Experts architecture and therefore build upon ScatterMoE~\citep{tan2024scatteredmixtureofexpertsimplementation} (refer to Appendix~\ref{appendix:wallclock} for more details). We believe that further speedups are achievable on specific hardware by applying more careful optimization as in~\cite{guo2025sonicmoeacceleratingmoeio}. 

We use load balancing coefficient as 0.001 for both MoE and MoRE setups and do not have evidence that MoRE is sensitive to its choice.

\subsection{Phonebook}
\label{appendix:phonebook_details}

\paragraph{Tokenization.}
Each example is tokenized as ``\texttt{\$abcde|12345678.}'', with a five-letter lowercase name, the separator ``\texttt{|}'', and an eight-digit number, padded with ``\texttt{*}'' to a fixed sequence length of~$16$. The vocabulary has $V = 40$ symbols: $26$ lowercase letters, $10$ digits, and $4$ specials (``\texttt{\$}'' BOS, ``\texttt{.}'' EOS, ``\texttt{|}'' separator, ``\texttt{*}'' pad).

\paragraph{Architecture and sweep.}
All models are $L=6$-layer Transformers with top-$k$ routing at $k=4$ and low-rank rank $r=16$. The active budget~$\tau$ has two natural scaling axes at fixed~$h$: the hidden dimension itself and the expert width~$s$. We sweep each in turn:
(i) $h \in \{64, 128, 256, 512\}$ at fixed $s = 8$, and
(ii) $s \in \{8, 16, 32, 64\}$ at fixed $h = 64$.
For every $(h, s)$, the standard baseline takes $M = 4s$ experts, and the low-rank counterpart takes the largest~$M'$ that matches the standard active FLOPs at $r=16$, following Proposition~\ref{thm:optimal_scaling}.

\paragraph{Choice of $s'$.}
Under our convention $C_e = C_r$, Proposition~\ref{thm:optimal_scaling} prescribes the optimal MoRE expert width as $s'^{*} = s - r/(2k) = s - 2$ at $r=16, k=4$. For $s \in \{8, 16, 32, 64\}$ this gives $s'^{*} \in \{6, 14, 30, 62\}$, none of which are multiples of~$8$ -- the kernel-friendly width for ScatterMoE. We round each $s'$ to the nearest multiple of~$8$, which collapses to $s$ itself across the entire sweep, and we use the prescribed $M'^{*} = h(2ks - r)/(2r) = h(s-2)/4$ unchanged. The rounding inflates the total MLP parameter count by a factor $s/(s-2)$, which is largest at the smallest scale ($8/6 \approx 1.33\times$ at $s=8$) and shrinks to $\approx 1.03\times$ at $s=64$, it does not change the qualitative trends of Figure~\ref{fig:phonebook_capacity}.

\paragraph{Training and learning-rate selection.}
Each configuration is trained with batch size $1{,}024$ under a constant-with-warmup schedule (warmup ratio $0.05$, weight decay $0.01$, $\beta_2 = 0.95$), with early stopping once exact-string accuracy on the digit numbers crosses $0.9$. The learning rate is selected per configuration by a short calibration phase over $\{1\!\times\!10^{-4},\, 3\!\times\!10^{-4},\, 1\!\times\!10^{-3}\}$, picking the value with the highest validation character accuracy on a brief preliminary run; the calibrated rate is then used for the full $1{,}500$-epoch run.

\paragraph{Per-dot configurations.}
Tables~\ref{tab:phonebook_configs_h} and~\ref{tab:phonebook_configs_s} list,
for every dot in Figure~\ref{fig:phonebook_capacity}, the full $(M, k, h, s)$ and
$(M, k, h, s, r)$ configuration and the maximum memorizable
phonebook size at the calibrated learning rate. Table~\ref{tab:phonebook_configs_h}
covers the vary-$h$ sweep at $s=8$, and Table~\ref{tab:phonebook_configs_s}
covers the vary-$s$ sweep at $h=64$; the $(h{=}64,\,s{=}8)$ corner appears
in both as the shared anchor between the two sweeps. 

The number of experts $M$ are small for Table~\ref{tab:phonebook_configs_h} as the optimal $M = ks$ in our setup, when $s = 8$, $M = ks = 32$. 

\begin{table}[h]
\centering
\small
\begin{tabular}{lrrrrrr}
\toprule
Configuration & $M / M'$ & $k$ & $h$ & $s$ & $r$ & Max memorizable \\
\midrule
MoE  & 32  & 4 & 64  & 8 & --- & 10K  \\
MoE  & 32  & 4 & 128 & 8 & --- & 30K  \\
MoE  & 32  & 4 & 256 & 8 & --- & 100K \\
MoE  & 32  & 4 & 512 & 8 & --- & 200K \\
\midrule
MoRE & 96  & 4 & 64  & 8 & 16 & 30K  \\
MoRE & 192 & 4 & 128 & 8 & 16 & 100K \\
MoRE & 384 & 4 & 256 & 8 & 16 & 300K \\
MoRE & 768 & 4 & 512 & 8 & 16 & 1.2M \\
\bottomrule
\end{tabular}

\caption{Phonebook per-dot configurations for the \textbf{vary-$h$ sweep} at
$s=8$. $M$ ($M'$ for MoRE) is the number of experts; $k$ active per token;
$h$ hidden dim; $s$ expert intermediate dim; $r$ router rank (MoRE only).
``Max memorizable'' is the largest phonebook size at which the calibrated
training run reached the early-stopping threshold.}
\label{tab:phonebook_configs_h}
\end{table}

\begin{table}[h]
\centering
\small
\begin{tabular}{lrrrrrr}
\toprule
Configuration & $M / M'$ & $k$ & $h$ & $s$ & $r$ & Max memorizable \\
\midrule
MoE  & 32  & 4 & 64 & 8  & --- & 10K  \\
MoE  & 64  & 4 & 64 & 16 & --- & 30K  \\
MoE  & 128 & 4 & 64 & 32 & --- & 150K \\
MoE  & 256 & 4 & 64 & 64 & --- & 500K \\
\midrule
MoRE & 96  & 4 & 64 & 8  & 16 & 30K  \\
MoRE & 224 & 4 & 64 & 16 & 16 & 100K \\
MoRE & 480 & 4 & 64 & 32 & 16 & 400K \\
MoRE & 992 & 4 & 64 & 64 & 16 & 1.2M \\
\bottomrule
\end{tabular}

\caption{Phonebook per-dot configurations for the \textbf{vary-$s$ sweep} at
$h=64$. Columns as in Table~\ref{tab:phonebook_configs_h}. The $(h{=}64,\,s{=}8)$
row is shared with Table~\ref{tab:phonebook_configs_h}.}
\label{tab:phonebook_configs_s}
\end{table}

\subsection{Real-world Pretraining}
\label{appendix:fineweb_details}

\paragraph{Data mixture.}
We roughly follow the natural-language pre-training recipe of \citet[Appendix B.2]{jelassi2025mixtureparrotsexpertsimprove}, which consists of a $\approx 60.4$\,B token mixture: (i)~FineWeb-edu deduplicated~\citep{huggingfacefw_2024} (37\,B); (ii)~Cosmopedia synthetic textbooks~\citep{benallal2024cosmopedia} (14\,B); (iii)~Wikipedia (4\,B unique, looped $3\times$ for 12\,B); and (iv)~the training splits of 12 downstream knowledge-intensive and commonsense benchmarks (TriviaQA \citep{joshi2017triviaqalargescaledistantly}, Natural Questions \citep{kwiatkowski2019natural}, HotpotQA \citep{yang2018hotpotqa}, WebQuestions \citep{berant2013semantic}, ComplexWebQuestions \citep{talmor2018web}, ARC-Easy \citep{clark2018thinkyousolvedquestion}, HellaSwag \citep{zellers2019hellaswagcanmachinereally}, OpenBookQA \citep{mihaylov2018canask}, PIQA \citep{bisk2020piqa}, SciQ \citep{welbl2017crowdsourcingmultiplechoice}, SocialIQA \citep{sap2019socialiqa}, and Winogrande \citep{sakaguchi2020winogrande}), each looped $3\times$ to familiarize the model with the evaluation format \citep{allenzhu2023physics}.

Every source uses only its \texttt{train} split. We include the benchmark training splits in line with \citet{jelassi2025mixtureparrotsexpertsimprove}, since \citet{allenzhu2023physics} show that mixed training is required for successful knowledge extraction. We sample disjoint training (40\,B tokens) and 5{,}000-sequence validation sets uniformly from this pool, following the convention of \citet{jelassi2025mixtureparrotsexpertsimprove}. We conduct zero-shot evaluations only, since we did not include few-shot demonstrations in training.

\paragraph{Tokenization.}
We use the GPT-2 byte-pair-encoding tokenizer~\citep{radford2019language} with vocabulary size $V = 50{,}257$. Documents from each mixture source are concatenated and chunked to a fixed sequence length of $T = 1{,}024$ tokens, then materialized as a single \texttt{uint16} memory-mapped NumPy file for efficient reuse across runs.

\paragraph{Architecture.}
All models are $L=12$-layer Transformers with top-$k$ routing at $k=4$, head dimension~$64$, and untied input/output embeddings. We compare three reference points $(h, s) \in \{(1024, 32),\, (512, 64),\, (512, 128)\}$. For the first two, the standard baseline takes $M = 4s$ experts, and the low-rank counterpart fixes router rank $r = 64$ and takes the largest~$M'$ that matches the standard active FLOPs, following Proposition~\ref{thm:optimal_scaling}. The third point $(512, 128)$ is constructed by scaling both $s$ and $s'$ of $(512, 64)$ by~$2$ while keeping $M = 256$ and $M' = 1792$ fixed. Full configurations are listed in Table~\ref{tab:fineweb_p3_configs}.

\paragraph{Training and learning-rate selection.}
Each configuration is trained for $\approx 40$\,B tokens, with sequence length $1{,}024$ on $4$ GPUs in DDP, using AdamW~\citep{loshchilov2019decoupled} ($\beta_1 = 0.9$, $\beta_2 = 0.95$, $\epsilon = 10^{-8}$), weight decay $0.01$, max grad norm $1.0$, and a cosine learning-rate schedule with warmup ratio $0.05$, in bf16 precision. The peak (global) learning rate is selected per configuration from $\{5\!\times\!10^{-4},\, 1\!\times\!10^{-3},\, 3\!\times\!10^{-3},\, 5\!\times\!10^{-3}\}$ by best validation perplexity on a 2B-token run, which selects $1\!\times\!10^{-3}$ for every configuration. We set the router learning rate, for both $R_1$ and $R_2$ of the low-rank router, equal to the global learning rate, as we find that validation loss is not very sensitive to the router learning rate (Figure~\ref{fig:rlr_switch}).

\begin{figure}[h]
    \centering
    \includegraphics[width=\linewidth]{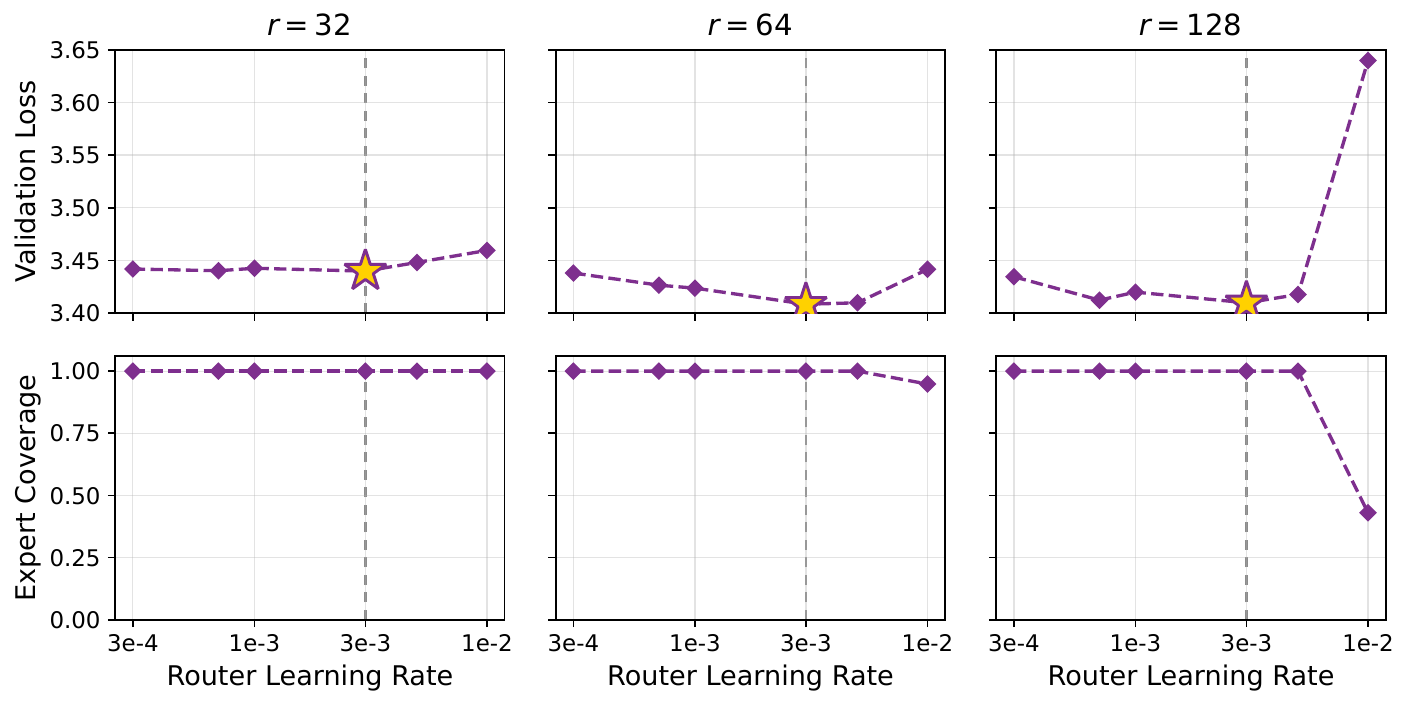}
    \caption{\textbf{Expert coverage is insensitive to the router learning rate, and the
    optimal router learning rate is the global learning rate at every rank. We therefore
    do not tune the router learning rate separately, and set it equal to the global
    learning rate.} Switch balancing loss, $M = 1792$, $s = 56$, $h = 512$, $k = 4$,
    $L = 12$, trained on 500M tokens of the mixture corpus at global learning rate
    $\eta = 3\times10^{-3}$. The dashed vertical line in every panel marks that global
    rate, and stars mark each rank's best cell. \textit{Top:} the optimum coincides with
    the global learning rate at all three ranks. The $5\times10^{-4}$ cell is omitted at
    every rank because its apparent advantage at this budget does not survive to 2B
    tokens. \textit{Bottom:} coverage holds at $1.000$ at every rank up to
    $5\times10^{-3}$, and at $r = 32$ across the entire grid. It falls only at $10^{-2}$,
    and only for $r = 64$ and $r = 128$.}
    \label{fig:rlr_switch}
\end{figure}

\begin{table}[t]
\centering
\begin{tabular}{llccccc}
\toprule
\textbf{Scale} & \textbf{Model} & $M$ & $k$ & $h$ & $s$ & $r$ \\
\midrule
\multirow{2}{*}{$s$=32, $h$=1024} & Standard          & 128  & 4 & 1024 & 32 & -- \\
                                  & Low-rank ($r$=64) & 1536 & 4 & 1024 & 24 & 64  \\
\midrule
\multirow{2}{*}{$s$=64, $h$=512}  & Standard          & 256  & 4 & 512  & 64 & -- \\
                                  & Low-rank ($r$=64) & 1792 & 4 & 512  & 56 & 64  \\
\midrule
\multirow{2}{*}{$s$=128, $h$=512} & Standard          & 256  & 4 & 512  & 128 & -- \\
                                  & Low-rank ($r$=64) & 1792 & 4 & 512  & 112 & 64  \\
\bottomrule
\end{tabular}
\vspace{6pt}
\caption{Architectural configurations for each variant in Table~\ref{tab:fineweb_p3_results}. $M$: number of experts. $k$: experts active per token. $h$: model hidden dimension. $s$: expert FFN hidden dimension. $r$: router rank (Low-rank only). All variants use the GPT-2 tokenizer. Within each Scale group, the Low-rank variant scales $M$ up while reducing $s$ slightly so that active FLOPs approximately match the Standard reference; the $(s{=}128, h{=}512)$ group is the exception, obtained by scaling $s$ and $s'$ of $(s{=}64, h{=}512)$ by~$2$ with $M$ and $M'$ fixed.}
\label{tab:fineweb_p3_configs}
\end{table}

\subsection{FLOPs Calculation and Total parameter accounting}
\label{appendix:flops_param_accounting}

For consistency with Proposition~\ref{thm:optimal_scaling}, we report \emph{active FLOPs per token} (rather than active parameters) in Table~\ref{tab:fineweb_p3_results}, since the matched-active configuration grid is selected to fix this quantity. We use the standard convention that one fused multiply--add counts as $2$~FLOPs. All $h{=}512$ models use $n_h{=}8$ attention heads; $h{=}1024$ uses $n_h{=}16$, $n_{kv}{=}4$. Head dimension is $d{=}h/n_h{=}64$ throughout. Vocabulary size is $V{=}50{,}257$ (GPT-2 BPE), and input/output embeddings are untied.

\paragraph{Active MoE FLOPs in Table~\ref{tab:fineweb_p3_results}.}
Per token and per MoE layer, the active FLOPs in training (forward and backward) are $18khs + 14Mh$ for MoE and $18khs' + 14(M'+h)r$ for MoRE. At inference (forward only), they are $6khs + 6Mh$ for MoE and $6khs' + 6(M'+h)r$ for MoRE. The constants are derived below. In the computation of the configurations, we treat $C_e \approx C_r$. Table~\ref{tab:fineweb_p3_results} reports the inference FLOPs, and Table~\ref{tab:fineweb_flops_train_inf} lists both the training and the inference FLOPs.

\begin{table}[h]
\centering
\begin{tabular}{llrr}
\toprule
$(s, h)$ & Model & Training FLOPs & Inference FLOPs \\
\midrule
\multirow{2}{*}{(32, 1024)} & MoE-128   & 4.19M & 1.57M \\
                            & MoRE-1536 & 4.06M & 1.57M \\
\midrule
\multirow{2}{*}{(64, 512)}  & MoE-256   & 4.19M & 1.57M \\
                            & MoRE-1792 & 4.13M & 1.57M \\
\midrule
\multirow{2}{*}{(128, 512)} & MoE-256   & 6.55M & 2.36M \\
                            & MoRE-1792 & 6.19M & 2.26M \\
\midrule
\multirow{2}{*}{(64, 512)}  & MoE-1024  & 9.70M & 3.93M \\
                            & MoRE-8192 & 9.72M & 3.98M \\
\bottomrule
\end{tabular}
\caption{Active MoE FLOPs per token per layer for the configurations in Table~\ref{tab:fineweb_p3_results}. Training FLOPs are $18khs + 14Mh$ for MoE and $18khs' + 14(M'+h)r$ for MoRE. Inference FLOPs are $6khs + 6Mh$ and $6khs' + 6(M'+h)r$.}
\label{tab:fineweb_flops_train_inf}
\end{table}

\paragraph{FLOPs constants in Proposition~\ref{thm:optimal_scaling}.}
The forward-pass formula above tabulates the empirical FLOPs we report. For the analytical scaling argument, Proposition~\ref{thm:optimal_scaling} writes the active-FLOP budget as $\tau = C_e \cdot k h s + C_r \cdot M h$, with $C_r r (h + M)$ in place of $C_r M h$ for the rank-$r$ low-rank router, abstracting per-operation FLOPs into two scalar constants. We follow the full-training (forward + backward) accounting of \citet[Appendix E]{krajewski2024scalinglawsfinegrainedmixture}, which charges $6$ FLOPs per pair of (active parameter, processed token) for each linear projection (forward $2$ + backward input-grad $2$ + backward weight-grad $2$). A SwiGLU expert performs three projections, giving $C_e = 3 \times 6 = 18$. The standard router charges $6$ for its linear projection plus $2$ FLOPs per pair for each of four token-routing operations (forward and backward dispatches of inputs and outputs), giving $C_r = 6 + 4 \times 2 = 14$ as in \citet{krajewski2024scalinglawsfinegrainedmixture}. The forward-only ratio is similar: $C_e^{\text{fwd}}/C_r^{\text{fwd}} = 6/6 = 1$, compared to $C_e/C_r = 18/14 \approx 1.29$ for full training. We set $C_e = C_r$, which simplifies the configurations (for example, the optimal MoE then has $M = ks$), and use the same $C_r$ for the low-rank router. Only the ratio $C_e/C_r$ enters the matching of an MoE and a MoRE configuration. The token-routing operations cost $O(kh)$ per token in ScatterMoE, so a stricter accounting charges each router only for its matrix multiplications ($C_r = 2$ at inference). Under this accounting, MoRE uses $4.5\%$ to $12.5\%$ fewer active FLOPs than its MoE counterpart in every pair of Table~\ref{tab:fineweb_p3_results}. The $\Theta(h/r)$ ratio in Proposition~\ref{thm:optimal_scaling} depends only on the leading $C_r M h$ term, so it is insensitive to the precise value of $C_r$ within the defensible range $[6, 20]$ noted by \citet{krajewski2024scalinglawsfinegrainedmixture}.

\paragraph{Total parameter count.}
Total parameters include the embedding matrix, all $M$ experts (active and inactive), the router, attention, layer norms, and the unembedding head:
\begin{equation}
P_{\text{total}} \;=\; \underbrace{2\,V\,h}_{\text{embed} + \text{LM head}} \;+\; \underbrace{h}_{\text{final LN}} \;+\; L\cdot\Big[
\underbrace{\boldsymbol{P_{\text{router}}}}_{\text{router}}
\;+\; \underbrace{\boldsymbol{3\,M\,h\,s}}_{\text{all experts}}
\;+\; \underbrace{2h^2 + 2\,h\,n_{kv}\,d}_{\text{attention}}
\;+\; \underbrace{2\,h}_{\text{2 RMSNorm}}
\Big],
\label{eq:total_params}
\end{equation}
where $P_{\text{router}} = h\,M$ for the standard router and $r\,(h+M)$ for the low-rank variant. (RMSNorm has no bias, so each contributes $h$ scale parameters.) The two bold terms are the only ones that depend on the expert configuration: once $h$ is fixed, the embedding, attention, and norm terms are also fixed, and the parameter count varies only through the router and expert terms, which are set by $M$ and $s$. The depth-independent embedding and LM head term is amortized over layers, so as the depth $L$ grows the router and the experts account for a larger fraction of the total parameters.

\section{Initialization}
\label{appendix:initialization}

The Switch Transformer~\citep{fedus2022switch} sets the router weights elementwise to $\mathcal{N}(0, \sigma^2)$ with $\sigma = 0.02$, and prior work has reported that the precise router initialization variance has limited impact on standard-router performance. The natural default for the low-rank router carries the same per-entry standard deviation over to both factors, i.e.\ each entry of $R_1 \in \mathbb{R}^{M \times r}$ and $R_2 \in \mathbb{R}^{r \times h}$ is drawn i.i.d.\ from $\mathcal{N}(0, \sigma^2)$ with the same $\sigma = 0.02$. We refer to this as the \emph{default} (or \emph{standard}) initialization. 

Empirically, however, we find that the default initialization noticeably degrades validation perplexity and shifts the optimal router learning rate significantly. We instead choose $\sigma_1, \sigma_2$ so that the variance of the router logits matches the standard baseline. Lemma~\ref{lemma:variance_matching} makes this matching condition precise.

\subsection{Setup}
We treat $x \in \mathbb{R}^h$ as a fixed nonzero input and condition on it throughout. The standard router $R \in \mathbb{R}^{M \times h}$ has entries i.i.d.\ $\mathcal{N}(0, \sigma^2)$, and the low-rank factors $R_1 \in \mathbb{R}^{M \times r}$, $R_2 \in \mathbb{R}^{r \times h}$ have independent entries i.i.d.\ $\mathcal{N}(0, \sigma_1^2)$ and $\mathcal{N}(0, \sigma_2^2)$ respectively, with $R_1$ independent of $R_2$. We denote the corresponding learning rates by $\eta$ for the standard router, and $\eta_1, \eta_2$ for $R_1, R_2$. Our goal is to derive $\sigma_1, \sigma_2$ (and, in future work, $\eta_1, \eta_2$) by matching the variance and the logit update of the standard router.

\subsection{Matching variance}

\begin{lemma}[Variance matching]
\label{lemma:variance_matching}
Under the setup above, for any fixed $x \neq 0$, $\mathrm{Var}\!\left([R_1 R_2 x]_k\right) = \mathrm{Var}\!\left([Rx]_k\right)$ for every $k \in [M]$ if and only if
\[
\sigma_1 \sigma_2 \;=\; \frac{\sigma}{\sqrt{r}}.
\]
The matching condition does not depend on $x$.
\end{lemma}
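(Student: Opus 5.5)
I will compute both variances in closed form, conditional on the fixed input $x$, and then compare them.

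For the standard router, $[Rx]_k = \sum_{j=1}^h R_{kj} x_j$ is a linear combination of i.i.d.\ $\mathcal{N}(0,\sigma^2)$ entries. Hence it is centered Gaussian with $\mathrm{Var}([Rx]_k) = \sigma^2 \|x\|_2^2$.

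For the low-rank router, I will write $[R_1 R_2 x]_k = \sum_{\ell=1}^r (R_1)_{k\ell}\, y_\ell$, where $y := R_2 x \in \mathbb{R}^r$. Its coordinates $y_\ell = \sum_j (R_2)_{\ell j} x_j$ are i.i.d.\ $\mathcal{N}(0, \sigma_2^2 \|x\|_2^2)$, since distinct rows of $R_2$ are independent. Because $R_1$ is independent of $R_2$ and has mean zero, the mean of $[R_1R_2x]_k$ is zero. The variance then equals the second moment:
\[
\mathbb{E}\Bigl[\Bigl(\sum_{\ell} (R_1)_{k\ell} y_\ell\Bigr)^2\Bigr] = \sum_{\ell,\ell'} \mathbb{E}[(R_1)_{k\ell}(R_1)_{k\ell'}]\,\mathbb{E}[y_\ell y_{\ell'}].
\]
This factorization uses the independence of $R_1$ and $R_2$. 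The cross terms with $\ell\neq\ell'$ vanish, because the entries of $R_1$ are independent and centered. What remains is $\sum_{\ell=1}^r \sigma_1^2 \sigma_2^2 \|x\|_2^2 = r\,\sigma_1^2\sigma_2^2\|x\|_2^2$. Note that this variable is not Gaussian, but only its variance is needed.

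Equating the two expressions gives $r\sigma_1^2\sigma_2^2\|x\|_2^2 = \sigma^2\|x\|_2^2$. Since $x\neq 0$, I can divide by $\|x\|_2^2>0$, which gives $\sigma_1^2\sigma_2^2 = \sigma^2/r$. Taking nonnegative square roots (standard deviations are nonnegative) gives $\sigma_1\sigma_2 = \sigma/\sqrt r$. Each step is reversible, so this establishes the "if and only if".

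Both variances are proportional to the same factor $\|x\|_2^2$, and neither depends on $k$. So the condition is independent of $x$, and it holds simultaneously for every $k\in[M]$. There is no real obstacle in this argument. The only point needing care is justifying the factorization of the expectation and the vanishing cross terms, which I will do by conditioning on $R_2$: conditionally on $R_2$, the quantity $[R_1R_2x]_k$ is $\mathcal{N}(0,\sigma_1^2\|y\|_2^2)$, and $\mathbb{E}\|y\|_2^2 = r\sigma_2^2\|x\|_2^2$. By the law of total variance, the variance is therefore $\sigma_1^2 r\sigma_2^2\|x\|_2^2$.
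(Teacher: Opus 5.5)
Your proposal is correct and follows the paper's proof: compute $\mathrm{Var}([Rx]_k)=\sigma^2\|x\|_2^2$ and $\mathrm{Var}([R_1R_2x]_k)=r\sigma_1^2\sigma_2^2\|x\|_2^2$, equate them, and divide by $\|x\|_2^2>0$. The differences are cosmetic. You kill the cross terms using the centered, independent entries of $R_1$ (or by conditioning on $R_2$), whereas the paper argues that the $r$ summands are mutually independent. You also state explicitly the nonnegativity needed to take square roots in the ``if and only if'', which the paper leaves implicit.
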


\begin{proof}
All variances below are conditional on the fixed input $x$.

For the standard router, $[Rx]_k = \sum_{j=1}^{h} R_{kj}\, x_j$ is a linear combination of i.i.d.\ $\mathcal{N}(0, \sigma^2)$ entries with deterministic coefficients $x_j$, so
\[
\mathrm{Var}\bigl([Rx]_k\bigr) \;=\; \sigma^2 \|x\|_2^2.
\]

For the low-rank router, write $z = R_2 x \in \mathbb{R}^r$, so $[R_1 R_2 x]_k = \sum_{i=1}^{r} (R_1)_{ki}\, z_i$. The same linear-combination argument applied to $R_2$ gives $\mathrm{Var}(z_i) = \sigma_2^2 \|x\|_2^2$. Since $R_1$ is independent of $R_2$ and zero-mean, each summand satisfies
\[
\mathrm{Var}\bigl((R_1)_{ki}\, z_i\bigr)
\;=\; \mathbb{E}\bigl[(R_1)_{ki}^2\bigr]\,\mathbb{E}\bigl[z_i^2\bigr]
\;=\; \sigma_1^2\, \sigma_2^2\, \|x\|_2^2,
\]
and the $r$ summands are mutually independent because they share no entry of $R_1$. Summing,
\[
\mathrm{Var}\bigl([R_1 R_2 x]_k\bigr) \;=\; r\, \sigma_1^2\, \sigma_2^2\, \|x\|_2^2.
\]
Equating the two variances gives $r\, \sigma_1^2\, \sigma_2^2\, \|x\|_2^2 = \sigma^2 \|x\|_2^2$, and since $\|x\|_2^2 > 0$ this is equivalent to $\sigma_1\sigma_2 = \sigma/\sqrt{r}$, independent of $x$.
\end{proof}

In all the experiments in this paper, we set $\sigma_1 = \sigma_2$ and $\eta_1 = \eta_2$. Combined with Lemma~\ref{lemma:variance_matching}, this gives
\[
\sigma_1 \;=\; \sigma_2 \;=\; \frac{\sqrt{\sigma}}{r^{1/4}}, \qquad \sigma = 0.02.
\]

\begin{figure}[h]
    \centering
    \includegraphics[width=\linewidth]{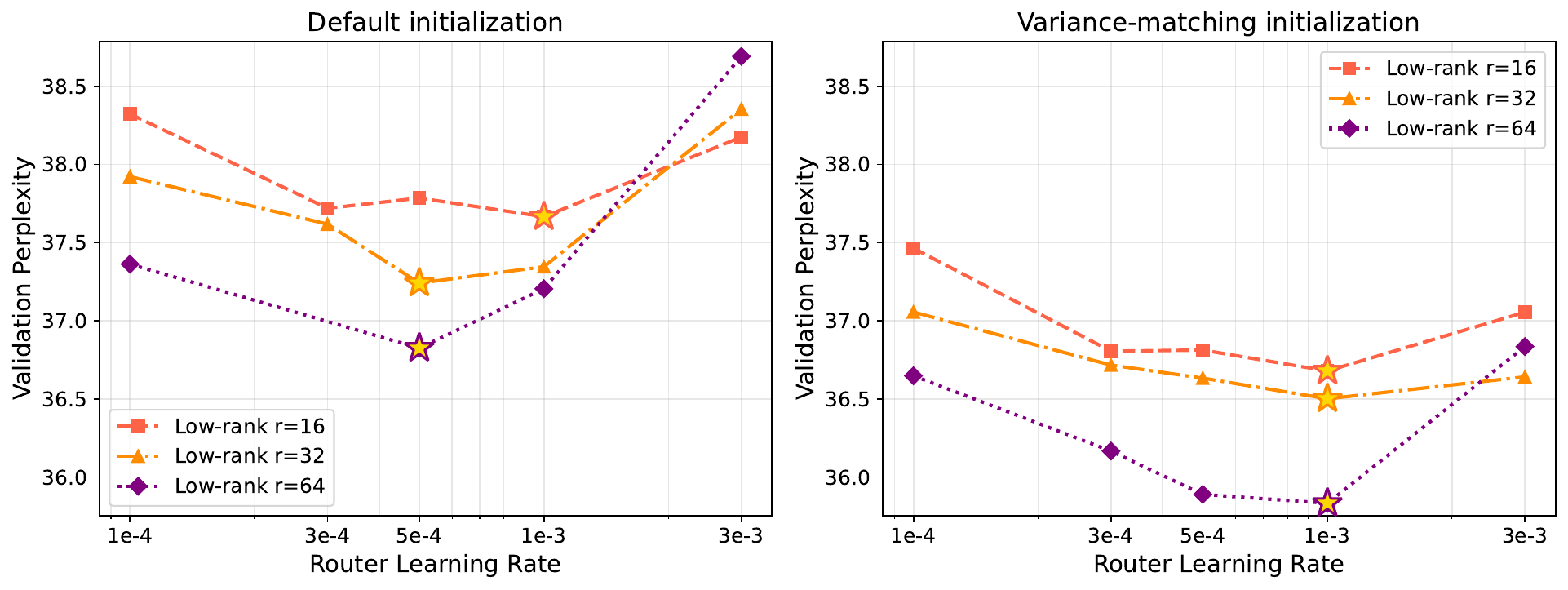}
    \caption{Router learning-rate sweep on FineWeb-edu at the optimal global learning rate $\eta = 10^{-3}$, comparing the default per-entry initialization (\textit{left}) against the variance-matching initialization of Lemma~\ref{lemma:variance_matching} (\textit{right}) for low-rank ranks $r \in \{16, 32, 64\}$. Each curve trains for $400$M FineWeb-edu tokens. The variance-matching initialization achieves lower validation perplexity across ranks, and also has stabler optimum across this range, although it might also shift leftwards when $r$ is larger.}
    \label{fig:initialization_ablation}
\end{figure}

\subsection{Discussion}

Figure~\ref{fig:initialization_ablation} presents our empirical comparison. We hypothesize that a theoretical analysis analogous to those developed for LoRA~\citep{hu2021loralowrankadaptationlarge} and LoRA+~\citep{hayou2024loraplus,thinkingmachines2025loraregret} could provide a principled approach to selecting both the initialization and learning rates for routers in the low-rank setting, potentially eliminating the need to tune $\eta_1$ and $\eta_2$ individually for each rank. Although a key distinction from LoRA is that LoRA initializes its output to zero to preserve finetuning performance, whereas initializing the router's output to zero is undesirable in our context: doing so would impede the ability to break ties when selecting the top $k$ experts.

In a complementary direction, \citet{jiang2026hyperparametertransfermixtureofexpertlayers} derive a hyperparameter-transfer parameterization for MoE routing weights via a dynamical mean-field analysis; we believe their framework can be adapted to the low-rank router setting as well.

\section{Efficiency}
\label{appendix:wallclock}

For the MoE architecture, we build upon ScatterMoE~\citep{tan2024scatteredmixtureofexpertsimplementation} and implement a new fused Triton kernel for the router (refer to Figure~\ref{fig:flowchart} for the detailed operations in the forward pass). We believe that further speedups are achievable on specific hardware by applying more careful optimization as in~\cite{guo2025sonicmoeacceleratingmoeio}.

\paragraph{Why we build on ScatterMoE.}
Figure~\ref{fig:flowchart} illustrates the operations in one MoE layer forward pass. Beyond the router, the expert MLP involves token dispatch (sorting by expert assignment), two grouped linear transformations with a SwiGLU activation, and a scatter-back step that combines expert outputs. Each of these steps is a potential source of memory traffic. For example, a naive grouped MoE implementation pays $O(Nkh)$ memory traffic to materialize the gathered tokens (sorted by expert assignment) before the per-expert grouped GEMM, and pays the same again on the scatter-back. 

ScatterMoE~\citep{tan2024scatteredmixtureofexpertsimplementation} fuses scatter-gather into the per-expert grouped matmul, removing both round-trips and producing a kernel whose only HBM traffic is the input $X$, the weight stack, and the output $Y$. With this dispatch overhead eliminated, the standard router's $O(NhM)$ matmul is the only remaining cost that grows linearly with $M$. Our low-rank router is designed precisely to address this last bottleneck, while reusing the rest of ScatterMoE's machinery unchanged.

\subsection{GPU Memory Hierarchy and Related Analysis}
\label{appendix:gpu_memory}

\paragraph{Bandwidth tiers.} Modern GPU architectures expose a multi-level memory hierarchy with sharply varying bandwidth. Off-chip high-bandwidth memory (HBM) provides the largest capacity but the lowest bandwidth. On-chip, the L2 cache offers an intermediate level, while each streaming multiprocessor (SM) contains a small pool of SRAM (registers and shared memory) with the highest bandwidth. Any data that does not fit in on-chip memory must be materialized to HBM, incurring an order-of-magnitude bandwidth penalty.

Our fusing avoids writing the full $(N, M)$ logits and the intermediate tensors of low-rank router to HBM, and computes the Top-$K$ operation in SRAM, which is much faster than the HBM round-trip.

\begin{figure}[t]
    \centering
    \begin{tikzpicture}[
        node distance=0.55cm,
        block/.style={draw, rounded corners, minimum width=8cm, minimum height=0.7cm,
                      align=center, font=\small},
        routerblock/.style={draw, rounded corners, minimum width=3.8cm, minimum height=1.3cm,
                           align=center, font=\small, inner sep=3pt},
        arrow/.style={-{Stealth[length=2.5mm]}, thick},
        lbl/.style={font=\scriptsize\itshape, text=gray!70!black},
        grp/.style={font=\footnotesize\bfseries, anchor=east, inner sep=2pt},
    ]
    \node[block, fill=gray!10] (input) {Input $X \in \mathbb{R}^{N \times h}$};

    \node[routerblock, fill=red!12, below=0.75cm of input, xshift=-2.35cm] (standard) {
        \textbf{Standard Router}\\[-2pt]
        {\scriptsize $(N,h){\times}(h,M)$}\\[-2pt]
        {\scriptsize $+$ TopK $+$ Softmax $+$ Aux}
    };
    \node[routerblock, fill=blue!12, below=0.75cm of input, xshift=2.35cm] (lr) {
        \textbf{Low-Rank Router}\\[-2pt]
        {\scriptsize $(N,h){\times}(h,r)$}\\[-2pt]
        {\scriptsize $(N,r){\times}(r,M)$}\\[-2pt]
        {\scriptsize $+$ TopK $+$ Softmax $+$ Aux}
    };

    \coordinate (routermid) at ($(standard.south)!0.5!(lr.south)$);
    \node[block, fill=purple!10, below=0.85cm of routermid] (sort) {
        Sort $+$ Count \quad {\scriptsize $(N,k) \to (Nk)$ sorted indices $+$ offsets}
    };

    \node[block, fill=orange!10, below=0.55cm of sort] (s2s1) {
        Expert Linear 1st \quad {\scriptsize $(N,h) \to (Nk, 2s)$}
    };
    \node[block, fill=yellow!15, below=0.5cm of s2s1] (glu) {
        SwiGLU \quad {\scriptsize $(Nk, 2s) \to (Nk, s)$}
    };
    \node[block, fill=orange!10, below=0.5cm of glu] (s2s2) {
        Expert Linear 2nd \quad {\scriptsize $(Nk, s) \to (N, h)$}
    };
    \node[block, fill=gray!10, below=0.5cm of s2s2] (output) {Output $Y \in \mathbb{R}^{N \times h}$};

    \coordinate (inbus) at ([yshift=-0.2cm]input.south);
    \coordinate (rtrbus) at ($(routermid)!0.35!(sort.north)$);
    \coordinate (busL) at (standard.south |- rtrbus);
    \coordinate (busR) at (lr.south |- rtrbus);
    \draw[arrow] (input.south) -- (inbus);
    \draw[arrow] (inbus) -| (standard.north);
    \draw[arrow] (inbus) -| (lr.north);
    \draw[arrow] (standard.south) -- (busL);
    \draw[arrow] (lr.south) -- (busR);
    \draw[thick] (busL) -- (busR);
    \draw[arrow] (rtrbus) -- (sort.north);
    \node[lbl, anchor=west] at ([xshift=3pt]$(rtrbus)!0.55!(sort.north)$) {weights $+$ indices $(N,k)$};
    \draw[arrow] (sort) -- (s2s1);
    \draw[arrow] (s2s1) -- (glu);
    \draw[arrow] (glu) -- (s2s2);
    \draw[arrow] (s2s2) -- (output);

    \def\bracex{-4.55}
    \draw[decorate, decoration={brace, amplitude=3.5pt, mirror}, thick, red!55!black]
        (\bracex,0 |- standard.north) -- (\bracex,0 |- sort.south)
        node[midway, grp, text=red!70!black, xshift=-5pt] {Routing};
    \draw[decorate, decoration={brace, amplitude=3.5pt, mirror}, thick, orange!55!black]
        (\bracex,0 |- s2s1.north) -- (\bracex,0 |- s2s2.south)
        node[midway, grp, text=orange!70!black, xshift=-5pt] {Expert MLP};
    \end{tikzpicture}
    \caption{Operations in the forward pass of one MoE layer. The first expert linear projects to $2s$ because SwiGLU uses separate gate and up projections: $\mathrm{SiLU}(Xw_{\mathrm{gate}}) \odot Xw_{\mathrm{up}}$. Expert linear layers use fused scatter-gather kernels~\citep{tan2024scatteredmixtureofexpertsimplementation} that eliminate $O(Nkh)$ memory traffic from token dispatch, making the standard router the dominant remaining overhead and motivating the low-rank factorization.}
    \label{fig:flowchart}
\end{figure}

\paragraph{Comparison to other fused-MoE kernels.}
Existing efficient MoE kernels,  ScatterMoE~\citep{tan2024scatteredmixtureofexpertsimplementation}, MegaBlocks~\citep{gale2023megablocks},  MoMoE~\citep{costin2025momoe} and SonicMoE~\citep{guo2025sonicmoeacceleratingmoeio}, do top-$k$ computation on the materialized $(N, M)$ logits on the HBM and have to pay expensive read and write costs. Our kernel fuses the router with top-$k$ and do sequential top-$k$ operation on each tile, this avoid expensive memory operations between HBM and SRAM, but the cost would also scale with $k$, making large $k$ infeasible in our setup.

\subsection{Top-$K$ Algorithm Analysis}
\label{appendix:topk_analysis}

Our implementation is faster than \texttt{torch.topk} when $K < 16$ and is common across frontier model choices (Table~\ref{table:frontier_model_comparison}). But it is slower than \texttt{torch.topk} for $K \geq 16$.  For our fused kernel, the per-token cost is $\Theta(K \cdot M) + \Theta(K^2 \cdot M / \texttt{BLOCK\_M})$.

For each of the $M / \texttt{BLOCK\_M}$ tiles, we run $K$ iterations to find the $K$ largest values and insert them to the $K$-element running buffer. The $\Theta(K \cdot M)$ term aggregates the $K$ iterations of max/argmax reductions over the $\texttt{BLOCK\_M}$ new candidates, across $M / \texttt{BLOCK\_M}$ tiles. The $\Theta(K^2 \cdot M / \texttt{BLOCK\_M})$ term is for computing the minimum over the $K$-element buffer that happen inside the same $K$ iterations.

We suspect that a more carefully engineered fused topk along the lines of SonicMoE~\citep{guo2025sonicmoeacceleratingmoeio} (Section~D and Appendix~F.4), could outperform \texttt{torch.topk} at larger $K$ and close this gap, which we leave to future work.

\subsection{Expert Parallelism}
\label{appendix:ep}
We explained how MoRE can be implemented efficiently on a single GPU, here we describe how we implement it under expert parallelism, a common setting in large-scale MoE training. We consider there are $E$ GPUs where each GPU contains $M/E$ experts, for simplicity we assume all GPUs are inside one node, and we defer inter-node communication analysis to future work.

Following popular design choices~\citep{lepikhin2020gshard, fedus2022switch, yan2026scalabletrainingmixtureofexpertsmodels}, we replicate non-expert components including router in all GPUs, which our fusing technique still carries over as every GPU has a local router. We could also shard the router across GPUs, i.e.\ each GPU holds $M/E$ experts and a $1/E$ fraction of the router, by computing a top-$k$ locally inside each GPU and then a global top-$k$ across GPUs, which is a hierarchical top-$k$ which we leave to future work. It has the same scaling argument as in the main paper.

\begin{figure}[h]
\centering
\begin{tikzpicture}[
  font=\scriptsize,
  b/.style={draw, rounded corners=2pt, minimum width=4.3cm, minimum height=0.46cm,
            align=center, inner sep=2pt},
  wire/.style={draw=ForestGreen, fill=ForestGreen!10, rounded corners=2pt,
               minimum width=4.3cm, minimum height=0.46cm, align=center, inner sep=2pt},
  d/.style={-{Latex[length=1.4mm,width=1.2mm]}, semithick}
]
\def\ca{-2.8}
\def\cb{2.8}
\node[font=\scriptsize\bfseries, align=center, text width=4.6cm] at (\ca,0.75)
  {(a) expert parallelism only\\ batch replicated on every GPU};
\node[font=\scriptsize\bfseries, align=center, text width=4.6cm] at (\cb,0.75)
  {(b) $+$ data parallelism\\ each GPU routes only its $N/E$ tokens};

\node[b, fill=gray!8] (a1) at (\ca,0)    {$X$ \ $(N \times h)$};
\node[b, fill=yellow!15] (a2) at (\ca,-0.95) {router over all $N$ tokens};
\node[b, fill=blue!8] (a3) at (\ca,-1.90) {experts on my $M/E$ shard};
\node[wire] (a4) at (\ca,-2.85) {\textbf{all-reduce} (SUM)};
\node[b, fill=gray!8] (a5) at (\ca,-3.80) {$Y$ \ $(N \times h)$};
\foreach \i/\j in {1/2,2/3,3/4,4/5} {\draw[d] (a\i) -- (a\j);}

\node[b, fill=blue!8] (b1) at (\cb,0)    {\texttt{x\_loc} \ $(N/E \times h)$};
\node[b, fill=yellow!15] (b2) at (\cb,-0.95) {router over \textbf{my $N/E$ tokens}};
\node[wire] (b3) at (\cb,-1.90) {\textbf{all-to-all} (dispatch)};
\node[b, fill=blue!8] (b4) at (\cb,-2.85) {experts on my $M/E$ shard};
\node[wire] (b5) at (\cb,-3.80) {\textbf{all-to-all} (combine)};
\node[b, fill=blue!8] (b6) at (\cb,-4.75) {\texttt{y\_loc} \ $(N/E \times h)$};
\foreach \i/\j in {1/2,2/3,3/4,4/5,5/6} {\draw[d] (b\i) -- (b\j);}
\end{tikzpicture}
\caption{The two implementations measured in Table~\ref{tab:ep_layer_time_fwd}. Both shard the experts
$M/E$ per GPU and hold a full copy of the router weights. In (a) the batch is replicated, so every
GPU scores all $N$ tokens and one all-reduce sums the partial outputs; the routing work is therefore
duplicated $E$ times. In (b) each GPU keeps only its own $N/E$ tokens and scores only those, so the
routing work is divided by $E$, at the cost of shipping each token to the GPU owning its selected
expert and back with two all-to-alls.}
\label{fig:ep_dataflow}
\end{figure}

\subsubsection{Scaling under Expert Parallelism}
\label{subsec:scaling-ep}

Here we show that the $\Theta(h/r)$ advantage of Section~\ref{sec:scaling} still holds under expert
parallelism, and that replicating the router makes the routing bottleneck more severe.

\paragraph{Active cost with a replicated router.} Because the router is replicated, every device
scores all tokens against all experts and computes the global top-$k$ locally. Each token-expert
pair is therefore still computed exactly once across the system, while the router runs $E$ times.
Per-token active FLOPs across GPUs become
\begin{equation}
  \tau \;=\;
  \begin{cases}
    C_e\, k h s + E\, C_r\, M h & \text{(standard router)},\\
    C_e\, k h s + E\, C_r\, (M+h)\,r & \text{(low-rank router)}.
  \end{cases}
  \label{eq:tau_ep}
\end{equation}
Equivalently, since each device holds $M/E$ experts and therefore computes $k/E$ of the
token-expert pairs in expectation, the active FLOPs \emph{per GPU} are $C_e (k/E) h s + C_r M h$ for
the standard router. Either way the factor $E$ falls on the router term and not the expert term, so
routing occupies a larger share of the budget than on a single device: replicating the router
sharpens exactly the bottleneck that MoRE is designed to remove.

\paragraph{The ratio is unchanged.} Applying the budget allocation of
Proposition~\ref{thm:optimal_scaling} to \eqref{eq:tau_ep} gives
\[
M^{*} \;=\; \frac{\tau}{2 C_r E h}, \qquad
M' \;=\; \frac{\tau - E\, C_r h r}{2 C_r E r}, \qquad
\frac{M'}{M^{*}} \;=\; \frac{h}{r}\left(1 - \frac{E\, C_r h r}{\tau}\right).
\]
The factor $E$ enters both configurations identically and cancels, so the $\Theta(h/r)$ advantage
carries over. The correction term is close to $1$ whenever $\tau > E\, C_r M h \gg E\, C_r r h$,
that is whenever $M \gg r$, which is the same condition as on a single device.

Table~\ref{tab:ep_layer_time_fwd} measures both implementations of Figure~\ref{fig:ep_dataflow} while sweeping $M$ from $2{,}048$ to $16{,}384$. The standard MoE router
dominates the layer time under expert parallelism, and communication does not, so replacing it
with the low-rank router of MoRE speeds the layer up by up to $3.1\times$.

\begin{table}[h]
\centering\small
\begin{tabular}{rr rr rr rr r}
\toprule
& & \multicolumn{2}{c}{Router} & \multicolumn{2}{c}{Shared phases} & \multicolumn{2}{c}{Layer total} & \\
\cmidrule(lr){3-4}\cmidrule(lr){5-6}\cmidrule(lr){7-8}
$E$ & $M$ & \textbf{MoE} & \textbf{MoRE} & experts & comm. & MoE & MoRE & speedup \\
\midrule
\multicolumn{9}{l}{\itshape (a) expert parallelism only} \\
2 & 2{,}048 & \textbf{8.20} & \textbf{1.68} & 3.86 & 0.61 & 12.68 & 6.15 & $2.1\times$ \\
2 & 8{,}192 & \textbf{27.90} & \textbf{6.09} & 4.15 & 0.61 & 32.66 & 10.85 & $3.0\times$ \\
2 & 16{,}384 & \textbf{45.30} & \textbf{11.95} & 4.41 & 0.62 & 50.33 & 16.98 & $3.0\times$ \\
4 & 2{,}048 & \textbf{9.01} & \textbf{1.68} & 3.81 & 0.96 & 13.77 & 6.45 & $2.1\times$ \\
4 & 8{,}192 & \textbf{29.01} & \textbf{6.09} & 4.05 & 0.86 & 33.93 & 11.00 & $3.1\times$ \\
4 & 16{,}384 & \textbf{45.34} & \textbf{11.95} & 4.25 & 0.79 & 50.38 & 17.00 & $3.0\times$ \\
\midrule
\multicolumn{9}{l}{\itshape (b) $+$ data parallelism, each GPU routes its own $N/E$ tokens} \\
2 & 2{,}048 & \textbf{4.25} & \textbf{0.97} & 8.59 & 4.12 & 16.96 & 13.68 & $1.2\times$ \\
2 & 8{,}192 & \textbf{13.04} & \textbf{3.30} & 8.70 & 4.42 & 26.17 & 16.43 & $1.6\times$ \\
2 & 16{,}384 & \textbf{22.81} & \textbf{6.60} & 9.08 & 4.43 & 36.32 & 20.11 & $1.8\times$ \\
4 & 2{,}048 & \textbf{2.31} & \textbf{0.65} & 4.73 & 2.23 & 9.27 & 7.61 & $1.2\times$ \\
4 & 8{,}192 & \textbf{6.73} & \textbf{1.93} & 4.75 & 2.22 & 13.70 & 8.89 & $1.5\times$ \\
4 & 16{,}384 & \textbf{10.55} & \textbf{3.62} & 4.85 & 2.18 & 17.59 & 10.66 & $1.7\times$ \\
\bottomrule
\end{tabular}
\caption{\textbf{The standard MoE router is the bottleneck of one layer under expert parallelism and
dominates communication, and MoRE achieves a notable speedup by replacing it with a low-rank
router.} Forward pass of one MoE layer. The \emph{experts} column includes the local dispatch
bookkeeping, which moves nothing between devices: it turns expert ids into a sorted pair list, and
in (b) it also gathers the rows that each of the other GPUs will receive. \emph{comm.} is the
collectives themselves, a single all-reduce of the $(N, h)$ output in (a) against two all-to-alls in
(b), the second of which also carries the $(M)$ balancing statistic. All runs use the Switch
load-balancing loss. $h = 512$, $s = 64$, $k = 4$, $r = 64$, \texttt{bf16}, $N = 262{,}144$, mean of
$30$ trials on B200 GPUs in one node.}
\label{tab:ep_layer_time_fwd}
\end{table}

{
\subsection{Decoding}
\label{appendix:decoding}
In terms of decoding, it is memory-bound rather than compute-bound, since each step processes a single token per sequence and the cost is dominated by reading the weights and the KV cache ~\citep{shazeer2019fast, pope2023efficiently}. 

We initially found the decoding speedup to degrade sharply as the expert count $M$ grows, which we found the reason is the \texttt{scatter2scatter} kernel of ScatterMoE~\citep{tan2024scatteredmixtureofexpertsimplementation}: it loops over the entire contiguous range of expert ids the block spans from the smallest to the largest. A decoding step carries so few tokens whose expert ids are scattered across the pool, and the loop then walks almost the entire expert range, which makes decoding significantly slower at large $M$. 

We therefore modify the kernel to step directly from one expert present in the block to the next and make it significantly faster. Table~\ref{tab:decode_table1} and Figure~\ref{fig:decode_heatmap_fixed} show the resulting decode times.

\begin{figure}[p]
    \centering
    \includegraphics[width=\linewidth]{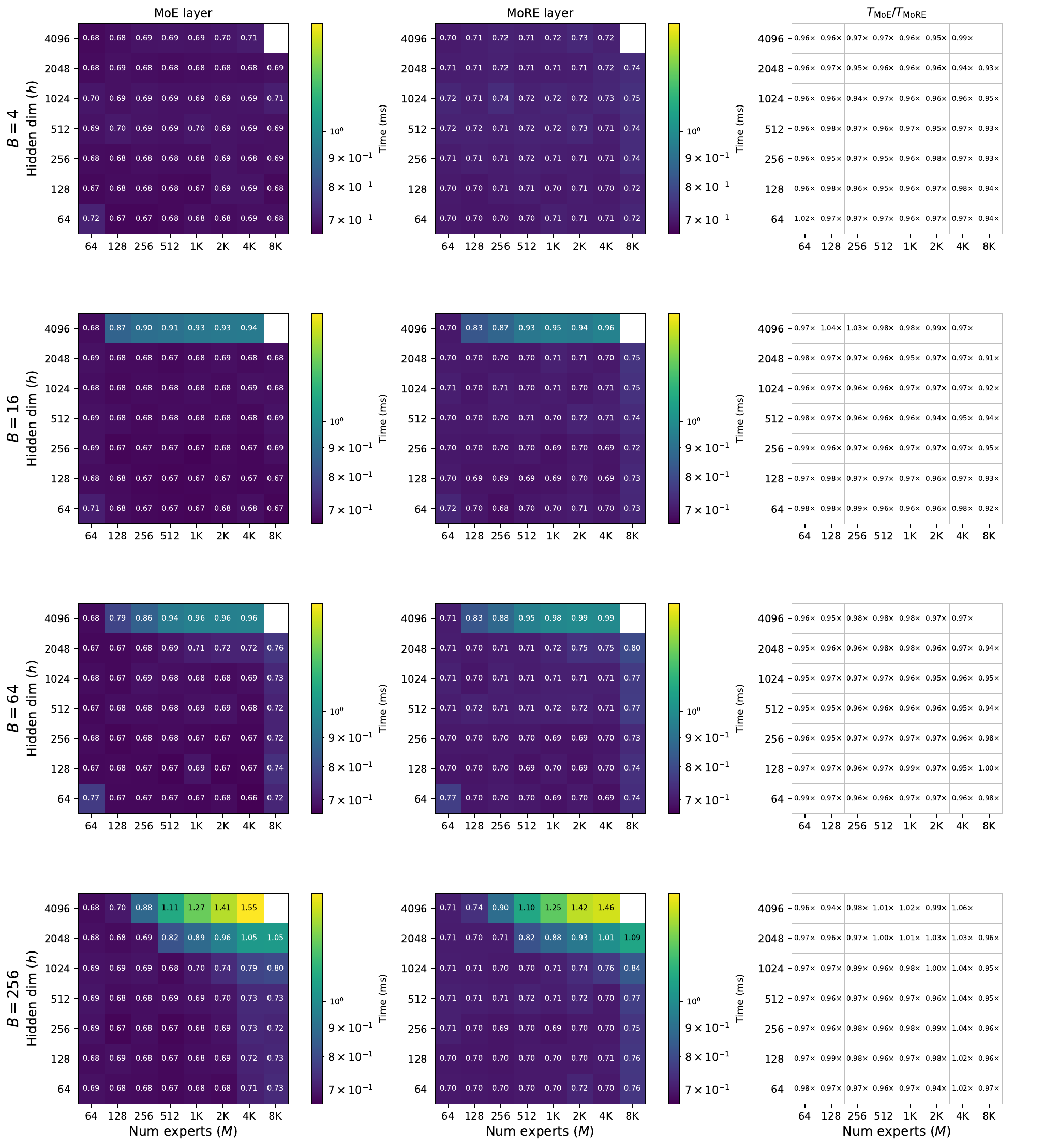}
    \caption{\textbf{MoRE has comparable decoding time with MoE for the same configuration.}
    Decode wall-clock of one MoE layer on the $(h, M)$ grid of
    Figure~\ref{fig:prefill_heatmap}, one row per decode batch size $B$.
    \textbf{Left:}~MoE layer, standard router. \textbf{Middle:}~MoRE layer, fused low-rank
    router. \textbf{Right:}~the ratio of the two. Both arms use the dispatch of this section.
    One NVIDIA B200 in \texttt{bf16}, $s{=}64$, $k{=}4$, $r{=}64$, one token per sequence,
    median of ten timed calls after five warm-ups. The $h{=}4096$, $M{=}8192$ cell is blank
    because the expert stride overflows the \texttt{int32} offsets ScatterMoE uses.}
    \label{fig:decode_heatmap_fixed}
\end{figure}

\begin{table}[!t]
    \centering
    \small
    \begin{tabular}{llrrrrr}
        \toprule
        Scale & Model & \#Experts & $B{=}4$ & $B{=}16$ & $B{=}64$ & $B{=}256$ \\
        \midrule
        $s{=}32$, $h{=}1024$ & MoE & 128 & 17.9 & 17.9 & 17.9 & 18.2 \\
         & MoRE ($r{=}64$) & 1,536 & 18.5 & 18.3 & 18.4 & 18.6 \\
        \midrule
        $s{=}64$, $h{=}512$ & MoE & 256 & 18.0 & 17.8 & 17.9 & 18.1 \\
         & MoRE ($r{=}64$) & 1,792 & 18.5 & 18.5 & 18.4 & 18.6 \\
        \midrule
        $s{=}128$, $h{=}512$ & MoE & 256 & 17.9 & 17.8 & 17.9 & 18.0 \\
         & MoRE ($r{=}64$) & 1,792 & 18.5 & 18.3 & 18.5 & 18.5 \\
        \bottomrule
    \end{tabular}
    \caption{\textbf{Decoding: MoRE matches MoE in wall-clock time even with a much larger number of experts.} Per-step decode wall-clock (ms) for MoE/MoRE pairs used in our pretraining experiments with matched active FLOPs (Table~\ref{tab:fineweb_p3_results}), across decode batch size $B$. Each MoRE row holds $6$--$12\times$ the experts of the MoE it is matched against, and stays within $3\%$ of it at every scale and batch size. One NVIDIA B200 in \texttt{bf16}, $12$ layers, $k{=}4$, prefill $128$, median of the last six of twelve decode steps. Both use the expert dispatch of this section.}
    \label{tab:decode_table1}
\end{table}

}

\FloatBarrier
\subsection{Wall-clock measurements for configurations in Table~\ref{tab:fineweb_p3_results}}
\label{appendix:train_time}

At inference, which runs only the forward pass, MoRE prefills as fast as its MoE
counterpart or faster (Table~\ref{tab:prefill_time}). Training is the one place where the
extra parameters cost time. Table~\ref{tab:train_time} reports the training step time for
the same pairs, measured over the $40$\,B-token runs: MoRE costs $15$ to $42\%$ more per
step, although it has $5$ to $6$ times as many parameters and $7$ to $12$ times as many
experts. 

Since the two models have the same active FLOPs, we attribute the extra time to
work whose cost grows with the number of parameters and experts rather than with active
FLOPs. First, the AdamW optimizer updates every parameter and its first and second
moments at every step, and gradient clipping and the data-parallel gradient all-reduce
also process every parameter. Second, the backward pass computes a weight gradient for
every expert by summing over the tokens routed to that expert. We expect that storing the
AdamW states and the backward gradients in lower precision could help close this gap.

\begin{table}[H]
    \centering
    \small
    \begin{tabular}{llrrrr}
        \toprule
        Scale & Model & \#Experts & Total params & Training (ms) & \makecell{Training time\\ratio} \\
        \midrule
        $s{=}32$, $h{=}1024$ & MoE & 128 & 0.29\,B & 241.0 & \\
         & MoRE ($r{=}64$) & 1,536 & 1.50\,B & 312.5 & 1.30$\times$ \\
        \midrule
        $s{=}64$, $h{=}512$ & MoE & 256 & 0.36\,B & 165.0 & \\
         & MoRE ($r{=}64$) & 1,792 & 1.91\,B & 221.1 & 1.34$\times$ \\
        \midrule
        $s{=}128$, $h{=}512$ & MoE & 256 & 0.66\,B & 182.7 & \\
         & MoRE ($r{=}64$) & 1,792 & 3.76\,B & 259.3 & 1.42$\times$ \\
        \midrule
        $s{=}64$, $h{=}512$ & MoE & 1,024 & 1.27\,B & 313.8 & \\
         & MoRE ($r{=}64$) & 8,192 & 7.92\,B & 360.6 & 1.15$\times$ \\
        \bottomrule
    \end{tabular}
    \caption{\textbf{MoRE costs $15$ to $42\%$ more per training step than the MoE it matches in active FLOPs.} Training wall-clock (ms) for the pairs of Table~\ref{tab:fineweb_p3_results}, measured over the $40$\,B-token runs themselves, on $4{\times}$NVIDIA B200. Prefill time for the same pairs is in Table~\ref{tab:prefill_time}.}
    \label{tab:train_time}
\end{table}

In Table~\ref{tab:same_config}, we also compare MoRE with a standard MoE of the same
configuration, that is, with the same number and width of experts and therefore about
the same total parameters. We find MoE trains $1.10$ to $1.23\times$ slower than MoRE. This implies that the extra training time of MoRE in
Table~\ref{tab:train_time} comes from its larger total number of parameters, not from the low rank.

\begin{table}[H]
    \centering
    \small
    \begin{tabular}{llrrr}
        \toprule
        Configuration & Model & Total params & Training (ms) & Ratio \\
        \midrule
        $M{=}1{,}536$, $s{=}24$, $h{=}1024$ & MoRE ($r{=}64$) & 1.50\,B & 312.5 & \\
         & MoE & 1.51\,B & 344.2 & 1.10$\times$ \\
        \midrule
        $M{=}1{,}792$, $s{=}56$, $h{=}512$ & MoRE ($r{=}64$) & 1.91\,B & 221.1 & \\
         & MoE & 1.92\,B & 249.6 & 1.13$\times$ \\
        \midrule
        $M{=}1{,}792$, $s{=}112$, $h{=}512$ & MoRE ($r{=}64$) & 3.76\,B & 259.3 & \\
         & MoE & 3.77\,B & 289.6 & 1.12$\times$ \\
        \midrule
        $M{=}8{,}192$, $s{=}52$, $h{=}512$, EP & MoRE ($r{=}64$) & 7.92\,B & 383.8 & \\
         & MoE & 7.96\,B & 472.2 & 1.23$\times$ \\
        \bottomrule
    \end{tabular}
    \caption{\textbf{MoRE and MoE at the same configuration: MoE is $1.10$ to $1.23\times$ slower than MoRE.} Training step time (ms) of each MoRE model of Table~\ref{tab:train_time} and of a standard MoE  with the same number $M$ and width $s$ of experts, on $4{\times}$NVIDIA B200. The last pair is trained with expert parallelism (EP).}
    \label{tab:same_config}
\end{table}

\FloatBarrier

\section{Pseudocode}
\label{appendix:pseudocode}

\providecommand{\ch}[1]{\textcolor{purple}{#1}}

We give Triton-style pseudocode for the two fused router kernels. Neither
materializes the $(N, M)$ logits tensor in HBM. The inference kernel returns the
top-$k$ selection alone. The training kernel also returns the statistic that the
load-balancing loss needs, and pays a second sweep over the expert dimension for
it.

\paragraph{Notation.} Write the router logits as
\begin{equation}
  \ell \;=\; X R_2^\top R_1^\top \;\in\; \mathbb{R}^{N \times M},
  \qquad
  \ell_{n,j} \in \mathbb{R}
  \label{eq:app_logits}
\end{equation}
for token $n \in [N]$ and expert $j \in [M]$, where $[M] = \{1, \dots, M\}$, with
$R_2 \in \mathbb{R}^{r \times h}$ and $R_1 \in \mathbb{R}^{M \times r}$ as in
Section~\ref{sec:preliminaries}. Both kernels evaluate \eqref{eq:app_logits} one
$B_N \times B_M$ tile at a time, so the $(N, M)$ logits exist only in on-chip
SRAM. An overbar denotes the gradient of the
total loss with respect to that tensor and always has the same shape as the
tensor: $\bar A = \partial\mathcal{L}/\partial A$.

\subsection{Inference (prefilling)}
\label{appendix:inference_pseudocode}

At inference the router must name the $k$ experts each token will use and supply
the $k$ logits that produce the gate weights. There is no loss, so nothing reads
a softmax over all $M$ experts. Algorithm~\ref{alg:lr_inf}
therefore walks the expert dimension once, keeping a running top-$k$ buffer in
SRAM, and writes $2Nk$ numbers instead of $NM$. In Figure~\ref{fig:fuse_standard_moe}, we fuse the standard MoE using a similar way, but does not have intermediate $z$ which has size in terms of $r$. This makes the complexity of fusing MoE scale with $h$ instead with $r$, and makes the fusing benefit for MoRE diminishes at large $h$. 

\begin{figure}[H]
\centering
\resizebox{0.92\textwidth}{!}{%
\begin{tikzpicture}[
  font=\small,
  bx/.style={draw, rounded corners=2pt, align=center, inner sep=4pt, minimum height=8mm},
  ker/.style={bx, fill=blue!6, draw=blue!45},
  dat/.style={bx, fill=black!4},
  ar/.style={-{Latex[length=2mm]}, thick},
  node distance=10mm,
]
\node[dat] (X) {$X$\\$(N,h)$};
\node[ker, right=of X, minimum height=18mm, text width=42mm] (tile)
  {$z = X R_2^\top$ in SRAM\\[1pt] tile $Y = z\,R_1[\mathrm{blk}]^\top$\\[1pt]
   one sweep: running top-$k$};
\node[dat, right=10mm of tile] (tv) {TopVals, TopIdxs\\$(N,k)$};
\node[dat, right=10mm of tv, text width=26mm] (gw) {softmax over\\the $k$ scores\\gate weights $(N,k)$};
\node[dat, right=10mm of gw, text width=24mm] (out) {combine the $k$\\expert outputs};
\draw[ar] (X) -- (tile);
\draw[ar] (tile) -- (tv);
\draw[ar] (tv) -- (gw);
\draw[ar] (gw) -- (out);
\node[below=3mm of tile, font=\footnotesize, text=blue!55!black] {one Triton kernel};
\end{tikzpicture}}
\caption{\textbf{Flowchart for inference (prefilling) for low-rank router.} The
blue block is one Triton kernel, holding one $B_N \times B_M$ tile of $\ell$ at a
time, so neither the $(N, r)$ intermediate $z$ nor the $(N, M)$ logits reaches
HBM. TopIdxs names the $k$ experts to run for each token; a softmax over the $k$
entries of TopVals gives the weights that combine their outputs. Nothing here
depends on all $M$ experts, which is why one sweep is enough.}
\label{fig:app_flow_inf}
\end{figure}

\begin{algorithm}[H]
\caption{Fused low-rank router + Top-$k$ (inference, prefilling)}
\label{alg:lr_inf}
\begin{algorithmic}[1]
\Require Input $X \in \mathbb{R}^{N \times h}$, router weights $R_1 \in \mathbb{R}^{M \times r}$, $R_2 \in \mathbb{R}^{r \times h}$, top-$k$ size $k$, block sizes $B_N, B_M, B_H$
\Ensure $\mathrm{TopVals} \in \mathbb{R}^{N \times k}$, $\mathrm{TopIdxs} \in \mathbb{Z}^{N \times k}$ in HBM
\For{each token block $b \in \{0, \ldots, \lceil N / B_N \rceil - 1\}$ \textbf{in parallel}}
  \LComment{Phase 1: $z = X_{\mathrm{block}} R_2^\top$ accumulated in SRAM}
  \State $z \gets 0$ \Comment{$(B_N, r)$ in \texttt{fp32} registers}
  \For{each hidden tile $i \in \{0, B_H, 2B_H, \ldots\}$}
    \State Load $X_{\mathrm{tile}} \gets X[bB_N{:}(b{+}1)B_N,\, i{:}i{+}B_H]$ from HBM \Comment{$(B_N, B_H)$}
    \State Load $R_2$ tile from HBM \Comment{$(r, B_H)$}
    \State $z \mathrel{+}= X_{\mathrm{tile}} \cdot R_{2,\mathrm{tile}}^\top$ \Comment{tile matmul in SRAM}
  \EndFor
  \State Narrow $z$ to \texttt{bf16} \Comment{tensor-core operands are narrow; the accumulator is \texttt{fp32}}
  \LComment{Phase 2: one sweep of $z R_1^\top$ over $M$ with a running top-$k$}
  \State Allocate $\mathrm{top\_vals} \gets -\infty$, $\mathrm{top\_idxs} \gets 0$ \Comment{$(B_N, k)$ in SRAM}
  \For{each expert tile $m_0 \in \{0, B_M, 2B_M, \ldots\}$}
    \State Load $R_1$ tile from HBM \Comment{$(B_M, r)$, cached in L2 across blocks}
    \State $y_{\mathrm{tile}} \gets z \cdot R_1[m_0{:}m_0{+}B_M, :]^\top$, out-of-range experts set to $-\infty$ \Comment{$(B_N, B_M)$ in SRAM}
    \For{$c = 1, \ldots, k$} \Comment{merge tile logits into the running top-$k$}
      \State $(v_{\max}, j_{\max}) \gets (\max, \arg\max)$ of $y_{\mathrm{tile}}$ along axis $1$
      \State $(v_{\min}, j_{\min}) \gets (\min, \arg\min)$ of $\mathrm{top\_vals}$ along axis $1$
      \State If $v_{\max} > v_{\min}$: write $(v_{\max},\, m_0 + j_{\max})$ into slot $j_{\min}$
      \State Mask $y_{\mathrm{tile}}[:, j_{\max}] \gets -\infty$ \Comment{prevent re-selection}
    \EndFor
  \EndFor
  \State Write $\mathrm{top\_vals}, \mathrm{top\_idxs}$ to HBM
\EndFor
\end{algorithmic}
\end{algorithm}

The kernel reads $X$ once, $R_1$ once and $R_2$ once, and writes $2Nk$ numbers,
so its HBM traffic is $O(Nh + Mr + Nk)$ rather than $O(NM)$. The inner merge loop runs $k$ times per expert tile and is
sequential, which is why the cost grows linearly in $k$ in
Figure~\ref{fig:fuse_benefit}(Left) and the cost is mild within the range of commonly used $k$.

\subsection{Training}
\label{appendix:training_pseudocode}

Training adds the load-balancing loss, and with it a quantity that depends on all
$M$ experts rather than on the $k$ selected ones. Figure~\ref{fig:app_flow} shows
its effect on the forward and on the backward.

\begin{figure}[H]
\centering
\resizebox{\textwidth}{!}{%
\begin{tikzpicture}[
  font=\small,
  bx/.style={draw, rounded corners=2pt, align=center, inner sep=4pt, minimum height=8mm},
  ker/.style={bx, fill=blue!6, draw=blue!45},
  dat/.style={bx, fill=black!4},
  grad/.style={bx, fill=red!8, draw=red!55, text=red!85!black},
  ar/.style={-{Latex[length=2mm]}, thick},
  gar/.style={-{Latex[length=2mm]}, thick, red!70},
  node distance=9mm,
]
\node[dat] (X) {$X$\\$(N,h)$};
\node[ker, right=of X] (z) {$z = X R_2^\top$\\$(N,r)$};
\node[ker, right=of z, minimum height=16mm, text width=34mm] (tile)
  {tile $Y = z\,R_1[\mathrm{blk}]^\top$\\[1pt] sweep 1: $m, Z$, top-$k$\\
   \ch{sweep 2: $P = e^{Y-m}/Z$}};
\node[dat, right=12mm of tile, yshift=9mm] (tv) {TopVals, TopIdxs\\$(N,k)$};
\node[dat, right=12mm of tile, yshift=-9mm] (ps) {\ch{$\mathrm{psum}$}\\\ch{$(M)$}};
\node[dat, right=9mm of tv] (lp) {$\mathcal{L}_{\text{pred}}$};
\node[dat, right=9mm of ps] (lb) {$\mathcal{L}_{\mathrm{bal}}$};
\draw[ar] (X) -- (z);
\draw[ar] (z) -- (tile);
\draw[ar] (tile.east) -- ++(4mm,0) |- (tv.west);
\draw[ar] (tile.east) -- ++(4mm,0) |- (ps.west);
\draw[ar] (tv) -- (lp);
\draw[ar] (ps) -- (lb);
\node[grad, below=22mm of tile, minimum height=16mm, text width=38mm] (bk)
  {\ch{sweep 1: $g$ $(N)$}\\[1pt] sweep 2: $\bar Y$ tile\\
   $= \overline{\mathrm{TopVals}} + \ch{P\odot(\bar p - g)}$};
\node[grad] (gtv) at (tv|-bk.north) {$\overline{\mathrm{TopVals}}$\\$(N,k)$};
\node[grad] (gp)  at (tv|-bk.south) {\ch{$\bar p$}\\\ch{$(M)$}};
\node[grad, left=of bk] (gz) {$\bar z$\\$(N,r)$};
\node[grad, left=of gz] (gX) {$\bar X$ $(N,h)$\\\footnotesize to previous layer};
\node[grad, below=8mm of gz] (gR2) {$\bar R_2$ $(r,h)$};
\node[grad, below=8mm of bk] (gR1) {$\bar R_1$ $(M,r)$};
\draw[gar] (lp.east) -- ++(5mm,0) |- (gtv.east);
\draw[gar] (lb.east) -- ++(11mm,0) |- (gp.east);
\draw[gar] (gtv.west) -- (bk.east|-gtv);
\draw[gar] (gp.west)  -- (bk.east|-gp);
\draw[gar] (bk) -- (gz);
\draw[gar] (bk) -- (gR1);
\draw[gar] (gz) -- (gR2);
\draw[gar] (gz) -- (gX);
\node[left=2mm of X, font=\footnotesize\bfseries, rotate=90, anchor=south] {forward};
\node[left=2mm of gX, font=\footnotesize\bfseries, red!80!black, rotate=90, anchor=south] {backward};
\end{tikzpicture}}
\caption{\textbf{Fused low-rank router for training.} Blue blocks run inside the
kernel, which holds one $B_N \times B_M$ tile of $\ell$ at a time and never forms
the full $N \times M$ matrix. \ch{Purple marks what the Switch loss adds.}
Sweep 2 computes the quantity required for the load-balancing loss:
$\mathrm{psum}_i = \sum_n P_{n,i}$, the softmax probability of expert $i$ summed
over tokens. $\overline{\mathrm{TopVals}}$ is the gradient of the prediction loss
with respect to the $k$ selected scores, and is sparse in $M$. \ch{$\bar p$ is the
gradient of the balancing loss with respect to $\mathrm{psum}$, one number per
expert and dense in $M$.}}
\label{fig:app_flow}
\end{figure}

\subsubsection{Forward-Pass Pseudocode}
\label{appendix:fwd_pseudocode}

At forward pass in training, we leverage the techniques in online softmax
of~\citet{milakov2018online} to compute the load-balancing loss. Let $P \in \mathbb{R}^{N \times M}$ be the row-wise
softmax of the logits,
\begin{equation}
  P_{n,j} \;=\; \frac{e^{\,\ell_{n,j}}}{Z_n},
  \qquad
  Z_n \;=\; \sum_{j'=1}^{M} e^{\,\ell_{n,j'}} \in \mathbb{R} .
  \label{eq:app_softmax}
\end{equation}
The Switch load-balancing loss~\citep{fedus2022switch} is
$\mathcal{L}_{\mathrm{bal}} = M \sum_{i=1}^{M} f_i p_i$, where $f_i$ is the fraction
of dispatch slots that went to expert $i$ and carries no gradient, and
$p_i = \frac{1}{N}\sum_{n} P_{n,i}$ is that expert's mean routing probability over
\emph{all} $M$ experts. The only quantity in it that a tiled kernel cannot produce
for free is the column sum
\begin{equation}
  \mathrm{psum}_i \;=\; \sum_{n=1}^{N} P_{n,i} \;\in\; \mathbb{R}^{M},
  \qquad p_i = \mathrm{psum}_i / N,
  \label{eq:app_psum}
\end{equation}
because $P_{n,i}$ needs the normalizer $Z_n$, which is not known until every expert
has been visited. Returning $\mathrm{psum}$ is therefore the whole of what the
kernel must add, and it is $M$ numbers rather than $NM$.

\paragraph{Building the softmax denominator without the $(N, M)$ tensor.}
The kernel maintains the row maximum $m_n = \max_{j \in [M]} \ell_{n,j}$ and the
denominator $Z_n = \sum_{j'} e^{\ell_{n,j'} - m_n}$ across the expert tiles, and
uses $P_{n,j} = e^{\ell_{n,j} - m_n} / Z_n$. Neither $m_n$ nor $Z_n$ is final until
every expert has been seen, so both are built incrementally. After $t$ tiles the
kernel holds $m^{(t)}$, the largest logit so far, and $Z^{(t)}$, the sum over those
experts. When tile $t{+}1$ raises the maximum to $m'$, every term already in
$Z^{(t)}$ carries the factor $e^{-m^{(t)}}$ and now needs $e^{-m'}$. That
correction is one constant shared by all of them, so a single multiply rescales
the whole accumulated sum, and the calculation is shown as~\eqref{eq:app_online}:
\begin{equation}
  Z^{(t+1)} = Z^{(t)}\, e^{\,m^{(t)} - m'} \;+\; \textsc{rowsum}\bigl(e^{\,y_{\mathrm{tile}} - m'}\bigr),
  \qquad m^{(t+1)} = m' .
  \label{eq:app_online}
\end{equation}
Here $e^{\,y_{\mathrm{tile}} - m'}$ is elementwise on the $(B_N, B_M)$ tile and
$\textsc{rowsum}$ adds each row across the $B_M$ experts of this tile, so both sides
of~\eqref{eq:app_online} are vectors of length $B_N$: one running normalizer per
token.

Once $m$ and $Z$ are final, a second sweep regenerates each tile and accumulates
$\textsc{colsum}(e^{\,y_{\mathrm{tile}} - m}/Z)$ into $\mathrm{psum}$, where
$\textsc{colsum}$ reduces along tokens and so returns one number per expert. The
second sweep is needed because the normalizer is only complete at the end of the
first, and because the top-$k$ merge overwrites $y_{\mathrm{tile}}$ with $-\infty$ as
it consumes columns. It costs one extra $z R_1^\top$ matmul, $2NMr$ FLOPs, and no
extra HBM traffic.

In the implementation the first matmul $z = X R_2^\top$ is left to cuBLAS and $z$
is passed into the kernel, so the training kernel fuses the second matmul, the
top-$k$ and the softmax statistics, while the inference kernel of
Algorithm~\ref{alg:lr_inf} fuses all three.

\begin{algorithm}[H]
\caption{Fused low-rank router + Top-$k$ (training, forward). \ch{Purple lines are what the
Switch load-balancing loss adds.}}
\label{alg:lr_fwd}
\begin{algorithmic}[1]
\Require $z = X R_2^\top \in \mathbb{R}^{N \times r}$, router weight $R_1 \in \mathbb{R}^{M \times r}$, top-$k$ size $k$, block sizes $B_N, B_M$
\Ensure $\mathrm{TopVals} \in \mathbb{R}^{N \times k}$, $\mathrm{TopIdxs} \in \mathbb{Z}^{N \times k}$, \ch{$\mathrm{psum} \in \mathbb{R}^{M}$, $m \in \mathbb{R}^{N}$, $Z \in \mathbb{R}^{N}$} in HBM
\For{each token block $b \in \{0, \ldots, \lceil N / B_N \rceil - 1\}$ \textbf{in parallel}}
  \State Load $z_{\mathrm{block}}$ from HBM \Comment{$(B_N, r)$ in SRAM registers}
  \LComment{Sweep 1: tile $z R_1^\top$ over $M$ with running top-$k$ \ch{and running $m, Z$}}
  \State Allocate $\mathrm{top\_vals} \gets -\infty$, $\mathrm{top\_idxs} \gets 0$ \Comment{$(B_N, k)$ in SRAM}
  \State \ch{Allocate $m \gets -\infty$, $Z \gets 0$} \Comment{\ch{$(B_N)$ each, in SRAM}}
  \For{each expert tile $m_0 \in \{0, B_M, 2B_M, \ldots\}$}
    \State Load $R_1$ tile from HBM \Comment{$(B_M, r)$, cached in L2 across blocks}
    \State $y_{\mathrm{tile}} \gets z \cdot R_1^\top$, out-of-range experts set to $-\infty$ \Comment{$(B_N, B_M)$ in SRAM}
    \State \ch{$m' \gets \max\bigl(m,\ \textsc{rowmax}(y_{\mathrm{tile}})\bigr)$}
           \Comment{\ch{new running maximum, $(B_N)$}}
    \State \ch{$Z \gets Z\,e^{\,m - m'} + \textsc{rowsum}\bigl(e^{\,y_{\mathrm{tile}} - m'}\bigr)$;\ \
               $m \gets m'$}
           \Comment{\ch{eq.~\eqref{eq:app_online}}}
    \For{$c = 1, \ldots, k$} \Comment{merge tile logits into running top-$k$}
      \State $(v_{\max}, j_{\max}) \gets (\max, \arg\max)$ of $y_{\mathrm{tile}}$ along axis $1$
      \State $j_{\min} \gets \arg\min$ of $\mathrm{top\_vals}$ along axis $1$
      \State If $v_{\max} > \mathrm{top\_vals}[:, j_{\min}]$: write $(v_{\max}, m_0 + j_{\max})$ into slot $j_{\min}$
      \State Mask $y_{\mathrm{tile}}[:, j_{\max}] \gets -\infty$ \Comment{\ch{destroys $y_{\mathrm{tile}}$}; prevent re-selection}
    \EndFor
  \EndFor
  \LComment{\ch{Sweep 2: normalized column sums; $m$ and $Z$ are still in registers}}
  \For{\ch{each expert tile $m_0 \in \{0, B_M, 2B_M, \ldots\}$}}
    \State \ch{$y_{\mathrm{tile}} \gets z \cdot R_1^\top$;\ \ $P \gets e^{\,y_{\mathrm{tile}} - m} / Z$}
           \Comment{\ch{$(B_N, B_M)$, recomputed}}
    \State \ch{$\textsc{atomicAdd}\bigl(\mathrm{psum}[m_0{:}m_0{+}B_M],\ \textsc{colsum}(P)\bigr)$}
  \EndFor
  \State Write $\mathrm{top\_vals}, \mathrm{top\_idxs}$\ch{, $m$, $Z$} to HBM
\EndFor
\end{algorithmic}
\end{algorithm}

\paragraph{Forming the loss.} The balancing loss itself stays outside the kernel and
costs $O(M)$:
\begin{equation}
  \mathcal{L}_{\mathrm{bal}} \;=\; M \sum_{i=1}^{M} f_i\, p_i
  \;=\; \frac{M}{N} \, f \cdot \mathrm{psum},
  \qquad
  f_i = \frac{1}{Nk}\sum_{n=1}^{N}\sum_{j=1}^{k}\mathbf{1}\bigl[\mathrm{TopIdxs}[n,j] = i\bigr],
  \label{eq:app_lbal}
\end{equation}
with $f$ obtained by a \texttt{bincount} on $\mathrm{TopIdxs}$ and detached, as is
standard. Because the kernel returns $\mathrm{psum}$ rather than a loss value, the
same outputs serve the Switch form of~\eqref{eq:app_lbal} or a different balancing
objective, without touching the kernel.

\subsubsection{Backward-Pass Pseudocode}
\label{appendix:bwd_pseudocode}

Two gradients reach the kernel, and they have different shapes.
$\overline{\mathrm{TopVals}} \in \mathbb{R}^{N \times k}$ arrives from the prediction
loss on the $k$ selected scores; it is sparse in $M$, touching only the selected
experts. $\bar p \in \mathbb{R}^{M}$ arrives from
$\mathcal{L}_{\mathrm{bal}}$ through $\mathrm{psum}$, and by~\eqref{eq:app_lbal} it is
$\bar p_i = (M/N) f_i$, one number per expert and dense in $M$. The kernel must
return $\bar X \in \mathbb{R}^{N \times h}$,
$\bar R_1 \in \mathbb{R}^{M \times r}$ and $\bar R_2 \in \mathbb{R}^{r \times h}$, as
in Figure~\ref{fig:app_flow}. Everything else follows from the chain rule on
$\ell = z R_1^\top$ and $z = X R_2^\top$, so the only new quantity to derive is the
logit gradient $\bar y$.

\paragraph{The logit gradient of the balancing loss.} Fix a token $n$ and an
expert $j \in [M]$. Since $\mathrm{psum}_i = \sum_{n'} P_{n',i}$ and only row $n$ of
$P$ depends on $\ell_n$, the chain rule contracts over the expert index alone:
\begin{equation}
  \frac{\partial \mathcal{L}_{\mathrm{bal}}}{\partial \ell_{n,j}}
  = \sum_{i=1}^{M} \bar p_i \, \frac{\partial P_{n,i}}{\partial \ell_{n,j}} .
  \label{eq:app_chain}
\end{equation}
The softmax Jacobian follows from the quotient rule on
$P_{n,i} = e^{\ell_{n,i}}/Z_n$ with $Z_n = \sum_{j'} e^{\ell_{n,j'}}$. The numerator
depends on $\ell_{n,j}$ only when $i = j$, contributing
$\delta_{ij} e^{\ell_{n,i}}$ with $\delta_{ij} = 1$ if $i = j$ and $0$ otherwise; the
denominator always does, contributing
$\partial Z_n / \partial \ell_{n,j} = e^{\ell_{n,j}}$. Hence
\begin{equation}
  \frac{\partial P_{n,i}}{\partial \ell_{n,j}}
  = \frac{\delta_{ij}\,e^{\ell_{n,i}} Z_n - e^{\ell_{n,i}} e^{\ell_{n,j}}}{Z_n^{2}}
  = \delta_{ij}\frac{e^{\ell_{n,i}}}{Z_n} - \frac{e^{\ell_{n,i}}}{Z_n}\frac{e^{\ell_{n,j}}}{Z_n}
  = P_{n,i}\bigl(\delta_{ij} - P_{n,j}\bigr).
  \label{eq:app_jac}
\end{equation}
Substituting~\eqref{eq:app_jac} into~\eqref{eq:app_chain}, the $\delta_{ij}$ collapses
the first sum to its $i = j$ entry and the second factors out $P_{n,j}$:
\begin{equation}
  \frac{\partial \mathcal{L}_{\mathrm{bal}}}{\partial \ell_{n,j}}
  = \underbrace{\bar p_j P_{n,j}}_{\text{from }\delta_{ij}}
    \;-\; P_{n,j}\sum_{i=1}^{M} \bar p_i P_{n,i}
  = P_{n,j}\bigl(\bar p_j - g_n\bigr),
  \qquad
  g_n \equiv \sum_{i=1}^{M} \bar p_i P_{n,i} \in \mathbb{R}.
  \label{eq:app_kernelgrad}
\end{equation}
So $g \in \mathbb{R}^{N}$ is one scalar per token: the mean of $\bar p$ under that
token's own routing distribution. Because $g_n$ sums over all $M$ experts, it must be
complete before any single expert's gradient can be written, so the backward takes
two sweeps for the same reason the forward does.

\begin{algorithm}[H]
\caption{Low-rank router (training, backward) with activation recompute. \ch{Purple lines are
what the Switch load-balancing loss adds.}}
\label{alg:lr_bwd}
\begin{algorithmic}[1]
\Require $X$, $R_1\in\mathbb{R}^{M\times r}$, $R_2\in\mathbb{R}^{r\times h}$, $\mathrm{TopIdxs}\in\mathbb{Z}^{N\times k}$, $\overline{\mathrm{TopVals}}\in\mathbb{R}^{N\times k}$ (top-$k$ logit gradients), \ch{$\bar p\in\mathbb{R}^{M}$, saved $m, Z \in \mathbb{R}^{N}$}
\Ensure $\bar X\in\mathbb{R}^{N\times h}$, $\bar R_1\in\mathbb{R}^{M\times r}$, $\bar R_2\in\mathbb{R}^{r\times h}$ in HBM
\For{each token block $b$ \textbf{in parallel}}
  \LComment{Phase 1: recompute $z = X_{\mathrm{block}} R_2^\top$ in SRAM (saves $(N, r)$ activation memory)}
  \State $z \gets 0$ \Comment{$(B_N, r)$ in SRAM}
  \For{each hidden tile $i$}
    \State $z \mathrel{+}= X_{\mathrm{tile}} \cdot R_2^\top$
  \EndFor
  \LComment{\ch{Phase 2, sweep 1: the per-token scalar $g_n$ of eq.~\eqref{eq:app_kernelgrad}}}
  \State \ch{$g \gets 0$} \Comment{\ch{$(B_N)$ in SRAM}}
  \For{\ch{each expert tile $m_0 \in \{0, B_M, 2B_M, \ldots\}$}}
    \State \ch{$y_{\mathrm{tile}} \gets z \cdot R_1^\top$;\ \ $P \gets e^{\,y_{\mathrm{tile}} - m}/Z$}
    \State \ch{$g \mathrel{+}= \textsc{rowsum}\bigl(P \odot \bar p[m_0{:}m_0{+}B_M]\bigr)$}
  \EndFor
  \LComment{Phase 2, sweep 2: the logit gradient, contracted into $\bar z$ and $\bar R_1$}
  \State $\bar z \gets 0$ \Comment{$(B_N, r)$ in SRAM}
  \For{each expert tile $m_0 \in \{0, B_M, 2B_M, \ldots\}$}
    \State $\bar y \gets 0$; scatter $\overline{\mathrm{TopVals}}[n, j]$ into column $\mathrm{TopIdxs}[n,j]$ of this tile
           \Comment{sparse: $k$ of $B_M$}
    \State \ch{$y_{\mathrm{tile}} \gets z \cdot R_1^\top$;\ \ $P \gets e^{\,y_{\mathrm{tile}} - m}/Z$}
    \State \ch{$\bar y \mathrel{+}= P \odot \bigl(\bar p[m_0{:}m_0{+}B_M] - g\bigr)$}
           \Comment{\ch{eq.~\eqref{eq:app_kernelgrad}, dense in $M$}}
    \State $\bar z \mathrel{+}= \bar y \cdot R_1[m_0{:}m_0{+}B_M, :]$;\ \
           $\bar R_1[m_0{:}m_0{+}B_M, :] \mathrel{+}= \bar y^\top z$
  \EndFor
  \LComment{Phase 3: $\bar X_{\mathrm{block}} = \bar z \cdot R_2$ tiled over $h$}
  \For{each hidden tile $i$}
    \State Load $R_2$ tile, write $\bar X_{\mathrm{tile}} \gets \bar z \cdot R_2$
  \EndFor
  \State $\bar R_2 \mathrel{+}= \bar z^\top X_{\mathrm{block}}$ \Comment{$(r, B_N) \times (B_N, h) \to (r, h)$}
\EndFor
\end{algorithmic}
\end{algorithm}

\section{Additional Plots}
\label{appendix:additional_plots}

\subsection{Routing entropy}
\label{appendix:hnorm_definition}
We report the routing entropy of the per-expert token counts $c_1, \ldots, c_M$ of
one layer. Writing $p_a = c_a / \sum_b c_b$ for the share of the layer's routed
tokens that expert $a$ receives, the normalized entropy is
$H_{\mathrm{norm}} = \bigl(-\sum_{a} p_a \log p_a\bigr) / \log M$, accumulated over
the full training set. It is $1$ when every expert carries the same load, and falls
towards $0$ as routing concentrates on fewer experts. The counts are read from saved
checkpoints: for each checkpoint we pass the full training set through that fixed
model once and count the tokens each expert receives.

Figure~\ref{fig:memorization_in_training_M1024} repeats
Figure~\ref{fig:memorization_in_training} at $h{=}64$, $s{=}8$, $M{=}1024$, where the
expert pool is more than five times larger, and we find similar results.

\begin{figure}[t]
    \centering
    \includegraphics[width=\linewidth]{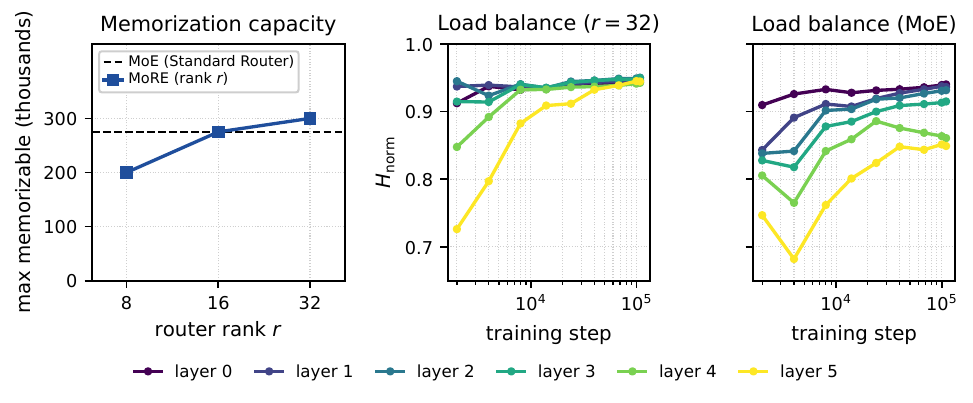}
    \caption{\textbf{Similar results at a larger expert pool.} Phonebook, $h{=}64$,
    $s{=}8$, $M{=}1024$, $k{=}4$, $L{=}6$, panels as in
    Figure~\ref{fig:memorization_in_training}. \textbf{Left:} largest phonebook
    memorized, searched on a $25$K grid. \textbf{Middle, Right:} $H_{\mathrm{norm}}$
    per layer against training step, for $r{=}32$ and for MoE, both trained on the
    $200$K phonebook. This configuration is off the matched-FLOP curve of
    Proposition~\ref{thm:optimal_scaling}: it places several ranks below $\log_2 M$,
    and both arms hold the same experts, so only the router differs.}
    \label{fig:memorization_in_training_M1024}
\end{figure}

\FloatBarrier
\subsection{Adding More Experts help}
\label{appendix:more_experts}

We sweep the size of the expert pool over $M \in \{112, 224, 448, 896, 1792\}$ while holding the rest of the architecture fixed at $h{=}512$, $s{=}56$, $r{=}64$, $k{=}4$ and $L{=}12$. All five configurations train on the same $5$B tokens. Figure~\ref{fig:more_experts} shows that validation perplexity decreases monotonically over the whole range, from $19.10$ at $M{=}112$ to $17.05$ at $M{=}1792$, demonstrating the benefits of using more experts.

\begin{figure}[H]
    \centering
    \includegraphics[width=0.55\linewidth]{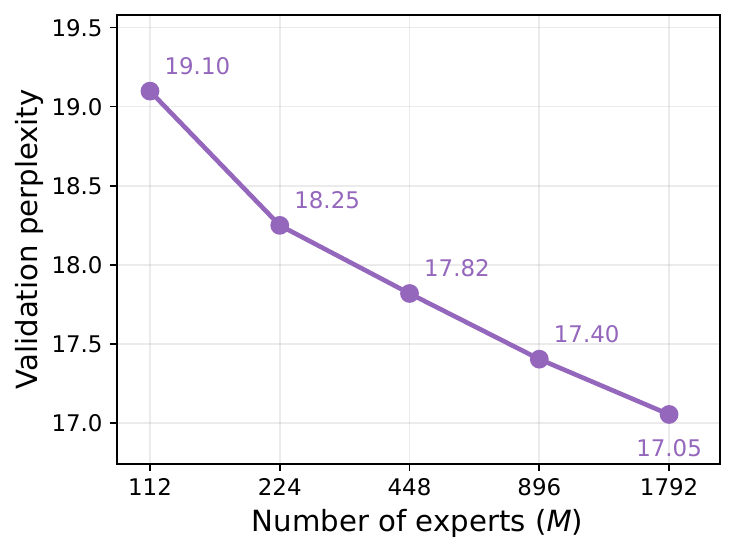}
    \caption{\textbf{Adding more experts on MoRE keeps improving quality.} Validation perplexity of MoRE ($r = 64$) after $5$B training tokens, against the number of experts $M$. The curve is monotone across a $16\times$ increase in the pool size.}
    \label{fig:more_experts}
\end{figure}

\FloatBarrier
\subsection{The perplexity gap grows with the token budget}
\label{appendix:gap_vs_tokens}

In Figure~\ref{fig:gap_vs_tokens_ep_pair}, we find that the perplexity gap between MoRE and MoE grow with token budget, which we think the gap would grow larger if more tokens are trained. This also matches our intuition, since MoRE's benefits of memorizing more information would be more apparent when more tokens are trained. 

\begin{figure}[H]
    \centering
    \includegraphics[width=0.9\linewidth]{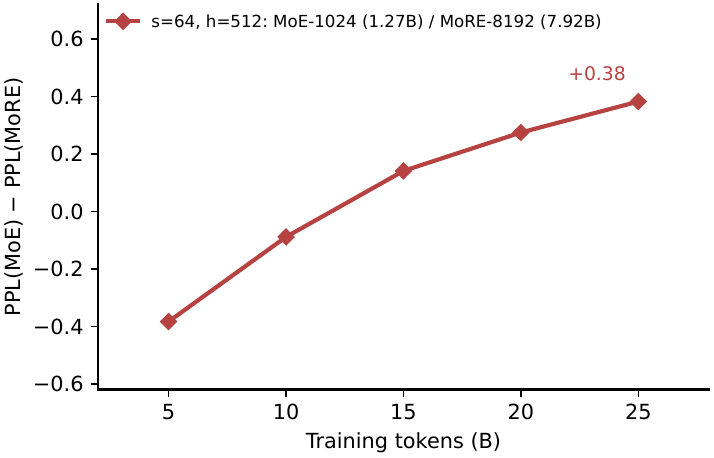}
    \caption{\textbf{The perplexity gap between the two configurations grows with the token budget.} Validation perplexity of MoE-1024 minus that of MoRE-8192 on the $5{,}000$ held-out mixture sequences, against training tokens. Positive values mean MoRE has the lower perplexity. Both models have matched active FLOPs and $1.27$B against $7.92$B total parameters. We plot the first $25$B tokens before the cosine schedule enters its annealing tail.}
    \label{fig:gap_vs_tokens_ep_pair}
\end{figure}

\FloatBarrier

\FloatBarrier
\section{Additional Plots for Efficiency}
\label{appendix:additional_plots_efficiency}

\subsection{Forward pass of a single layer at larger expert widths}
Figure~\ref{fig:prefill_heatmap_appendix_s128_s512} repeats
Figure~\ref{fig:prefill_heatmap} at $s \in \{128, 256, 512\}$ with the same
protocol: inference (prefilling) of the whole layer, $k{=}4$, $r{=}64$,
$N = 262{,}144$ ($B{=}256$, $T{=}1024$), \texttt{bf16}, mean of 30 trials after 10
warm-ups on a single NVIDIA B200. The fused MoRE layer is faster in every cell that
fits in memory. The speedup still grows with $M$ along every row, and it shrinks as
$s$ grows, because wider experts make the expert computation, which both layers
share, a larger part of the layer. At $M = 8{,}192$ it is $1.7$ to $4.2\times$ at
$s{=}128$, $1.6$ to $3.6\times$ at $s{=}256$ and $1.6$ to $3.0\times$ at $s{=}512$.

\begin{figure}[H]
    \centering
    \includegraphics[width=\linewidth]{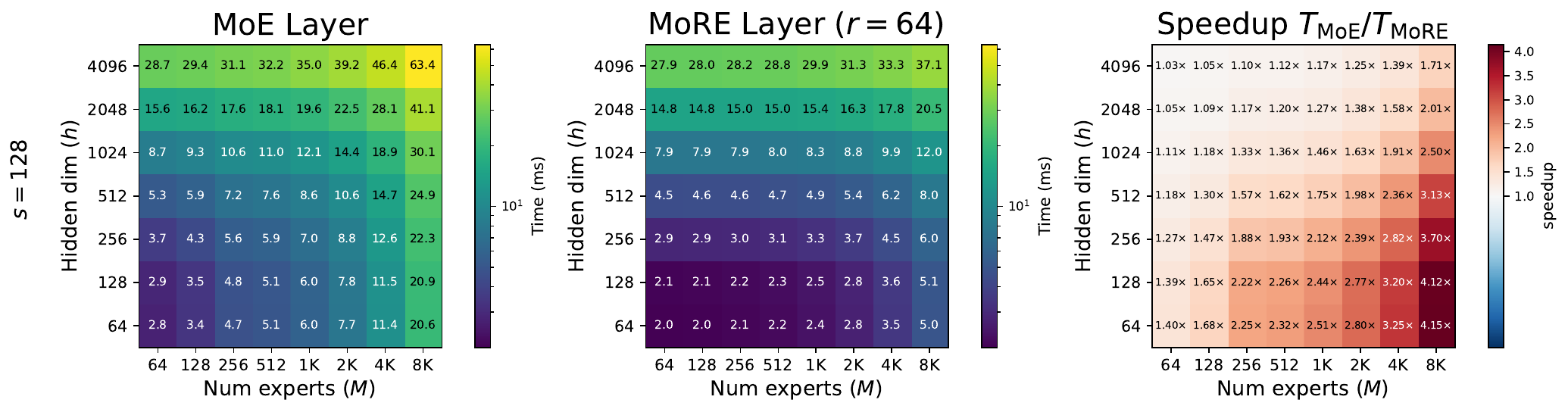}\\[4pt]
    \includegraphics[width=\linewidth]{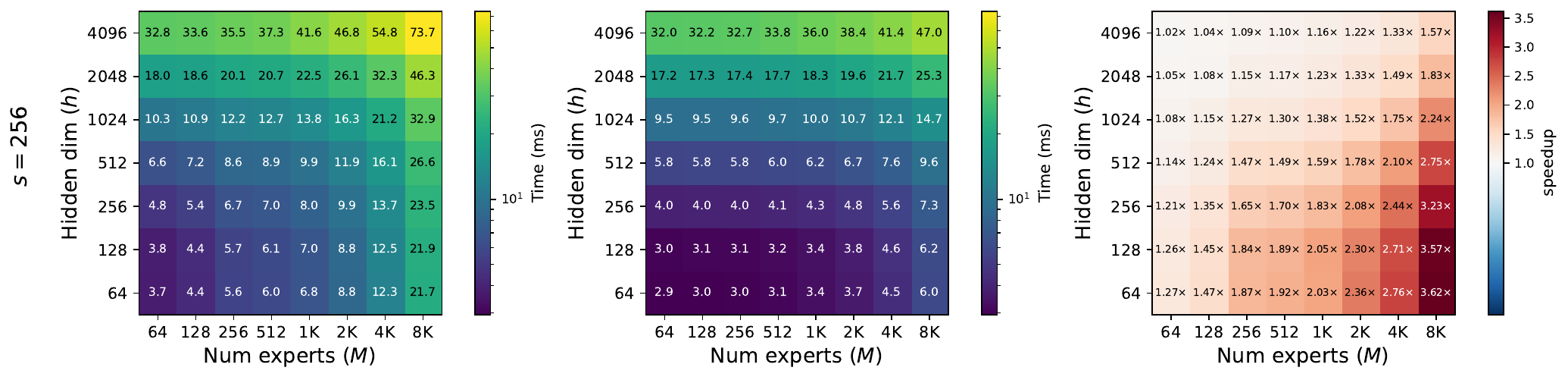}\\[4pt]
    \includegraphics[width=\linewidth]{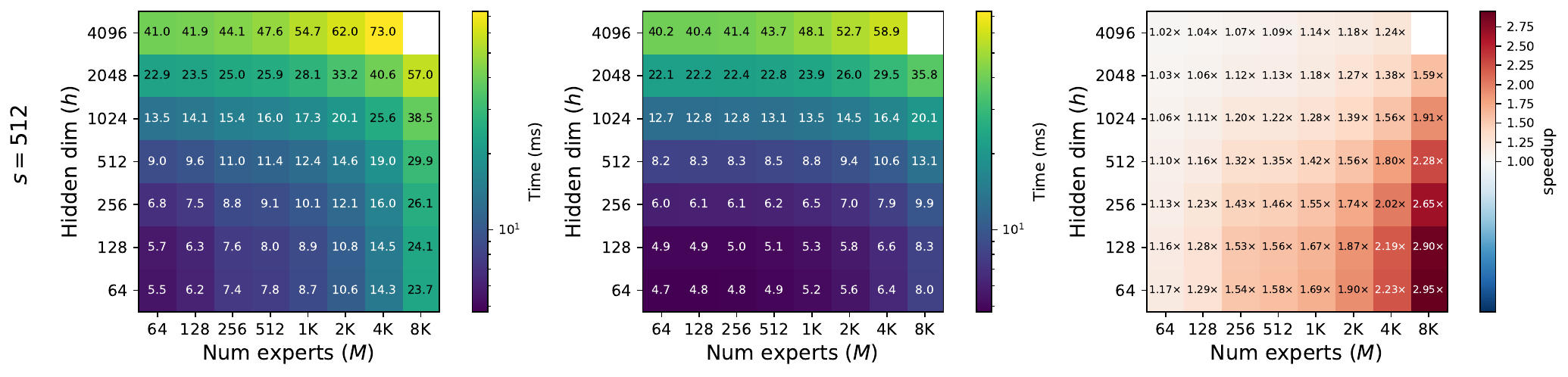}
    \caption{\textbf{The fused MoRE layer stays faster at larger expert widths.}
    Inference (prefilling) of a single layer at $s{=}128$, $256$ and $512$ (rows).
    \textbf{(a)} MoE Layer. \textbf{(b)} MoRE Layer ($r{=}64$).
    \textbf{(c)} speedup $T_{\text{MoE}} / T_{\text{MoRE}}$. The blank cell at
    $s{=}512$ ran out of memory on one B200. $k{=}4$, $r{=}64$, $N = 262{,}144$
    ($B{=}256$, $T{=}1024$), \texttt{bf16}, mean of 30 trials after 10 warm-ups on
    a single NVIDIA B200.}
    \label{fig:prefill_heatmap_appendix_s128_s512}
\end{figure}

\subsection{The same grid on an A100}
\label{appendix:a100_grid}

Figure~\ref{fig:switch_a100} repeats the inference (prefilling) measurement of
Figure~\ref{fig:prefill_heatmap} on a single NVIDIA A100-80GB, over the same
$(h, M)$ grid: hidden dimension
$h \in \{64, 128, 256, 512, 1{,}024, 2{,}048, 4{,}096\}$ against expert count
$M \in \{64, 128, 256, 512, 1{,}024, 2{,}048, 4{,}096, 8{,}192\}$, at $s{=}64$,
$k{=}4$ and $r{=}64$. The fused MoRE layer is faster in every cell, by $1.04$ to
$3.07\times$, and the speedup grows with $M$ along every row, reaching $2.60$ to
$3.07\times$ at $M = 8{,}192$.

\begin{figure}[H]
    \centering
    \includegraphics[width=\linewidth]{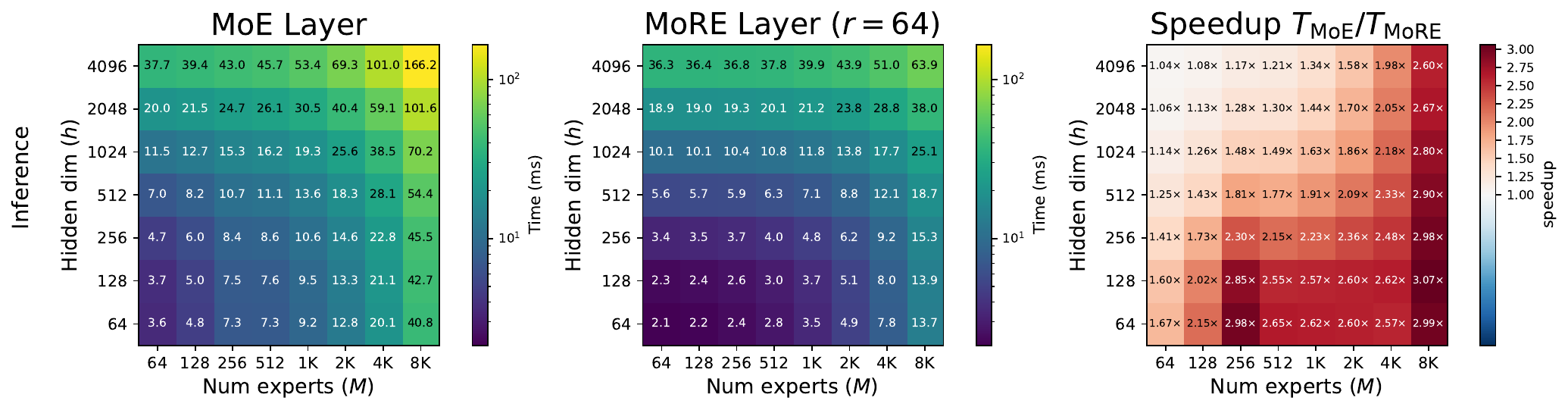}
    \caption{\textbf{The A100 reproduces the B200 result, with the fused MoRE layer
    $2.60$ to $3.07\times$ faster at $M{=}8{,}192$.} Inference (prefilling) of a
    single layer, the two arms at equal $M$. \textbf{(a)} MoE Layer.
    \textbf{(b)} MoRE Layer ($r{=}64$). \textbf{(c)} speedup
    $T_{\text{MoE}} / T_{\text{MoRE}}$. Single NVIDIA A100-80GB, $s{=}64$, $k{=}4$,
    $N = 262{,}144$ ($B{=}256$, $T{=}1024$), \texttt{bf16}, mean of $30$ trials
    after $10$ warm-ups.}
    \label{fig:switch_a100}
\end{figure}

\begin{figure}[H]
    \centering
    \includegraphics[width=0.62\linewidth]{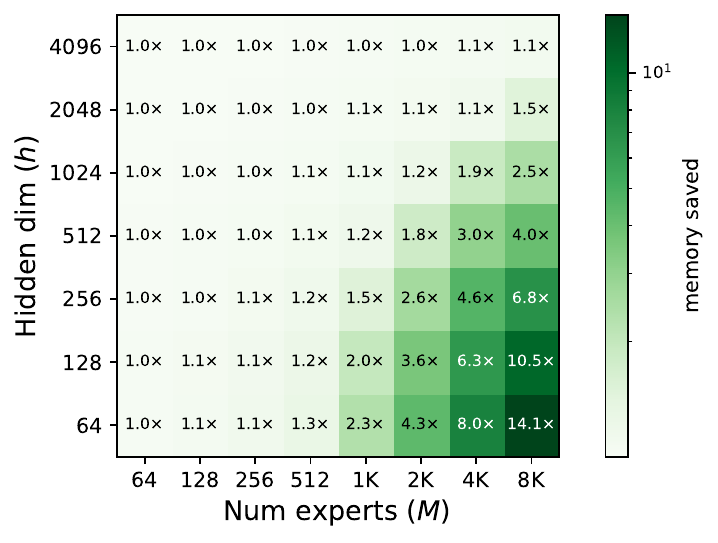}
    \caption{\textbf{Fusing the low-rank router reduces peak memory by up to
    $14\times$ for the whole layer.} Ratio of peak memory over a forward and a
    backward, unfused MoRE against fused MoRE, on the same grid.}
    \label{fig:switch_memory}
\end{figure}

\FloatBarrier
\subsection{Fusing a standard MoE layer}
\label{appendix:fuse_standard_moe}

The kernel of Section~\ref{subsec:fused-kernel} can also be written for a standard router. In Figure~\ref{fig:fuse_standard_moe}, we find the benefit of fusing diminishes as $h$ grows larger. 

\begin{figure}[H]
    \centering
    \includegraphics[width=\linewidth]{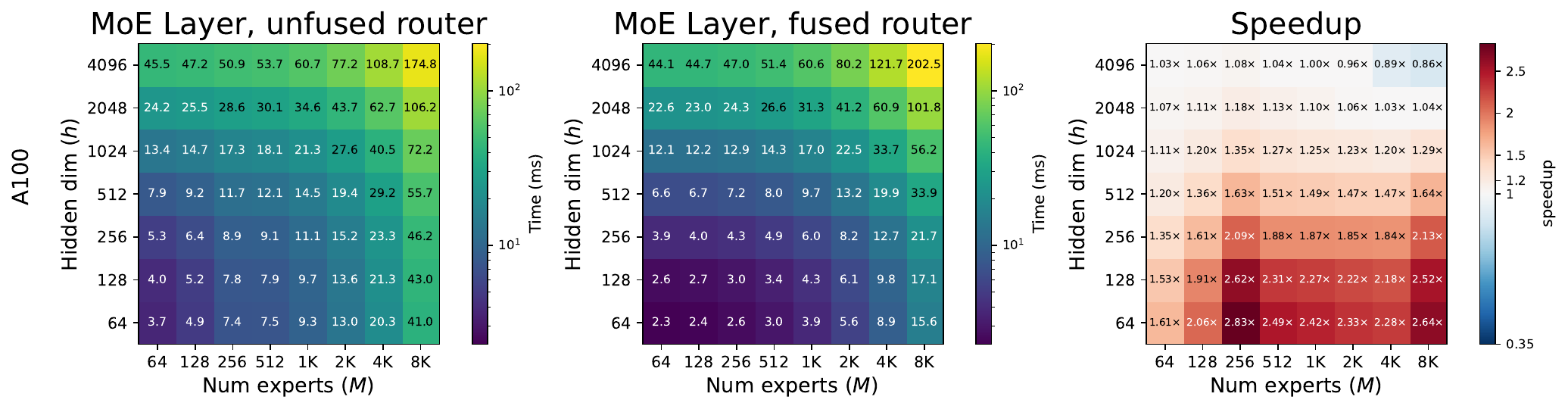}\\[4pt]
    \includegraphics[width=\linewidth]{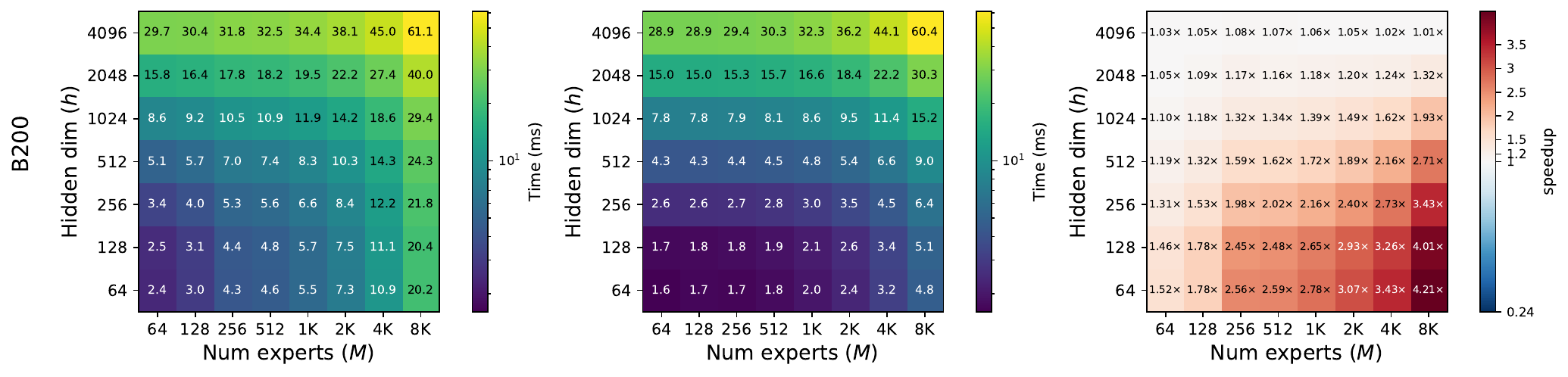}
    \caption{\textbf{On both A100 and B200, fusing Standard MoE does not get much acceleration at
    large $h$.} Inference (prefilling) of a single layer, both arms a standard MoE
    layer at the same $M$. \textbf{(a)} unfused router. \textbf{(b)} fused router.
    \textbf{(c)} speedup. \textbf{(Top)} A100, $0.86$ to $2.83\times$.
    \textbf{(Bottom)} B200, $1.01$ to $4.21\times$. At $h{=}4{,}096$, $M{=}8{,}192$
    fusing takes the layer from $174.8$ to $202.5$\,ms on the A100 and from $61.1$
    to $60.4$\,ms on the B200. $s{=}64$, $k{=}4$, $N = 262{,}144$ ($B{=}256$,
    $T{=}1024$), \texttt{bf16}, mean of $30$ trials after $10$ warm-ups.}
    \label{fig:fuse_standard_moe}
\end{figure}

Figure~\ref{fig:fuse_standard_moe} holds $M$ fixed along each row. Reading one row of
it, $M{=}4{,}096$, against the low-rank router at the same shape gives
Figure~\ref{fig:fuse_saving_h}: fusing takes the same work off both layers, and only
the standard one gives it back as $h$ grows.

\begin{figure}[H]
    \centering
    \includegraphics[width=\linewidth]{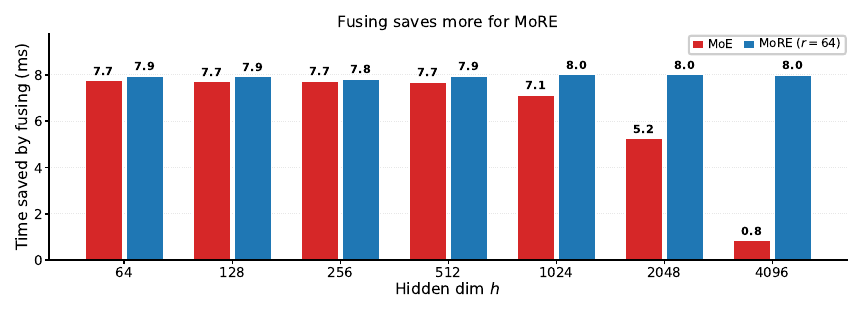}
    \caption{\textbf{Fusing keeps saving time for MoRE as $h$ grows, but stops saving
    time for MoE.} Forward pass of a single layer, the unfused-router time minus the
    fused-router time, against the hidden dimension $h$ at a fixed pool of
    $M{=}4{,}096$ experts. What fusing removes is the round trip of the $(N, M)$
    scores, which depends on $N$ and $M$ but not on $h$. The standard router gives it
    back because its fused kernel reloads a $(B_M, h)$ tile of the router weight for
    every token block. $s{=}64$, $k{=}4$, $r{=}64$, $N = 262{,}144$ ($B{=}256$,
    $T{=}1024$), \texttt{bf16}, mean of $30$ trials after $10$ warm-ups on a single
    NVIDIA B200.}
    \label{fig:fuse_saving_h}
\end{figure}

\FloatBarrier
\subsection{The low-rank router once both layers are fused}
\label{appendix:lowrank_bothfused}

Figure~\ref{fig:lowrank_bothfused} compares MoE and MoRE ($r{=}64$) where both are
fused.

\begin{figure}[H]
    \centering
    \includegraphics[width=\linewidth]{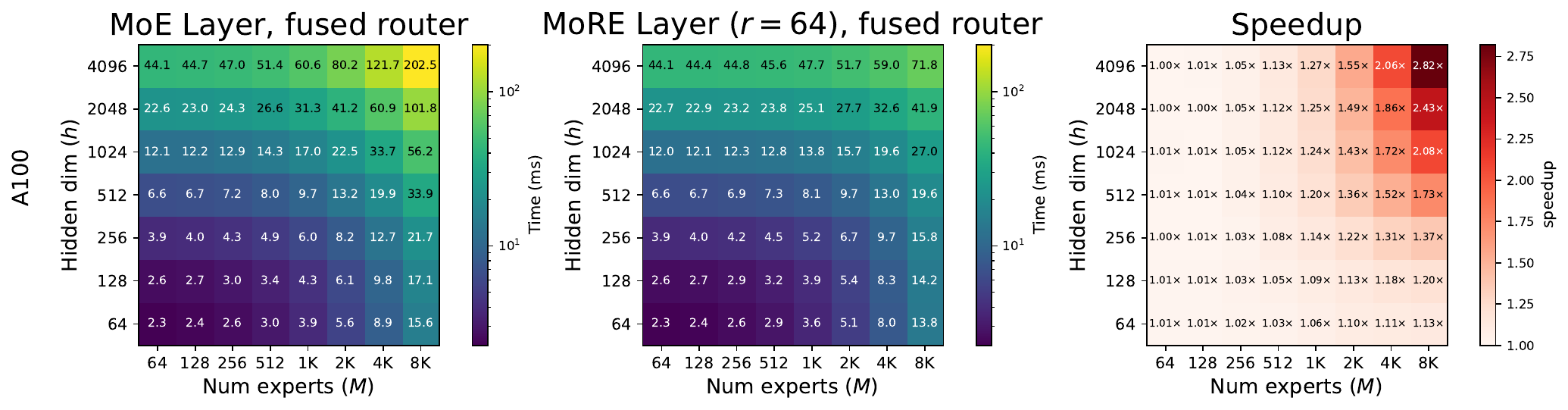}\\[4pt]
    \includegraphics[width=\linewidth]{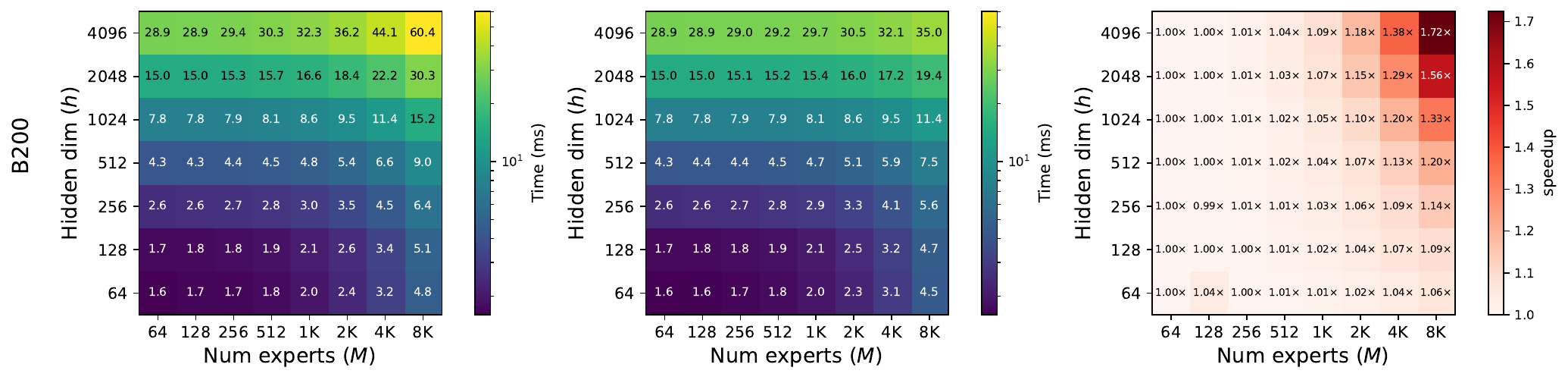}
    \caption{\textbf{Comparing fused MoE and fused MoRE, the low-rank factorization
    gives speedup up to $2.82\times$ on an A100 and $1.72\times$ on a B200.} Inference
    (prefilling) of a single layer, both arms fused, at the same $M$.
    \textbf{(a)} MoE Layer with the fused router. \textbf{(b)} MoRE Layer
    ($r{=}64$) with the fused router. \textbf{(c)} speedup. \textbf{(Top)} A100,
    $1.00$ to $2.82\times$. \textbf{(Bottom)} B200, $0.99$ to $1.72\times$.
    $s{=}64$, $k{=}4$, $N = 262{,}144$ ($B{=}256$, $T{=}1024$), \texttt{bf16}, mean
    of $30$ trials after $10$ warm-ups.}
    \label{fig:lowrank_bothfused}
\end{figure}

\FloatBarrier
\subsection{End-to-End Forward-Pass Breakdown at Large $M$}
\label{appendix:end_to_end_M}

Figure~\ref{fig:end_to_end_M_sweep} measures the end-to-end wall-clock time of a 12-layer transformer with 8 attention heads (FlashAttention-4~\citep{zadouri2026flashattention4algorithmkernelpipelining}), vocabulary size $V{=}50{,}257$, $h{=}512$, $s{=}64$, $k{=}4$, $B{=}64$, $T{=}1024$, sweeping $M \in \{1{,}024, 2{,}048, 4{,}096, 8{,}192\}$.

\begin{figure}[H]
    \centering
    \includegraphics[width=0.49\linewidth]{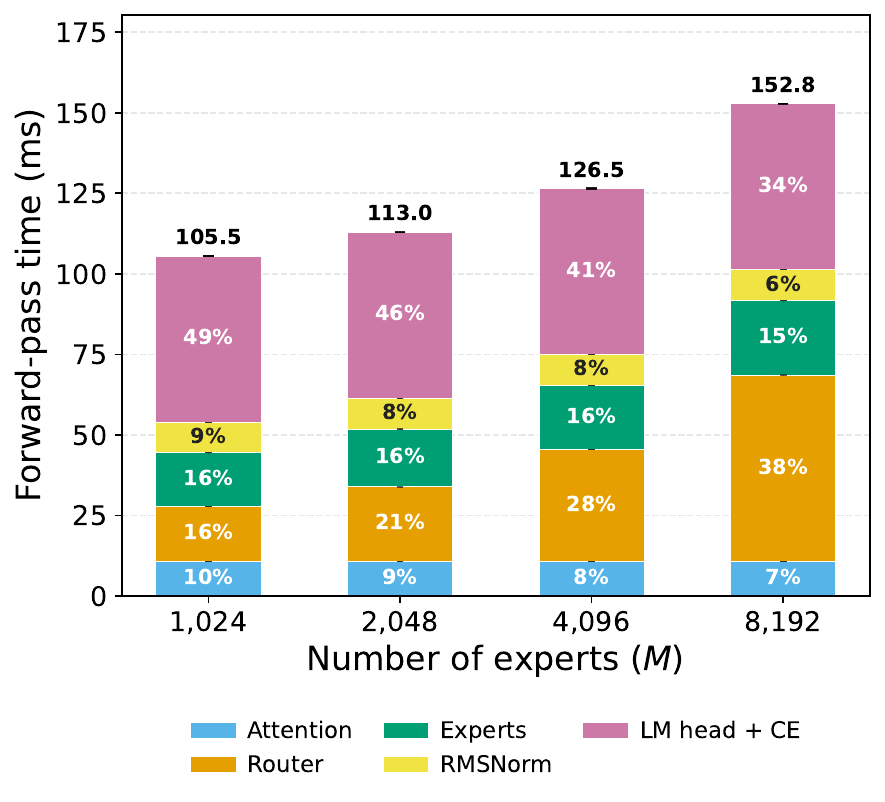}\hfill
    \includegraphics[width=0.49\linewidth]{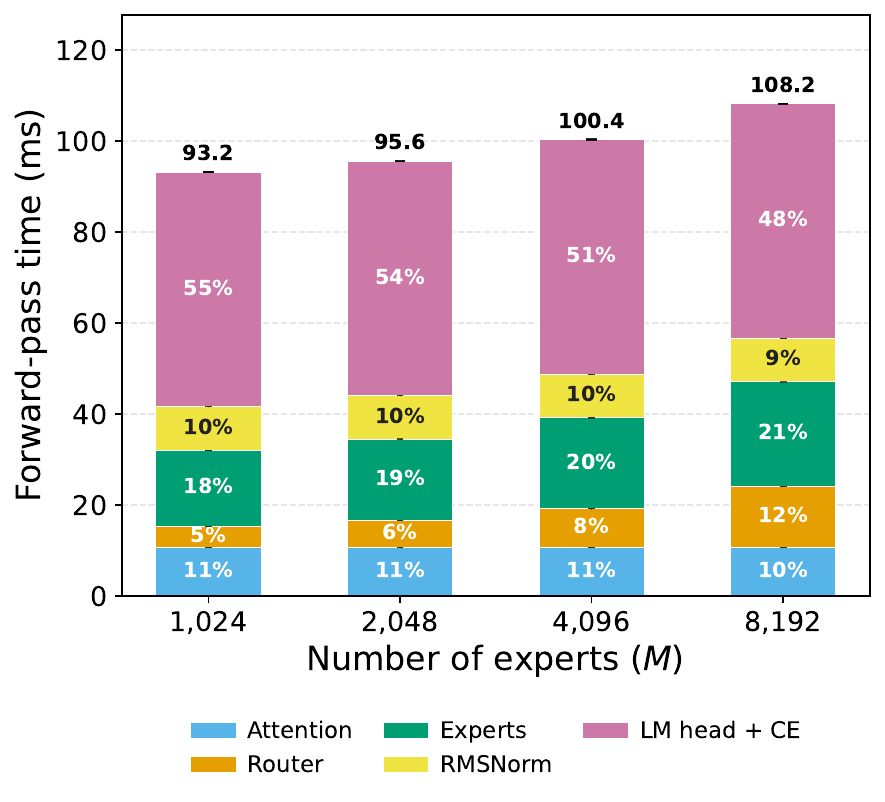}
    \caption{\textbf{End-to-end forward-pass breakdown at $h{=}512$, sweeping $M$.}
    Each bar decomposes the wall-time of a 12-layer transformer forward pass into Attention, Router, Experts (dispatch + grouped MLP), RMSNorm, and the Liger-fused~\citep{hsu2024ligerkernel} LM head + cross-entropy.
    \textbf{(Left)} Standard \textbf{MoE} router (unfused $h{\times}M$ matmul + top-$k$ + softmax + load-balancing-loss as separate kernels): the router slice grows from 16\% of total at $M{=}1{,}024$ to \textbf{38\% at $M{=}8{,}192$}, becoming the largest single component and the dominant driver of the rising end-to-end cost.
    \textbf{(Right)} Fused low-rank \textbf{MoRE} router ($r{=}64$): the router slice stays at 5--12\% across the same sweep, and total wall-time grows much more slowly with $M$. At $M{=}8{,}192$, MoRE is \textbf{30\% faster end-to-end} ($108.2$\,ms vs.\ $152.8$\,ms), with router cost reduced from $57.9$\,ms to $13.4$\,ms.
    Mean of 200 trials with 50 warm-ups, 95\% C.I.\ on a single NVIDIA B200.}
    \label{fig:end_to_end_M_sweep}
\end{figure}
\subsection{Fusion benefit at different batch sizes}
\label{appendix:fuse_batch}

In Figure~\ref{fig:fuse_benefit_batch}, we find that the benefit of fusing grows with the
batch size. The fully fused MoRE router is $1.4\times$ faster than the unfused one at
$B{=}2$ and $6.3\times$ faster at $B{=}256$. At $B{=}2$ both routers take about
$0.1$\,ms. As $B$ grows, most of the unfused router's time goes to the top-$k$ over the
$(N, M)$ score tensor that it stores in HBM, whose size grows with the number of tokens
$N = B \cdot T$, while the fused kernel never stores this tensor.

\begin{figure}[H]
    \centering
    \includegraphics[width=0.55\linewidth]{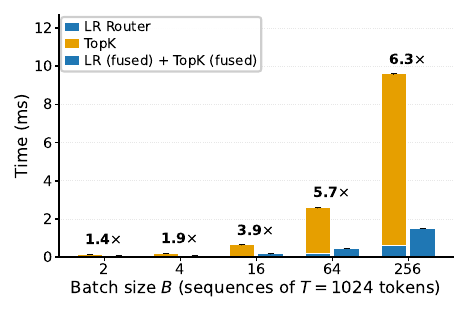}
    \caption{\textbf{The benefit of fusing the MoRE router grows with the batch size.} Forward-pass time of the low-rank router, unfused and fully fused, with the speedup of the fused kernel above each pair. $h{=}2048$, $M{=}4{,}096$, $r{=}16$, $k{=}4$, $T{=}1024$, \texttt{bf16}, mean of $200$ trials after $50$ warm-ups on a single NVIDIA B200.}
    \label{fig:fuse_benefit_batch}
\end{figure}

\FloatBarrier

\section{Scaling Total Parameters}
\label{appendix:scaling}

\begin{proposition}[Optimal scaling under fixed active-FLOPs]
\label{thm:optimal_scaling}
Fix the active-FLOP budget $\tau$, the hidden dimension $h$, and the active expert count $k$. The MoE configuration $(M, k, h, s)$ and the MoRE configuration $(M', k, h, s', r)$ that maximize total MLP parameters under matched $\tau$ are
\[
M^* \;=\; \frac{\tau}{2 C_r h}, \qquad
s^* \;=\; \frac{\tau}{2 C_e k h}, \qquad
T_{\mathrm{MoE}}^* \;=\; \frac{3 \tau^2}{4 C_e C_r\, h k},
\]
\[
M' \;=\; \frac{\tau'}{2 C_r r}, \qquad
s' \;=\; \frac{\tau'}{2 C_e k h}, \qquad
T_{\mathrm{MoRE}}^* \;=\; \frac{3 (\tau')^2}{4 C_e C_r\, r k},
\]
where $\tau' := \tau - C_r h r$. Furthermore, in the regime $M, M' \gg r$ (equivalently $\tau \gg C_r h r$),
\[
\frac{M'}{M^*} \;\to\; \frac{h}{r}, \qquad \frac{T_{\mathrm{MoRE}}^*}{T_{\mathrm{MoE}}^*} \;\to\; \frac{h}{r}.
\]
\end{proposition}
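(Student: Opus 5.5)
This is a constrained maximization of a product under a linear budget, so the plan is to reduce each case to an AM--GM problem in two variables. Treat $M,s$ (respectively $M',s'$) as continuous positive reals. For MoE, the constraint is $C_e k h s + C_r M h = \tau$ and the objective is $T = 3Mhs$. Introduce the two budget shares
\[
a := C_e k h s, \qquad b := C_r M h .
\]
Then $T = 3\,\frac{a}{C_e k h}\cdot\frac{b}{C_r h}\cdot h = \frac{3ab}{C_e C_r k h}$ with $a+b=\tau$. AM--GM gives $ab \le \tau^2/4$, with equality iff $a=b=\tau/2$. This yields $s^* = \tau/(2C_e k h)$ and $M^* = \tau/(2C_r h)$, and substituting gives $T^*_{\mathrm{MoE}} = 3\tau^2/(4C_eC_r hk)$. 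Strict concavity of $\log a + \log b$ on the segment shows the maximizer is unique.

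For MoRE, the constraint $C_e k h s' + C_r(M'+h)r = \tau$ rearranges to $C_e k h s' + C_r M' r = \tau' := \tau - C_r h r$. The fixed projection cost is simply subtracted from the budget, and we assume $\tau' > 0$ so the problem is feasible. The same substitution, now with $a = C_e k h s'$ and $b = C_r M' r$, gives
\[
T = 3M'hs' = \frac{3ab\,h}{C_e k h\, C_r r} = \frac{3ab}{C_eC_r k r}.
\]
AM--GM then gives $M' = \tau'/(2C_r r)$, $s' = \tau'/(2C_e k h)$, and $T^*_{\mathrm{MoRE}} = 3\tau'^2/(4C_eC_r r k)$.

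For the asymptotics, take ratios directly:
\[
\frac{M'}{M^*} = \frac{h}{r}\cdot\frac{\tau'}{\tau} = \frac{h}{r}\Bigl(1 - \frac{C_r h r}{\tau}\Bigr),
\qquad
\frac{T^*_{\mathrm{MoRE}}}{T^*_{\mathrm{MoE}}} = \frac{h}{r}\Bigl(1 - \frac{C_r h r}{\tau}\Bigr)^2 .
\]
Both tend to $h/r$ as $C_r h r/\tau \to 0$. This is the regime $\tau \gg C_r h r$, and it is equivalent to $M^* = \tau/(2C_r h) \gg r$. Since $M' \approx (h/r) M^* \ge M^*$ when $r \le h$, it is also equivalent to $M' \gg r$.

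There is no real obstacle here. The only points needing care are stating the continuous relaxation explicitly, since integrality is ignored, and noting the feasibility condition $\tau > C_r h r$ needed for the MoRE case.
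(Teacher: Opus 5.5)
Your proof is correct and is essentially the paper's argument: the paper writes the router and expert shares as $\alpha\tau$ and $(1-\alpha)\tau$ (with $\tau'$ in place of $\tau$ for MoRE) and maximizes $\alpha(1-\alpha)$ at $\alpha=1/2$, which is exactly your AM--GM step, and it obtains the same exact ratios $\frac{h}{r}(1-C_rhr/\tau)$ and $\frac{h}{r}(1-C_rhr/\tau)^2$. One small caveat on your last remark: $M'\gg r$ by itself does not force $\tau\gg C_rhr$ (for example, $\tau=2C_rhr$ gives $M^*=r$ but $M'=h/2$), so the regime should be characterized through $M^*\gg r$, as you did first and as the paper does.
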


\begin{proof}
We use the same budget-allocation argument in both settings.

\medskip\noindent\textbf{MoE.} Allocate fraction $\alpha \in [0, 1]$ of the budget to the router and $1 - \alpha$ to the experts:
\[
C_r M h \;=\; \alpha \tau, \qquad C_e k h s \;=\; (1 - \alpha) \tau,
\]
so $M = \alpha \tau / (C_r h)$ and $s = (1 - \alpha) \tau / (C_e k h)$. Substituting into $T_{\mathrm{MoE}} = 3 M h s$,
\[
T_{\mathrm{MoE}}(\alpha) \;=\; \frac{3 \tau^2}{C_e C_r\, h k} \cdot \alpha (1 - \alpha),
\]
which is maximized at $\alpha = 1/2$, yielding the stated $M^*, s^*, T_{\mathrm{MoE}}^*$.

\medskip\noindent\textbf{MoRE.} The router cost expands as $C_r (M' + h) r = C_r M' r + C_r h r$, where the second term is a fixed overhead independent of $M'$. Subtracting it leaves a variable budget $\tau' := \tau - C_r h r$, which we split the same way:
\[
C_r M' r \;=\; \alpha \tau', \qquad C_e k h s' \;=\; (1 - \alpha) \tau',
\]
so $M' = \alpha \tau' / (C_r r)$ and $s' = (1 - \alpha) \tau' / (C_e k h)$. Substituting into $T_{\mathrm{MoRE}} = 3 M' h s'$,
\[
T_{\mathrm{MoRE}}(\alpha) \;=\; \frac{3 (\tau')^2}{C_e C_r\, r k} \cdot \alpha (1 - \alpha),
\]
again maximized at $\alpha = 1/2$, yielding the stated $M', s', T_{\mathrm{MoRE}}^*$.

\medskip\noindent\textbf{Limit.} The ratios are
\[
\frac{M'}{M^*} \;=\; \frac{h}{r}\left(1 - \frac{C_r h r}{\tau}\right), \qquad
\frac{T_{\mathrm{MoRE}}^*}{T_{\mathrm{MoE}}^*} \;=\; \frac{h}{r}\left(1 - \frac{C_r h r}{\tau}\right)^2.
\]
Since $\tau = C_e k h s + C_r M h > C_r M h \gg C_r h r$ when $M \gg r$, the correction $C_r h r / \tau \ll 1$, and both ratios converge to $h/r$.
\end{proof}

\section{Theoretical Results}
\label{appendix:theoretical_results}

Section~\ref{sec:theory} states the rank bounds under the normalization $\|u_i\|_2\leq 1$ and $\|R_2x\|_2\leq 1$. The proofs below are given in a slightly more general form with arbitrary norm bounds $B_u$ and $B_z$. In this form, it is convenient to measure margins in normalized units: a normalized margin $\eta$ corresponds to a raw logit margin $\eta B_uB_z$. Setting $B_u=B_z=1$ recovers the statements used in Section~\ref{sec:theory}.

We first prove the lower bound. The argument shows that realizing many distinct top-$k$ assignments forces many separated projected representations to fit inside an $r$-dimensional ball.

\begin{theorem}[Lower bound on required rank, general norm bounds]
\label{thm:lower-general}
Let $\mathcal{F} \subseteq \binom{[M]}{k}$ and let $B_u,B_z>0$. If a rank-$r$ router $R=R_1R_2$ with $\|u_i\|_2\leq B_u$ and $\|R_2x_S\|_2\leq B_z$ realizes every $S\in\mathcal{F}$ at margin at least $\eta B_uB_z$, then
\[
|\mathcal{F}| \leq (1+2/\eta)^r,
\qquad
r \geq \frac{\log|\mathcal{F}|}{\log(1+2/\eta)}.
\]
\end{theorem}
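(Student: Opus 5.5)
The proof is a packing argument in $\mathbb{R}^r$, following the proof idea given after Theorem~\ref{thm:lower}. For each $S\in\mathcal{F}$, let $z_S := R_2x_S\in\mathbb{R}^r$ be the projected representation of the input that realizes $S$. By hypothesis, $\|z_S\|_2\le B_z$. The plan is to show that the points $\{z_S\}_{S\in\mathcal{F}}$ are pairwise separated by at least $\eta B_z$ in Euclidean distance. A volume comparison in the ball of radius $B_z$ then finishes the argument.

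For separation, fix distinct $S,T\in\mathcal{F}$. Both have size $k$, so there exist $i\in S\setminus T$ and $j\in T\setminus S$. Realization of $S$ at margin $\eta B_uB_z$ gives
\[
\langle u_i - u_j, z_S\rangle \ge \eta B_uB_z,
\]
since $i\in S$ and $j\notin S$. Symmetrically, realization of $T$ gives $\langle u_j-u_i, z_T\rangle\ge \eta B_uB_z$. Adding the two inequalities yields
\[
\langle u_i-u_j, z_S - z_T\rangle \ge 2\eta B_uB_z.
\]
Cauchy--Schwarz together with $\|u_i-u_j\|_2\le 2B_u$ then gives $\|z_S-z_T\|_2\ge \eta B_z$. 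In particular, the map $S\mapsto z_S$ is injective, so there are exactly $|\mathcal{F}|$ distinct points.

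For packing, the open balls of radius $\eta B_z/2$ centered at the $z_S$ are pairwise disjoint. Each is contained in the ball of radius $B_z(1+\eta/2)$ centered at the origin. Comparing $r$-dimensional Lebesgue volumes gives
\[
|\mathcal{F}|\,(\eta B_z/2)^r \le (B_z(1+\eta/2))^r,
\]
that is, $|\mathcal{F}|\le(1+2/\eta)^r$. Taking logarithms gives the bound on $r$; since $\eta>0$, we have $\log(1+2/\eta)>0$, so dividing is legitimate.

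No step is a real obstacle. The one point to check is that the margin condition is used symmetrically for both $S$ and $T$. Using only one of them would give separation $\eta B_z/2$ and the weaker bound $(1+4/\eta)^r$. Combining both inequalities is what yields the stated constant.
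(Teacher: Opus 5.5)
Your proof is correct and follows the paper's argument essentially step for step: you pick $i\in S\setminus T$ and $j\in T\setminus S$, sum the two margin inequalities and apply Cauchy--Schwarz to get $\|z_S-z_T\|_2\ge \eta B_z$, then compare the volume of the disjoint radius-$\eta B_z/2$ balls with that of the radius-$B_z(1+\eta/2)$ ball. Your explicit notes on the injectivity of $S\mapsto z_S$ and the positivity of $\log(1+2/\eta)$ are small but welcome additions the paper leaves implicit.
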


\begin{proof}
The proof is a volume packing argument: distinct assignments force their projections $z_S$ to be far apart in $\mathbb{R}^r$, and we count how many such well-separated points fit in a ball.

For each $S \in \mathcal{F}$, choose an input $x_S$ realizing $S$ with margin $\eta B_uB_z$, and write $z_S = R_2 x_S$. Take any two distinct $S, T \in \mathcal{F}$: since $|S| = |T| = k$, there exist $i \in S \setminus T$ and $j \in T \setminus S$. At $x_S$ expert $i$ wins over $j$, while at $x_T$ expert $j$ wins over $i$, so
\[
\langle u_i - u_j,\, z_S\rangle \;\geq\; \eta B_uB_z, \qquad \langle u_j - u_i,\, z_T\rangle \;\geq\; \eta B_uB_z.
\]
Adding the inequalities and applying Cauchy--Schwarz with $\|u_i - u_j\|_2 \leq 2B_u$ gives $\|z_S - z_T\|_2 \geq \eta B_z$.

Hence $\{z_S : S \in \mathcal{F}\}$ are $|\mathcal{F}|$ points pairwise separated by at least $\eta B_z$, all lying in the radius-$B_z$ ball of $\mathbb{R}^r$. The open balls of radius $\eta B_z/2$ around these points are therefore disjoint and contained in the ball of radius $B_z(1+\eta/2)$. Comparing volumes,
\[
|\mathcal{F}| \cdot \left(\frac{\eta B_z}{2}\right)^r \;\leq\; \left(B_z\left(1 + \frac{\eta}{2}\right)\right)^r,
\]
which rearranges to $|\mathcal{F}| \leq (1 + 2/\eta)^r$.
\end{proof}

\begin{proof}[Proof of Theorem~\ref{thm:lower}]
Apply Theorem~\ref{thm:lower-general} with $B_u=B_z=1$.
\end{proof}

We next prove the upper bound. The construction first realizes the desired assignments in dimension $M$ using the standard basis, then applies a Johnson--Lindenstrauss projection~\citep{johnson1984extensions} to reduce the dimension while preserving the inner products that determine the top-$k$ ordering.

\begin{theorem}[Upper bound on required rank, general norm bounds]
\label{thm:upper-general}
Fix $1 \leq k \leq M - 1$, $\mathcal{F} \subseteq \binom{[M]}{k}$, and norm bounds $B_u,B_z>0$. Set $\rho := \sqrt{1/k + 1/(M - k)} > 1/\sqrt{k}$. For any target normalized margin $\eta \in (0, \rho)$, $h \geq r$, and
\[
r \;\geq\; \frac{C \log(M + |\mathcal{F}|)}{(\rho - \eta)^2}
\]
with universal $C > 0$, there is a rank-$r$ router with $\|u_i\|_2 \leq B_u$ realizing every $S \in \mathcal{F}$ at some input with $\|R_2 x_S\|_2 \leq B_z$ and margin at least $\eta B_uB_z$.
\end{theorem}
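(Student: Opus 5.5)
Plan: first work in dimension $M$ with unit-norm objects, then compress with a Johnson--Lindenstrauss map, and finally rescale to the norm bounds $B_u,B_z$. By homogeneity it suffices to treat $B_u=B_z=1$: replacing $u_i$ by $B_u u_i$ and $z$ by $B_z z$ multiplies every score, and hence every margin, by $B_uB_z$.

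In $\mathbb{R}^M$ take $u_i=e_i$. For each $S\in\mathcal{F}$ take
\[
w_S=\frac{1}{\rho}\Bigl(\frac{\mathbf{1}_S}{k}-\frac{\mathbf{1}_{[M]\setminus S}}{M-k}\Bigr).
\]
Its squared norm is $\rho^{-2}(1/k+1/(M-k))=1$, so $\|w_S\|_2=1$. The scores are $\langle e_i,w_S\rangle=1/(\rho k)$ for $i\in S$ and $-1/(\rho(M-k))$ otherwise. The gap is therefore $\rho^{-1}(1/k+1/(M-k))=\rho$. Thus each $S$ is realized in dimension $M$ at margin exactly $\rho$ with unit-norm vectors.

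Next apply the JL lemma in its inner-product form. Let $P\subset\mathbb{R}^M$ be the set of points $\{e_i\}\cup\{w_S\}$, of size $M+|\mathcal{F}|$, together with their negations if convenient. A random Gaussian map $\Pi:\mathbb{R}^M\to\mathbb{R}^r$ with $r\ge C\log(M+|\mathcal{F}|)/\varepsilon^2$ then, with positive probability, satisfies two properties simultaneously for all relevant pairs. First, $|\langle\Pi a,\Pi b\rangle-\langle a,b\rangle|\le\varepsilon/2$ for unit vectors $a,b\in P$; this follows from the polarization identity applied to $\|\Pi(a\pm b)\|^2$. Second, $\|\Pi a\|_2^2\le 1+\varepsilon$. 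Each condition fails with probability $e^{-c\varepsilon^2 r}$, so a union bound over $O((M+|\mathcal{F}|)^2)$ pairs works.

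Set $u_i'=\Pi e_i/\sqrt{1+\varepsilon}$ and $z_S=\Pi w_S/\sqrt{1+\varepsilon}$, so both have norm at most $1$. For $i\in S$ and $j\notin S$, the score gap before normalization is at least $\rho-\varepsilon$. After normalization it is at least $(\rho-\varepsilon)/(1+\varepsilon)$.

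The main obstacle is reconciling this $1/(1+\varepsilon)$ loss with the stated condition $r\gtrsim 1/(\rho-\eta)^2$. I would choose $\varepsilon=c(\rho-\eta)$ for a small constant $c$. Since $\rho\le\sqrt2$, we have $\varepsilon\le c\sqrt2$. Then
\[
\frac{\rho-\varepsilon}{1+\varepsilon}\ge\eta
\iff \rho-\eta\ge\varepsilon(1+\eta),
\]
and this holds because $1+\eta\le 1+\sqrt2$ once $c\le 1/(1+\sqrt2)$. Alternatively, one can avoid normalization losses by rescaling each $\Pi e_i$ to norm exactly $1$, which uses norm preservation. The constant $c$ is absorbed into the universal constant $C$.

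To realize the router, take $R_1$ to have rows $u_i'$. Take $R_2\in\mathbb{R}^{r\times h}$ to be $[I_r\;0]$, which has rank $r$ since $h\ge r$. Set $x_S=(z_S,0)$. Then $R_2x_S=z_S$ has norm at most $1$, and $R_1R_2x_S$ has top-$k$ equal to $S$ with margin at least $\eta$. Theorem~\ref{thm:upper} follows as the special case $\eta=\rho/2\ge 1/(2\sqrt k)$, since then $(\rho-\eta)^2=\rho^2/4\ge 1/(4k)$.
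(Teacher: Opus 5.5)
Your proof is correct and follows the paper's argument: standard-basis experts with the unit vectors $\rho^{-1}(\mathbf{1}_S/k - \mathbf{1}_{[M]\setminus S}/(M-k))$ realizing margin $\rho$, then a Johnson--Lindenstrauss compression with $\varepsilon = c(\rho-\eta)$, then the lift $R_2 = [I_r \mid 0]$. The only difference is that you rescale all compressed vectors uniformly by $1/\sqrt{1+\varepsilon}$ instead of normalizing each one separately. This makes the margin loss explicit as $(\rho-\varepsilon)/(1+\varepsilon)$, where the paper only gives an unspecified $C'\varepsilon$.
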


\begin{proof}
We first solve the problem in the uncompressed regime $r = M$, then use a Johnson--Lindenstrauss projection to compress down to $r = O(\log(M + |\mathcal{F}|) / (\rho - \eta)^2)$ while preserving the inner products that determine routing.

To begin, let $e_1, \ldots, e_M$ be the standard basis of $\mathbb{R}^M$, which serve as expert vectors. For each $S \in \mathcal{F}$, define
\[
z_S \;:=\; \frac{1}{\rho}\!\left(\frac{\mathbf{1}_S}{k} - \frac{\mathbf{1}_{[M] \setminus S}}{M - k}\right).
\]
A direct computation gives
\[
\|z_S\|_2^2 \;=\; \frac{1}{\rho^2}\!\left(k \cdot \tfrac{1}{k^2} + (M - k) \cdot \tfrac{1}{(M-k)^2}\right) \;=\; \frac{1}{\rho^2}\!\left(\tfrac{1}{k} + \tfrac{1}{M-k}\right) \;=\; 1.
\]
For $i \in S$ and $j \notin S$,
\[
\langle e_i, z_S\rangle \;=\; \frac{1}{k\rho}, \qquad \langle e_j, z_S\rangle \;=\; -\frac{1}{(M - k)\rho}, \qquad \langle e_i, z_S\rangle - \langle e_j, z_S\rangle \;=\; \rho.
\]
So in dimension $M$, the trivial choice of expert vectors $e_i$ realizes every $S \in \mathcal{F}$ with margin exactly $\rho$, which is strictly larger than the target $\eta$.

We now compress this construction via Johnson--Lindenstrauss. Let $\mathcal{V} = \{e_1, \ldots, e_M\} \cup \{z_S : S \in \mathcal{F}\}$, so $|\mathcal{V}| \leq M + |\mathcal{F}|$. By the Johnson--Lindenstrauss lemma, for $r = O(\log|\mathcal{V}| / \varepsilon^2)$ there is a linear map $\Phi : \mathbb{R}^M \to \mathbb{R}^r$ preserving squared Euclidean distances on $\mathcal{V} \cup \{0\}$ multiplicatively to within $(1 \pm \varepsilon)$, i.e.\ $(1-\varepsilon)\|u - v\|_2^2 \leq \|\Phi u - \Phi v\|_2^2 \leq (1+\varepsilon)\|u - v\|_2^2$ for all $u, v \in \mathcal{V} \cup \{0\}$. Since every $u \in \mathcal{V}$ has unit norm, $\|u - v\|_2^2 \leq 4$ for all such pairs, so this is equivalent to additive preservation up to $O(\varepsilon)$; absorbing the constant, we assume preservation up to additive error $\varepsilon$ in what follows. The polarization identity
\[
\langle u, v\rangle \;=\; \tfrac{1}{2}\!\left(\|u\|_2^2 + \|v\|_2^2 - \|u - v\|_2^2\right)
\]
then preserves all pairwise inner products of vectors in $\mathcal{V}$ up to additive error $O(\varepsilon)$, and absorbing the constant we may assume preservation up to additive $\varepsilon$. Choose
\[
\varepsilon \;:=\; c_0\,(\rho - \eta)
\]
for a sufficiently small universal constant $c_0 > 0$, which gives $r = O\!\bigl(\log(M + |\mathcal{F}|) / (\rho - \eta)^2\bigr)$.

To extract the rank-$r$ router from $\Phi$, define normalized compressed vectors
\[
\tilde{u}_i \;:=\; \frac{\Phi e_i}{\|\Phi e_i\|_2}, \qquad \tilde{z}_S \;:=\; \frac{\Phi z_S}{\|\Phi z_S\|_2},
\]
and set $u_i = B_u\tilde{u}_i$, $\bar z_S = B_z\tilde{z}_S$. The norm bounds are immediate. Since normalization changes the preserved inner products by at most another constant multiple of $\varepsilon$, for every $i \in S$ and $j \notin S$,
\[
\langle u_i, \bar z_S\rangle - \langle u_j, \bar z_S\rangle
= B_uB_z\!\left(\langle \tilde{u}_i,\tilde{z}_S\rangle-\langle \tilde{u}_j,\tilde{z}_S\rangle\right)
\geq B_uB_z(\rho-C'\varepsilon)
\geq \eta B_uB_z
\]
for a universal constant $C' > 0$, by the choice of $c_0$. Hence every $S \in \mathcal{F}$ is realized with margin at least $\eta B_uB_z$.

Finally, we lift to input space: since $h \geq r$, set $R_2 = [I_r \mid 0] \in \mathbb{R}^{r \times h}$ and choose $x_S$ whose first $r$ coordinates are $\bar z_S$, so that $R_2 x_S = \bar z_S$. This realizes the desired router with the claimed parameters.
\end{proof}

\begin{proof}[Proof of Theorem~\ref{thm:upper}]
Apply Theorem~\ref{thm:upper-general} with $B_u=B_z=1$ and target margin $\eta=c/\sqrt{k}$ for a sufficiently small universal constant $c>0$. Since $\rho>1/\sqrt{k}$, we have $(\rho-\eta)^2=\Omega(1/k)$, and therefore $r=O(k\log(M+|\mathcal{F}|))$. The constructed margin is $\eta=\Omega(1/\sqrt{k})$, as claimed.
\end{proof}

\begin{proof}[Proof of Corollary~\ref{cor:full-expressivity}]
For $\mathcal{F} = \binom{[M]}{k}$, Stirling's bounds $(M/k)^k \leq |\mathcal{F}| \leq (eM/k)^k$ give
\[
\log |\mathcal{F}| \;=\; \Theta(k \log(M/k)).
\]
Since $M \leq |\mathcal{F}|$ for $1 \leq k \leq M-1$, we also have $\log(M+|\mathcal{F}|) = \Theta(\log |\mathcal{F}|) = \Theta(k \log(M/k))$.

\emph{Upper bound.} By Theorem~\ref{thm:upper}, any rank $r \geq C k \log(M+|\mathcal{F}|)$ admits the desired router at margin $\Omega(1/\sqrt{k})$. Substituting $\log(M+|\mathcal{F}|) = \Theta(k \log(M/k))$ yields rank $r = O(k^2 \log(M/k))$ as a sufficient condition. This margin is resolvable under $p = \Omega(\log k)$ precision.

\emph{Lower bound.} At $p$-bit top-$k$ precision, $\eta = \Omega(2^{-p})$ implies $\log(1 + 2/\eta) = O(p)$. Theorem~\ref{thm:lower} then gives
\[
r \;\geq\; \frac{\log |\mathcal{F}|}{\log(1 + 2/\eta)} \;=\; \Omega\!\left(\frac{k \log(M/k)}{p}\right).
\]
\end{proof}

\paragraph{Proposition~\ref{prop:gaussian-load-balancing} (restated).}
Let $x_1,\ldots,x_n \sim \mathcal{N}(0,I_h)$ be independent, and suppose $M=2^r$ ($r = \Theta(\log_2 M)$) with $h\geq r$. For any $1 \leq k \leq M$, there exists a rank-$r$ top-$k$ router with $M$ experts that selects every expert with probability exactly $k/M$ on each example. Consequently, if $N_a$ is the number of examples routed to expert $a$, then writing $L := \log\frac{2M}{\delta}$, for any $\delta \in (0,1)$ the following holds with probability at least $1-\delta$:
\[
\max_{a\in[M]}\left|N_a-\frac{nk}{M}\right|
\leq
\sqrt{\frac{2nkL}{M}} \;+\; \frac{2}{3}L .
\]

\paragraph{Construction.}
The construction follows the routing step in the Gaussian memorization argument of \citet{jelassi2025mixtureparrotsexpertsimprove}. Let $R_2 \in \mathbb{R}^{r \times h}$ select the first $r$ coordinates, so $z_i := R_2 x_i \sim \mathcal{N}(0,I_r)$ independently for $i \in [n]$. Index the $M=2^r$ experts by sign vectors $a \in \{\pm 1\}^r$ and set the corresponding expert vector to $u_a = a/\sqrt{r}$.

At $k=1$ the winner is available in closed form: the score $\langle u_a, z\rangle = \frac{1}{\sqrt r}\sum_{\ell} a_\ell z_\ell$ is maximized coordinate-wise by $a_\ell = \mathrm{sign}(z_\ell)$, so the selected expert is the sign pattern of $z$, which is uniform on $\{\pm1\}^r$. At $k>1$ no such closed form is available, since the identity of the runner-up experts depends on the magnitudes $|z_\ell|$ and not only on their signs. We therefore argue from a symmetry of the construction rather than from an explicit description of the selected set. The symmetry specializes to the sign-pattern argument at $k=1$, so the proof below also covers the top-$1$ case.

\begin{definition}[Score vector]
\label{def:score-vector}
The \emph{score vector} at a projected input $z \in \mathbb{R}^r$ is the $M$-dimensional vector $s(z) \in \mathbb{R}^M$ that collects the routing scores of all $M$ experts, with entries indexed by the experts themselves:
\begin{equation}
\label{eq:score-vector}
s(z)_a \;:=\; \langle u_a, z\rangle \;=\; \frac{1}{\sqrt r}\sum_{\ell=1}^{r} a_\ell z_\ell ,
\qquad a \in \{\pm1\}^r .
\end{equation}
The router selects the $k$ experts whose entries in $s(z)$ are largest.
\end{definition}

The key point is that under this construction the standard Gaussian has no special preference for one expert over another in expectation. Each expert is a vertex of the hypercube $\{\pm1\}^r$, and negating a subset of the input coordinates is the same as negating the corresponding entries of the expert index, because the two negations cancel in the inner product $\langle a, z\rangle$. Negating input coordinates therefore only relabels the experts. We capture this intuition using the reflection action.

\begin{definition}[Reflection action]
\label{def:reflection-action}
Let $\mathcal{G} := \{\pm1\}^r$, a group under coordinatewise multiplication $\odot$. It acts on $\mathbb{R}^r$ by $\sigma \cdot z := \sigma \odot z$ and on the expert set $\{\pm1\}^r$ by $\pi_\sigma(a) := \sigma \odot a$. For a score vector $s \in \mathbb{R}^M$, write $s \circ \pi_\sigma$ for the relabelled vector with entries $(s\circ\pi_\sigma)_a = s_{\sigma\odot a}$.
\end{definition}

The lemma below states the three properties of this action that the proof uses.

\begin{lemma}[Symmetry properties]
\label{lem:sign-symmetry}
For every $\sigma \in \mathcal{G}$:
\begin{enumerate}
\item[\textup{(G1)}] \emph{(Invariance of the input distribution.)} If $z \sim \mathcal{N}(0,I_r)$ then $\sigma \cdot z$ has the same distribution as $z$.
\item[\textup{(G2)}] \emph{(Equivariance of the router.)} $s(\sigma \cdot z) = s(z)\circ\pi_\sigma$ for all $z \in \mathbb{R}^r$. Reflecting the input relabels the score vector by $\pi_\sigma$ and changes nothing else.
\item[\textup{(G3)}] \emph{(Transitivity.)} The map $\sigma \mapsto \pi_\sigma$ is a group action of $\mathcal{G}$ on the expert set which is simply transitive: for all $a,b \in \{\pm1\}^r$ there is exactly one $\sigma \in \mathcal{G}$ with $\pi_\sigma(a) = b$, namely $\sigma = a \odot b$. Moreover $\pi_\sigma^{-1} = \pi_\sigma$.
\end{enumerate}
\end{lemma}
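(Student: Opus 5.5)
The three properties in Lemma~\ref{lem:sign-symmetry} are direct verifications from Definitions~\ref{def:score-vector} and~\ref{def:reflection-action}, so I would check each one by unwinding the definitions. No earlier result is needed.

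For (G1), I would observe that the map $z \mapsto \sigma \odot z$ is the linear map given by the diagonal matrix $D_\sigma = \mathrm{diag}(\sigma_1,\ldots,\sigma_r)$. This matrix is orthogonal, since $D_\sigma D_\sigma^\top = \mathrm{diag}(\sigma_\ell^2) = I_r$. Hence $\sigma\cdot z \sim \mathcal{N}(0, D_\sigma I_r D_\sigma^\top) = \mathcal{N}(0,I_r)$. Equivalently, each coordinate $\sigma_\ell z_\ell$ is a symmetric Gaussian, and the coordinates remain independent.

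For (G2), I would compute entrywise, using $\sigma_\ell^2 = 1$ to move the sign from the input onto the expert label:
\[
s(\sigma\cdot z)_a = \frac{1}{\sqrt r}\sum_{\ell} a_\ell \sigma_\ell z_\ell = \frac{1}{\sqrt r}\sum_\ell (\sigma\odot a)_\ell z_\ell = s(z)_{\sigma\odot a} = (s(z)\circ\pi_\sigma)_a .
\]
This holds for every $a$, so the two vectors agree.

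For (G3), I would argue as follows.
\begin{itemize}
\item \emph{Group.} $\mathcal{G}$ is a group because $\odot$ is associative and commutative, has identity $\mathbf{1}$, and satisfies $\sigma\odot\sigma = \mathbf{1}$.
\item \emph{Action.} $\pi_{\mathbf{1}} = \mathrm{id}$ and $\pi_\sigma\circ\pi_\tau = \pi_{\sigma\odot\tau}$ follow from associativity, so $\sigma\mapsto\pi_\sigma$ is an action.
\item \emph{Involution.} $\pi_\sigma\circ\pi_\sigma = \pi_{\mathbf{1}} = \mathrm{id}$, which gives $\pi_\sigma^{-1}=\pi_\sigma$.
\item \emph{Simple transitivity.} If $\sigma\odot a = b$, multiply both sides by $a$ and use $a\odot a=\mathbf{1}$ to get $\sigma = a\odot b$. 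Conversely, $(a\odot b)\odot a = b$. So the solution exists and is unique.
\end{itemize}

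None of these steps is a real obstacle. The only point needing care is (G2): the relabelling convention $(s\circ\pi_\sigma)_a = s_{\sigma\odot a}$ must match the direction of the computation. Because $\pi_\sigma$ is an involution, the identity holds with either convention, so this cannot go wrong.
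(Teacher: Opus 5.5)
Your proposal is correct and matches the paper's proof essentially line for line. The paper proves (G1) by the invariance of the standard Gaussian under coordinate sign flips (your orthogonal-matrix argument is an equivalent version of its independence-plus-symmetry remark), (G2) by the same entrywise computation, and (G3) by the same algebra in $(\{\pm1\}^r,\odot)$ for the action law, the unique solution $\sigma = a\odot b$, and $\pi_\sigma\circ\pi_\sigma=\mathrm{id}$; one trivial aside is that the step $a_\ell\sigma_\ell z_\ell = (\sigma\odot a)_\ell z_\ell$ in (G2) uses only the definition of $\odot$, not $\sigma_\ell^2=1$.
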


\begin{proof}
(G1) The coordinates of $z$ are independent and symmetric about $0$, so negating any subset of them preserves the joint distribution.

(G2) By \eqref{eq:score-vector}, $s(\sigma\cdot z)_a = \frac{1}{\sqrt r}\sum_\ell a_\ell\sigma_\ell z_\ell = \frac{1}{\sqrt r}\sum_\ell (\sigma\odot a)_\ell z_\ell = s(z)_{\sigma\odot a} = (s(z)\circ\pi_\sigma)_a$.

(G3) Since $\pi_\sigma\pi_\tau = \pi_{\sigma\odot\tau}$ and $\pi_{\mathbf 1} = \mathrm{id}$, the map is a group action. If $\pi_\sigma(a) = b$ then $\sigma = \sigma\odot a\odot a = b\odot a$, which is unique, and $\pi_{b\odot a}(a) = b\odot a\odot a = b$ confirms that it works. Finally $\pi_\sigma(\pi_\sigma(a)) = \sigma\odot\sigma\odot a = a$.
\end{proof}

Top-$k$ selection reads the score vector only through the ordering of its entries, never through which expert happens to carry which entry. Relabelling the experts therefore relabels the selected set in the same way.

\begin{lemma}[Top-$k$ selection is equivariant and almost surely well defined]
\label{lem:topk-equivariant}
Let $s \in \mathbb{R}^M$ be a score vector whose $M$ entries are pairwise distinct, and let $\mathrm{TopK}(s)$ be the set of the $k$ experts whose entries in $s$ are largest. Relabelling does not create equal entries, so $\mathrm{TopK}(s\circ\pi_\sigma)$ is defined for every $\sigma \in \mathcal{G}$, and the selection is \emph{$\mathcal{G}$-equivariant}:
\[
\mathrm{TopK}(s\circ\pi_\sigma) \;=\; \pi_\sigma\bigl(\mathrm{TopK}(s)\bigr) .
\]
Moreover, for $z \sim \mathcal{N}(0,I_r)$ the entries of $s(z)$ are almost surely pairwise distinct, so $\mathrm{TopK}(s(z))$ is almost surely well defined.
\end{lemma}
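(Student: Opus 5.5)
The proof has two independent parts: an order-theoretic equivariance statement that holds for any score vector with distinct entries, and a measure-zero statement about Gaussian scores. I would prove them separately.

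\textbf{Equivariance.} Fix $s$ with pairwise distinct entries and $\sigma \in \mathcal{G}$.

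First, $s\circ\pi_\sigma$ has pairwise distinct entries. The map $a\mapsto \sigma\odot a$ is a bijection of the expert set by (G3) of Lemma~\ref{lem:sign-symmetry}. So the entries of $s\circ\pi_\sigma$ are exactly the entries of $s$, permuted, and no ties are created. In particular $\mathrm{TopK}(s\circ\pi_\sigma)$ is well defined as the set of indices carrying the $k$ largest values.

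To identify this set, I would use a characterization that makes no reference to labels. Since entries are distinct, $a\in\mathrm{TopK}(t)$ if and only if $\#\{b : t_b > t_a\} \le k-1$. Apply this to $t = s\circ\pi_\sigma$:
\[
\#\{b : s_{\sigma\odot b} > s_{\sigma\odot a}\} \;=\; \#\{b' : s_{b'} > s_{\sigma\odot a}\},
\]
by the change of variables $b'=\sigma\odot b$, which is again a bijection. Hence $a \in \mathrm{TopK}(s\circ\pi_\sigma)$ if and only if $\sigma\odot a\in \mathrm{TopK}(s)$. That is, $a \in \pi_\sigma^{-1}(\mathrm{TopK}(s))$. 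Since $\pi_\sigma^{-1}=\pi_\sigma$ by (G3), this gives $\mathrm{TopK}(s\circ\pi_\sigma)=\pi_\sigma(\mathrm{TopK}(s))$.

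\textbf{Almost-sure distinctness.} For distinct experts $a\neq b$, the scores tie exactly when $s(z)_a - s(z)_b = \tfrac{1}{\sqrt r}\langle a-b, z\rangle = 0$. Because $a - b \neq 0$, this is a nonzero linear functional of $z$. Under $z\sim\mathcal N(0,I_r)$, the quantity $\langle a-b,z\rangle$ is a centered Gaussian with variance $\|a-b\|_2^2>0$, so it vanishes with probability $0$. Equivalently, the tie set is a hyperplane, which is Lebesgue-null, and the Gaussian law is absolutely continuous. A union bound over the finitely many pairs, $\binom{M}{2}$ of them, shows that all entries of $s(z)$ are pairwise distinct almost surely. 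Thus $\mathrm{TopK}(s(z))$ is almost surely well defined.

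The only step needing care is the direction of the relabelling: $(s\circ\pi_\sigma)_a = s_{\sigma\odot a}$ naturally produces $\pi_\sigma^{-1}$, and this becomes $\pi_\sigma$ only because of the involution property in (G3). Everything else is routine.
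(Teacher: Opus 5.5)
Your proposal is correct and follows essentially the same route as the paper. Relabelling permutes the order statistics, so $\mathrm{TopK}(s\circ\pi_\sigma)=\pi_\sigma^{-1}(\mathrm{TopK}(s))=\pi_\sigma(\mathrm{TopK}(s))$ by the involution property in (G3); ties occur only on the finitely many null hyperplanes $\{z:\langle a-b,z\rangle=0\}$, and your rank-counting characterization simply makes explicit the paper's one-line ``relabelling relabels the order statistics'' step.
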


\begin{proof}
Relabelling the entries of a vector relabels its order statistics. The entry of $s\circ\pi$ at index $a$ is the entry of $s$ at index $\pi(a)$, so the indices carrying the $k$ largest entries of $s\circ\pi$ are exactly $\pi^{-1}$ applied to those for $s$. With $\pi = \pi_\sigma$ and $\pi_\sigma^{-1} = \pi_\sigma$ from (G3), this is the stated equivariance.

For distinctness, if $a \ne b$ then $u_a - u_b = (a-b)/\sqrt r \ne 0$, so $\{z : s(z)_a = s(z)_b\}$ is a hyperplane through the origin and therefore has Lebesgue measure zero. A union over the finitely many pairs is still null.
\end{proof}

\begin{lemma}[Uniform marginals]
\label{lem:uniform-marginals}
Let $z \sim \mathcal{N}(0,I_r)$ and let $1 \le k \le M$. Then
\[
\Pr\bigl[\,a \in \mathrm{TopK}(s(z))\,\bigr] \;=\; \frac{k}{M}
\qquad\text{for every expert } a \in \{\pm1\}^r .
\]
\end{lemma}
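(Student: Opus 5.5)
The plan is a symmetrization argument built on Lemma~\ref{lem:sign-symmetry} and Lemma~\ref{lem:topk-equivariant}. Fix two experts $a,b \in \{\pm1\}^r$, and let $\sigma := a\odot b$ be the unique group element with $\pi_\sigma(b)=a$ given by (G3). The goal is to show $\Pr[a \in \mathrm{TopK}(s(z))] = \Pr[b \in \mathrm{TopK}(s(z))]$. Once all $M$ probabilities are known to be equal, we sum the indicator identity
\[
\sum_{c\in\{\pm1\}^r}\mathbf{1}[c\in\mathrm{TopK}(s(z))]=k,
\]
which holds almost surely because $\mathrm{TopK}$ is a.s.\ a well-defined $k$-set by Lemma~\ref{lem:topk-equivariant}. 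Taking expectations gives $M\cdot\Pr[a\in\mathrm{TopK}]=k$, so each probability equals $k/M$.

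For the equality of probabilities, work on the full-measure event where the entries of $s(z)$ are pairwise distinct. Lemma~\ref{lem:topk-equivariant} also ensures that the entries of $s(\sigma\cdot z)=s(z)\circ\pi_\sigma$ are distinct on this event. By (G2) and equivariance,
\[
\mathrm{TopK}(s(\sigma\cdot z))=\mathrm{TopK}(s(z)\circ\pi_\sigma)=\pi_\sigma(\mathrm{TopK}(s(z))).
\]
It follows that
\[
a\in\mathrm{TopK}(s(\sigma\cdot z)) \iff \pi_\sigma(a)\in\mathrm{TopK}(s(z)),
\]
using $\pi_\sigma^{-1}=\pi_\sigma$. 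Since $\pi_\sigma(a)=\sigma\odot a=b$, this equivalence reads $a\in\mathrm{TopK}(s(\sigma\cdot z)) \iff b\in\mathrm{TopK}(s(z))$. By (G1), $\sigma\cdot z\overset{d}{=}z$, so $\Pr[a\in\mathrm{TopK}(s(z))]=\Pr[a\in\mathrm{TopK}(s(\sigma\cdot z))]=\Pr[b\in\mathrm{TopK}(s(z))]$.

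The main obstacle is bookkeeping rather than substance. We must get the direction of the relabelling right, i.e.\ whether the relevant map is $\pi_\sigma$ or $\pi_\sigma^{-1}$; this is harmless here because the action is an involution. We must also make sure that the null event of ties is handled before applying equivariance. One way is to define $\mathrm{TopK}$ arbitrarily on the null set. The distinctness-preservation part of Lemma~\ref{lem:topk-equivariant}, together with (G1), shows that the exceptional set is $\sigma$-invariant and null, so it affects neither probability. The edge case $k=M$ is trivial, since every expert is always selected.
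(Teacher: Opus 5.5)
Your proposal is correct and follows the paper's route. Like the paper, you use (G1) to replace $z$ by $\sigma\cdot z$, then (G2), top-$k$ equivariance and $\pi_\sigma^{-1}=\pi_\sigma$ to move the relabelling onto the expert. Transitivity (G3) then makes the marginals constant, and taking expectations of $\sum_c \mathbf{1}[c\in\mathrm{TopK}(s(z))]=k$ fixes the constant at $k/M$. Your explicit handling of the $\sigma$-invariant null tie set is a minor refinement of the paper's step of restricting to the almost-sure event of distinct entries.
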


\begin{proof}
By Lemma~\ref{lem:topk-equivariant} we may work on the almost sure event that the entries of $s(z)$ are pairwise distinct. Fix $\sigma \in \mathcal{G}$ and an expert $a$. Using (G1) in the first step, (G2) in the second, and equivariance in the third,
\begin{align*}
\Pr[\,a \in \mathrm{TopK}(s(z))\,]
&= \Pr[\,a \in \mathrm{TopK}(s(\sigma\cdot z))\,]
= \Pr[\,a \in \mathrm{TopK}(s(z)\circ\pi_\sigma)\,] \\
&= \Pr[\,a \in \pi_\sigma(\mathrm{TopK}(s(z)))\,]
= \Pr[\,\pi_\sigma(a) \in \mathrm{TopK}(s(z))\,],
\end{align*}
where the last step uses $\pi_\sigma^{-1} = \pi_\sigma$ from (G3). By transitivity (G3), for any two experts $a$ and $b$ there is a $\sigma$ with $\pi_\sigma(a) = b$, so the function $a \mapsto \Pr[a \in \mathrm{TopK}(s(z))]$ takes the same value at every expert. Since $|\mathrm{TopK}(s(z))| = k$ always, we have $\sum_{a=1}^{M}\mathbf{1}\{a \in \mathrm{TopK}(s(z))\} = k$, and taking expectations gives $\sum_{a=1}^{M} \Pr[a \in \mathrm{TopK}(s(z))] = k$. A constant function on $M$ points summing to $k$ equals $k/M$ at each point.
\end{proof}

\begin{proof}[Proof of Proposition~\ref{prop:gaussian-load-balancing}]
Lemma~\ref{lem:uniform-marginals} gives selection probability $k/M$ for every expert. Fix an expert $a$. The indicators $\mathbf{1}\{a \text{ selected at } x_i\}$, $i \in [n]$, are independent because the $x_i$ are independent, and each is Bernoulli$(k/M)$. Hence $N_a \sim \mathrm{Binomial}(n,k/M)$ with mean $nk/M$ and variance $v := n\frac kM(1-\frac kM)$.

Bernstein's inequality with increments bounded by $1$ gives
\[
\Pr\!\left[\left|N_a - \frac{nk}{M}\right| \geq t\right] \;\leq\; 2\exp\!\left(-\frac{t^2}{2v+\frac23 t}\right).
\]
Setting the right-hand side to $\delta/M$ and solving $t^2 - \frac23 Lt - 2vL = 0$ yields $t = \frac L3 + \sqrt{\frac{L^2}{9}+2vL} \leq \sqrt{2vL} + \frac23 L$, using $\sqrt{x+y}\leq\sqrt x+\sqrt y$. Bounding $v = n\frac kM(1-\frac kM) \leq \frac{nk}{M}$ and taking a union bound over the $M$ experts gives the claimed bound.

This proposition isolates only the routing step. The dependence on $n$ enters through the number of examples assigned to each expert, approximately $nk/M$. The router rank only needs to distinguish the $M$ expert labels, which requires $r=\log_2 M$ in this construction, independently of $k$. The expert MLPs then memorize their assigned examples, as in the memorization upper bound of \citet{jelassi2025mixtureparrotsexpertsimprove}.
\end{proof}

\end{document}